\theoremstyle{plain}
\newtheorem{theorem}{Theorem}[section]
\newtheorem{lemma}[theorem]{Lemma}
\newtheorem{corollary}[theorem]{Corollary}
\theoremstyle{definition}
\newtheorem{assumption}{Assumption}
\theoremstyle{remark}
\def\1{\bm{1}}
\def\eps{{\epsilon}}
\def\rvx{{\mathbf{x}}}
\def\vzero{{\bm{0}}}
\def\vtheta{{\bm{\theta}}}
\def\vg{{\bm{g}}}
\def\vm{{\bm{m}}}
\def\vu{{\bm{u}}}
\def\vv{{\bm{v}}}
\def\vx{{\bm{x}}}
\DeclareMathAlphabet{\mathsfit}{\encodingdefault}{\sfdefault}{m}{sl}
\SetMathAlphabet{\mathsfit}{bold}{\encodingdefault}{\sfdefault}{bx}{n}
\def\gF{{\mathcal{F}}}
\def\gO{{\mathcal{O}}}
\def\gR{{\mathcal{R}}}
\def\sB{{\mathbb{B}}}
\def\sR{{\mathbb{R}}}
\def\sS{{\mathbb{S}}}
\newcommand{\E}{\mathbb{E}}
\NewDocumentCommand\qty{ t\big t\Big t\bigg t\Bigg g o d() d|| d\|\|}
{ 
	\IfBooleanTF{#1}{\let\ltag\bigl \let\rtag\bigr}{
		\IfBooleanTF{#2}{\let\ltag\Bigl \let\rtag\Bigr}{
			\IfBooleanTF{#3}{\let\ltag\biggl \let\rtag\biggr}{
				\IfBooleanTF{#4}
				{\let\ltag\Biggl \let\rtag\Biggr}
				{\let\ltag\left \let\rtag\right}
			}
		}
	}
	\IfNoValueTF{#5}{
		\IfNoValueTF{#6}{
			\IfNoValueTF{#7}{
				\IfNoValueTF{#8}
				{()}
				{\ltag\lvert{#8}\rtag\rvert}
			}
			{\ltag(#7\rtag) \IfNoValueTF{#8}{}{|#8|}}
		}
		{\ltag[#6\rtag] \IfNoValueTF{#7}{}{(#7)} \IfNoValueTF{#8}{}{|#8|}}
	}
	{\ltag\lbrace#5\rtag\rbrace  \IfNoValueTF{#6}{}{[#6]} \IfNoValueTF{#7}{}{(#7)} \IfNoValueTF{#8}{}{|#8|}}
}
\newcommand{\rom}[1]{\uppercase\expandafter{\romannumeral #1\relax}}
\newcommand{\ours}{{\fontfamily{qpl}\selectfont $\gR$-AdaZO}}
\newcommand{\zosgd}{{\fontfamily{qpl}\selectfont ZO-SGD}}
\newcommand{\zosignsgd}{{\fontfamily{qpl}\selectfont ZO-signSGD}}
\newcommand{\base}{{\fontfamily{qpl}\selectfont ZO-AdaMM}}
\newcommand{\rms}{{\fontfamily{qpl}\selectfont ZO-RMSProp}}
\newcommand{\coloredbox}[2]{%
  \mathchoice
    {\m@th\colorbox{#1!40}{$\displaystyle#2$}}%
    {\m@th\colorbox{#1!40}{$\textstyle#2$}}%
    {\m@th\colorbox{#1!40}{$\scriptstyle#2$}}%
    {\m@th\colorbox{#1!40}{$\scriptscriptstyle#2$}}%
}
\newcommand*\circled[1]{\normalfont \tikz[baseline=(char.base)]{
            \node[shape=circle,draw,inner sep=0.2pt] (char) {#1};}}
\begin{document}

\twocolumn[
\icmltitle{Refining Adaptive Zeroth-Order Optimization at Ease}


\begin{icmlauthorlist}
\icmlauthor{Yao Shu}{hkust}
\icmlauthor{Qixin Zhang}{ntu}
\icmlauthor{Kun He}{hust}
\icmlauthor{Zhongxiang Dai}{cuhk-sz}
\end{icmlauthorlist}

\icmlaffiliation{hkust}{Hong Kong University of Science and Technology (Guangzhou)}
\icmlaffiliation{ntu}{Nanyang Technological University}
\icmlaffiliation{hust}{Huazhong University of Science and Technology}
\icmlaffiliation{cuhk-sz}{The Chinese University of Hong Kong, Shenzhen}

\icmlcorrespondingauthor{Zhongxiang Dai}{daizhongxiang@cuhk.edu.cn}

\icmlkeywords{Machine Learning, ICML}

\vskip 0.3in
]



\printAffiliationsAndNotice{}  

\renewcommand{\textit}[1]{{%
  \fontfamily{ppl}\itshape\selectfont #1%
}}
\renewcommand{\textbf}[1]{{%
  \fontfamily{ppl}\bfseries\selectfont #1%
}}

\begin{abstract}
Recently, zeroth-order (ZO) optimization plays an essential role in scenarios where gradient information is inaccessible or unaffordable, such as black-box systems and resource-constrained environments. While existing adaptive methods such as \base{} have shown promise, they are fundamentally limited by their underutilization of moment information during optimization, usually resulting in underperforming convergence. To overcome these limitations, this paper introduces \textit{Refined Adaptive Zeroth-Order Optimization} (\ours{}). Specifically, we first show the untapped variance reduction effect of first moment estimate on ZO gradient estimation, which improves the accuracy and stability of ZO updates. We then refine the second moment estimate based on these variance-reduced gradient estimates to better capture the geometry of the optimization landscape, enabling a more effective scaling of ZO updates. We present rigorous theoretical analysis to show \textbf{(\rom{1})} \textit{the first analysis} to the variance reduction of first moment estimate in ZO optimization, \textbf{(\rom{2})} \textit{the improved second moment estimates} with a more accurate approximation of its variance-free ideal, \textbf{(\rom{3})} \textit{the first variance-aware convergence framework} for adaptive ZO optimizers, which may be of independent interest, and \textbf{(\rom{4})} \textit{the faster convergence} of \ours{} than existing baselines like \base{}. Our extensive experiments, including synthetic problems, black-box adversarial attack, and memory-efficient fine-tuning of large language models (LLMs), further verify the superior convergence of \ours{}, indicating that \ours{} offers an improved solution for real-world ZO optimization challenges. 
\end{abstract}

\section{Introduction}
Zeroth-order (ZO) optimization has emerged as an indispensable technique at the forefront of machine learning, addressing critical challenges where gradient information is either unavailable or computationally prohibitive. This necessity stems from the prevalence of black-box optimization problems, such as adversarial attacks \citep{RuCBG20, HiranandaniMNFK21}, and resource-constrained environments, like fine-tuning large language models (LLMs) on memory-limited devices \citep{mezo, revisit-mezo}. Consequently, ZO optimization algorithms, which rely solely on function evaluations, have become a crucial alternative to traditional gradient-based methods. Despite the growing body of research in ZO optimization, a significant portion of existing methods adapt stochastic gradient descent (SGD) updates to the ZO setting \citep{0001KCTCA18, 0001LCHA18, zord, fzoos}. This reliance on SGD, however, will lead to performance limitations, especially in complex and non-convex optimization landscapes. The need for more adaptive and versatile update mechanisms is hence evident. However, the exploration of adaptive strategies beyond SGD-based updates remains surprisingly limited. 

While adaptive methods such as \base{} \citep{zo-adamm, nazari2020adaptive} have demonstrated potential in addressing the missing adaptivity in zeroth-order optimization, they are fundamentally limited by their underutilization of moment information, often resulting in suboptimal convergence rates. This limitation in fact arises from their reliance on noisy and high-variance gradient estimates derived solely from function evaluations—a stark contrast to the first-order (FO) methods that leverage direct and more stable gradients. This issue becomes even more pronounced in high-dimensional and complex settings.

To address this critical limitation, we introduce \textit{Refined Adaptive Zeroth-Order Optimization} (\ours{}), a novel approach that effectively capitalizes on moment information through two key innovations. First, \ours{} is the first to analyze the untapped but inherent variance reduction effect of the first moment estimates on the gradient estimates in ZO optimization, leading to more accurate and stable ZO updates. This is accomplished through the integration of historical gradient estimates, which effectively averages out the estimation noise (Sec.~\ref{sec:1st-mo}). Second, \ours{} refines the second moment using these variance-reduced gradient estimates, enabling better adaptation to the underlying geometry of the optimization landscape and facilitating a more effective scaling of ZO updates (Sec.~\ref{sec:2nd-mo}).

Beyond simply presenting \ours{}, we provide a thorough analysis that combines rigorous theoretical guarantees with extensive empirical validation, demonstrating its effectiveness. Specifically, we first provide the assumptions used in our theoretical analysis (Sec.~\ref{sec:assumps}). We then theoretically analyze that incorporating first-moment estimates into ZO optimization significantly reduces the variance, leading to more stable and reliable ZO updates, and theoretically demonstrate that our refined second moment estimates provide a more accurate approximation of its corresponding variance-free ideal (Sec. \ref{sec:bounds}). We further introduce the first variance-aware framework to prove the convergence of adaptive ZO optimization methods, which is not limited to our specific method and can be used to analyze a wider range of similar algorithms, and theoretically prove that \ours{} converges faster than established baseline methods, such as \base{}, demonstrating its efficiency in optimization (Sec. \ref{sec:conv}). Through extensive experiments, including synthetic problems (Sec. \ref{sec:syn}), black-box adversarial attack (Sec. \ref{sec:attack}), and memory-efficient LLM fine-tuning (Sec. \ref{sec:tuning}), we demonstrate that \ours{} consistently outperforms existing methods in practice, exhibiting superior convergence.

To summarize, our contributions in this work include:
\begin{itemize}[topsep=0pt,leftmargin=3mm,itemsep=-2pt]
\item We propose \ours{} to enhance the utilization of moment information in ZO optimization and significantly improve the convergence of adaptive ZO optimizers.
\item We theoretically show \textbf{(\rom{1})} \textit{the first analysis} to the variance reduction of first moment estimates in ZO optimization, \textbf{(\rom{2})} \textit{the effects of our refined second moment estimates}, \textbf{(\rom{3})} \textit{the first variance-aware convergence framework} for adaptive ZO methods, which may be of independent interest, and \textbf{(\rom{4})} \textit{the improved convergence} of \ours{}.
\item We use extensive empirical validation to show the consistent performance gains of \ours{} over baselines.
\end{itemize} 

\section{Related Work}
Recent ZO optimization research focuses on two key areas: ZO gradient estimation and ZO update rules. 

\textbf{ZO Gradient Estimation.} Since ZO optimization only relies on function values, gradient estimation is essential for effective optimization. A common approach is to use finite difference approximations under input perturbations. \citet{Nesterov2017} propose to use Gaussian random noise perturbations, demonstrating theoretical convergence with smooth perturbations. Other methods also propose to use uniform sampling from the unit sphere \citep{bsg} or coordinate-wise perturbations \citep{coordinate}. These methods often have a noisy gradient estimation. To address this, \citet{prgf} introduce prior-guided gradient estimation, which leverages previous estimates to improve the current one, effectively smoothing the estimation noise. Recently, \citet{zord, fzoos} propose using kernel methods to learn a surrogate model of the objective function from historical function values, allowing for more accurate gradient estimation. Another line of work has focused on linear interpolation strategies for more accurate estimates by reusing queries from prior iterations to reduce complexity while maintaining sample quality \citep{relizo}. Note that this paper does not aim to introduce a new gradient estimation approach, but focus on developing advanced update rules that are applicable to all these existing estimation methods.

\textbf{ZO Update Rules.} Building upon the estimated gradients from various ZO estimation methods, ZO optimizers often directly adopt update rules from first-order (FO) optimization. E.g., a large portion of existing ZO optimizers use stochastic gradient descent (SGD) and its variants as their update mechanism \citep{GhadimiL13a, GhadimiLZ16, Nesterov2017, 0001LCHA18, 0001KCTCA18, zo-signsgd, prgf, zord}. While simple to apply, the slow convergence of SGD has motivated few efforts \cite{zo-adamm, nazari2020adaptive, adamu} to explore the use of adaptive methods, such as Adam \citep{adam}, as the ZO update rule. However, these attempts often under-utilize the moment information inherent in adaptive methods when applied to ZO optimization, leading to suboptimal convergence. This paper addresses this critical issue by proposing refined update rules that are specifically designed to better leverage moment information, ultimately leading to more efficient ZO optimization.


\section{Background}
This paper tackles a stochastic zeroth-order (ZO) optimization problem, aiming to minimize the expected value of a function, defined as:

\begin{equation}
\min_{\vtheta \in \sR^d} F(\vtheta) \triangleq \mathbb{E}_{\xi}\left[f(\vtheta; \xi)\right]
\end{equation}
where $\vtheta \in \mathbb{R}^d$ and $f(\vtheta; \xi)$ is a scalar-valued function whose evaluation depends on the parameters $\vtheta$ and a random variable $\xi$ sampled from an underlying distribution. Crucially, we have access only to function evaluations $f(\vtheta; \xi)$ and not its gradient $\nabla_{\vtheta} f(\vtheta; \xi)$. 
Throughout this paper, we adopt the following notational conventions. Vectors are represented in boldface, e.g., $\vtheta$, and scalar constants are denoted by uppercase letters, e.g., $L$. All vector operations are assumed to be element-wise unless explicitly stated otherwise. We denote by $\nabla_i F$ the partial derivative of function $F$ with respect to the $i$-th coordinate.

\textbf{ZO Gradient Estimation.} In ZO optimization, the absence of direct access to gradients, denoted as $\nabla_{\vtheta} f(\vtheta; \xi)$, necessitates the use of gradient estimation techniques that rely solely on function evaluations. A widely used method is to approximate gradients using finite differences. E.g., let random vectors $\{\vu_k\}_{k=1}^K$ be drawn uniformly from the sphere $\sS^{d-1}$ of a unit ball $\sB^d$, a common ZO gradient estimator, which is used throughout this paper, can be formed as:
\begin{equation}\label{eq:fd}
\hat{\nabla} f(\vtheta, \xi) \triangleq \frac{d}{K}\sum_{k=1}^K\frac{f(\vtheta + \mu \vu_k; \xi) - f(\vtheta; \xi)}{\mu} \vu_k
\end{equation}
where $\mu > 0$ is a smoothing parameter, and $K$ is the number of random vectors. While this paper utilizes this specific ZO gradient estimator as its foundation, the proposed method is extensible to other ZO gradient estimators as well.

\textbf{Adaptive ZO Optimization.} ZO optimization methods with a fixed step size typically suffer from slow convergence. To address this, adaptive methods like \base{} \citep{zo-adamm} are used, which incorporate momentum using first moment estimates and per-parameter learning rates using second moment estimates. Specifically, in \base{}, the parameter updates are computed as follows for every iteration $t$ (see also Algo.~\ref{alg:adam}):
\begin{equation}
\begin{aligned}
     &\vm_t \gets \beta_1 \vm_{t-1} + (1 - \beta_1) \vg_t & \text{(First Moment Est.)} \\
    &\vv_t \gets \beta_2 \vv_{t-1} + (1- \beta_2) \vg_t^{\smash{2}} & \text{(Second Moment Est.)}  \\[-3pt]
    &\vtheta_t \gets \vtheta_{t-1} - \eta\frac{\vm_t}{\sqrt{\vv_t + \zeta}} & \text{(ZO Update)}
\end{aligned}
\end{equation}
where $\vg_t = \hat{\nabla} f(\vtheta_{t-1})$ defined in \eqref{eq:fd},  $\beta_1,\beta_2\in(0,1)$ are exponential decay rates for moment estimates, and $\zeta$ is a small constant to prevent dividing by zero.

However, while these adaptive ZO approaches have shown promise, they often underutilize the moment information in the context of ZO optimization: \textbf{\textit{(a)}} They typically treat first moment estimate $\vm_t$ as standard velocity accumulation in FO optimization, failing to consider its underlying variance reduction effect in ZO optimization by accumulating information from previous gradient estimates. \textbf{\textit{(b)}} They fail to apply this variance-reduced gradient estimates to refine the second moment estimate $\vv_t$, causing a less effective scaling of ZO updates.

\begin{figure}[t]
\vspace{-2mm}
\begin{minipage}{0.48\textwidth}
\begin{algorithm}[H]
\DontPrintSemicolon
\caption{\base{}}\label{alg:adam}
\KwIn{$\beta_1, \beta_2, \zeta, \eta, f$}
\textbf{Initialize:} $\vtheta_0, \vm_0, \vv_0$

\For{iteration $t \in [T]$}{

$\vg_t \gets \hat{\nabla} f(\vtheta_{t-1}, \xi_t)$

$\vm_t \gets \beta_1 \vm_{t-1} + (1 - \beta_1) \vg_t$


$\vv_t \gets \beta_2 \vv_{t-1} + (1- \beta_2) \textcolor{blue}{\vg_t^{2}}$


$\vtheta_t \gets \vtheta_{t-1} - \eta\frac{\vm_t}{\sqrt{\vv_t + \zeta}}$
}
\KwOut{$\vtheta_T$}
\end{algorithm}
\end{minipage}
\hfill
\begin{minipage}{0.48\textwidth}
\begin{algorithm}[H]
\DontPrintSemicolon
\caption{\ours{}}\label{alg:ours}
\KwIn{$\beta_1, \beta_2, \zeta, \eta, f$}
\textbf{Initialize:} $\vtheta_0, \vm_0, \vv_0$

\For{iteration $t \in [T]$}{

$\vg_t \gets \hat{\nabla} f(\vtheta_{t-1}, \xi_t)$

 $\vm_t \gets \beta_1 \vm_{t-1} + (1 - \beta_1) \vg_t$


$\vv_t \gets \beta_2 \vv_{t-1} + (1- \beta_2) \textcolor{red}{\vm^{2}_t}$


$\vtheta_t \gets \vtheta_{t-1} - \eta\frac{\vm_t}{\sqrt{\vv_t + \zeta}}$
 
}
\KwOut{$\vtheta_T$}
\end{algorithm}
\end{minipage}
\end{figure}

\section{Refined Adaptive ZO Optimization}
To address the underutilization of momentum information in existing adaptive ZO optimization methods, we introduce \ours{} (\textit{Refined Adaptive Zeroth-Order Optimization}). Specifically, we first analyze the untapped variance reduction effect of first moment estimates on ZO gradient estimation, which is important for accurate and stable ZO updates (Sec.~\ref{sec:1st-mo}). We then apply these variance-reduced estimates to construct a refined second moment, enabling more effective scaling of ZO updates (Sec.~\ref{sec:2nd-mo}).

\subsection{Variance Reduction in First Moment Estimates}\label{sec:1st-mo}
First moment estimation, while conventionally used for convergence speedup, inherently serve as a variance reduction mechanism for noisy gradients. To show this, consider the following standard first moment estimate with $\beta_1 \in (0,1)$:
\begin{equation}\label{eq:1st-mo}
\vm_t \gets \beta_1\vm_{t-1} + (1-\beta_1)\vg_{t}
\end{equation}
where $\vm_t$ is the estimated first moment at iteration $t$ and $\vg_t$ is the gradient estimate at $\vtheta_{t-1}$ via \eqref{eq:fd}.  Intuitively, this update averages the current noisy gradient estimate with past, correlated estimates.
This averaging process effectively smooths out noise in gradient estimates, thereby reducing variance. For example, averaging two independent noisy gradient estimates (ie, $\vm_{t-1}$ and $\vg_t$) of variance $\sigma^{\smash{2}}$ results in a variance of $[\beta_1^{\smash{2}} + (1-\beta_1)^{\smash{2}}]\sigma^{\smash{2}}$, which is less than $\sigma^{\smash{2}}$. While current and past gradient estimates are not fully independent in practice, their local correlation still enables variance reduction through this averaging, which we will show theoretically in Sec.~\ref{sec:theory}.

While this variance reduction effect has been proven in FO optimization \citep{sgdm}, it is significantly more crucial in ZO optimization. Unlike FO methods that compute gradients directly with relatively low variance, ZO optimization approximates gradients using function evaluations (as in \eqref{eq:fd}), resulting in inherently noisier estimates. This disparity underscores the critical importance of the variance reduction effect of first moment estimates in ZO optimization, a connection we are the first to identify. We further provide theoretical support for this in Sec.~\ref{sec:theory}.


\subsection{Refinement to Second Moment Estimates}\label{sec:2nd-mo}

The second key innovation of \ours{} lies in its refined second moment estimate, which is crucial for the adaptivity in ZO optimization. Existing adaptive ZO methods \citep{zo-adamm, nazari2020adaptive} update the second moment estimate directly using the squared noisy gradient estimates:
\begin{equation}\label{eq:xcxv}
    \vv_t = \beta_2\vv_{t-1} + (1 - \beta_2)\textcolor{blue}{\vg_{t}^2} \ .
\end{equation}
However, this approach can be suboptimal in the ZO setting owing to the inherent high variance of the gradient estimates in \eqref{eq:fd}, which could lead to unstable and unreliable second moment estimates. We thus propose to address this issue by simply leveraging the variance-reduced gradient information from the first moment. That is, we update the second moment estimate as below, which interestingly shares similar form with RMSProp \citep{rmsprop}.
\begin{equation}\label{eq:2nd-mo}
    \vv_t = \beta_2\vv_{t-1} + (1 - \beta_2)\textcolor{red}{\vm_t^2} \ .
\end{equation}

The first moment estimate, as revealed in Sec.~\ref{sec:1st-mo}, acts as a variance reduction mechanism by averaging historical gradient information. Using the squared first moment estimate then probably provides a smoothed and more stable second moment estimate. This refinement therefore may enable a more accurate representation for the underlying geometry of the optimization landscape, resulting in more effective scaling of ZO updates and thus accelerated convergence. Specifically, consider a scenario where $\E[\vm_t] = \E[\vg_t]$ but $\vm_t$ has significantly lower variance than $\vg_t$ due to the smoothing effect, given the same $\vv_{t-1}$, we can see that the refined $\vv_t$ in \eqref{eq:2nd-mo} achieves a smaller expected value compared to the standard one in \eqref{eq:xcxv}. Hence, the update step (see \eqref{eq:update}) using this refined $\vv_t$ in \eqref{eq:2nd-mo} is likely to be larger, allowing the algorithm to move faster towards the optimum. This claim will be rigorously established in Sec.~\ref{sec:theory}.

\subsection{Final Algorithm}

Given the first and second moment estimates in \eqref{eq:1st-mo} and \eqref{eq:2nd-mo} respectively, \ours{} updates parameters by:
\begin{equation}\label{eq:update}
    \boldsymbol{\theta}_t = \boldsymbol{\theta}_{t-1} - \eta \frac{\vm_t}{\sqrt{\vv_t + \zeta}}
\end{equation}
where $\eta$ is the base learning rate and $\zeta$ is a small constant for numerical stability. This update rule adaptively scales the effective learning rate based on the local geometry while incorporating the variance-reduced gradient estimates. The complete \ours{} algorithm is detailed in our Algo.~\ref{alg:ours}.

\textbf{Computational and Memory Complexity.} \ours{} incurs the same computational cost of $\gO(Kd)$ per iteration for moment estimates and ZO updates (excluding function evaluations), and the same memory footprint of $\gO(d)$ as \base{} for moment estimates, where $K$ is the number of function evaluations and $d$ is the dimension of parameter $\vtheta$. 

\textbf{Ease of Implementation.} A key advantage of \ours{} is its simple implementation. The core change involves updating the second moment estimate using the squared first moment estimate, a one-line change for existing adaptive ZO optimizers. This minimal change enables easy integration and fast deployment, while improving convergence.

\section{Theoretical Analysis}\label{sec:theory}

This section presents a theoretical foundation for the efficacy of \ours{}. We structure our analysis as follows: First, we introduce the required assumptions and preliminaries (Sec.\ref{sec:assumps}). Second, we prove the variance reduction in first moment estimate and the improvement of our refined second moment in \ours{} (Sec. \ref{sec:bounds}). Finally, we present the first variance-aware convergence framework for adaptive ZO methods and demonstrate the improved convergence of \ours{} over other baselines (Sec.~\ref{sec:conv}).  To ease our proof, we  primarily provide the theoretical analysis for \eqref{eq:fd} with $\vu \sim \text{Unif}(\sS^{d-1})$.

\subsection{Assumptions and Preliminaries}\label{sec:assumps}
Our theoretical framework is built upon two fundamental assumptions concerning the non-convex function $F$. We impose a bounded function value as well as a coordinate-wise Lipschitz smoothness (Assump. \ref{assump:1}), with a bounded variance of function values (Assump. \ref{assump:2}). Of note, coordinate-wise Lipschitz smoothness is commonly used in the analysis of FO adaptive gradient methods, e.g., \citep{zhang2024convergence, wang2024closing}. 

\begin{assumption}\label{assump:1}
\textit{\fontfamily{ppl}\selectfont
$\forall \vtheta,\vtheta' \in \mathbb{R}^d$ and $\forall i \in [d]$,
\begin{align}
\big|f(\vtheta, \xi)\big| &\leq C \ , \\[1pt]
\left|\nabla_i F(\vtheta) - \nabla_i F(\vtheta')\right| &\leq L\left\|\vtheta - \vtheta'\right\| \ . \label{eq:smoothness}
\end{align}}
\end{assumption} 
\begin{assumption}\label{assump:2}
\textit{\fontfamily{ppl}\selectfont
$\forall{\vtheta} \in \mathbb{R}^d$,
\begin{align}
\E_{\xi}\left[\big|f(\vtheta, \xi) - F(\vtheta)\big|^2\right] &\leq \sigma^2 \ . \label{eq:variance}
\end{align}}
\end{assumption}

Directly establishing the convergence of \ours{} through the function $F$ presents a primary challenge for adaptive ZO methods, due to the bias (i.e., $\E\left[\hat{\nabla}f(\vtheta,\xi)\right] \neq \nabla F(\vtheta)$) arising from the gradient estimation in \eqref{eq:fd}. Thus, we propose to prove the convergence of \ours{} with respect to the randomized smoothing function $F_{\mu}$ \citep{DuchiBW12} defined in \eqref{eq:fu} where $\vu$ is a random vector drawn uniformly from a unit ball $\sB^{d}$ and $\mu > 0$ is a smoothing parameter. Of note,  ZO gradient \eqref{eq:fd} with $\vu \sim \text{Unif}(\sS^{d-1})$ in fact leads to an unbiased gradient estimation for this smoothing function, which we will show in Lemma~\ref{le:connection} below.
\begin{equation}\label{eq:fu}
F_{\mu}(\vtheta) \triangleq \E_{\vu \sim \sB^d}\left[F(\vtheta + \mu \vu)\right] \ .
\end{equation}

We introduce the following Lemma \ref{le:connection} (proof in Appx.~\ref{proof:connection}) to justify why $F_{\mu}$, instead of $F$, servers as a better choice for the convergence framework of adaptive ZO methods.
\begin{lemma}\label{le:connection}
\textit{\fontfamily{ppl}\selectfont
Given \eqref{eq:fd} with $\vu \sim \text{Unif}(\sS^{d-1})$, let Assump. \ref{assump:1} hold, $\forall \vtheta \in \mathbb{R}^d$ and $\forall i \in [d]$,
\begin{align}
\mathbb{E}\left[\hat{\nabla} f(\vtheta, \xi)\right] &= \nabla F_{\mu}(\vtheta) \ , \label{eq:skvj} \\
\mathbb{E}\left[\left\|\nabla F(\vtheta) - \nabla F_{\mu}(\vtheta)\right\|\right] &\leq \mu L\sqrt{d} \ . \label{eq:vine}
\end{align}}
\end{lemma}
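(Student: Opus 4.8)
The plan is to prove the two claims separately, reducing both to properties of the ball-smoothing operator $F_\mu$. For the unbiasedness in \eqref{eq:skvj}, I would first take the expectation over $\xi$ inside $\hat{\nabla}f(\vtheta,\xi)$: since each $\vu_k$ is independent of $\xi$ and $\E_\xi[f(\vtheta;\xi)]=F(\vtheta)$, this replaces $f$ by $F$, and because the $\vu_k$ are i.i.d.\ uniform on $\sS^{d-1}$ the average over $k$ collapses to a single term, giving $\E[\hat{\nabla}f(\vtheta,\xi)]=\frac{d}{\mu}\E_{\vu\sim\sS^{d-1}}[(F(\vtheta+\mu\vu)-F(\vtheta))\vu]$. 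The symmetry of the uniform sphere then kills the $F(\vtheta)\vu$ term, since $\E_{\vu\sim\sS^{d-1}}[\vu]=\vzero$, leaving $\frac{d}{\mu}\E_{\vu\sim\sS^{d-1}}[F(\vtheta+\mu\vu)\vu]$.

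The crux is to identify this quantity with $\nabla F_\mu(\vtheta)$. I would write $F_\mu(\vtheta)=\frac{1}{\mathrm{Vol}(\sB^d)}\int_{\sB^d}F(\vtheta+\mu\vv)\,d\vv$, differentiate under the integral sign to obtain $\nabla F_\mu(\vtheta)=\frac{1}{\mathrm{Vol}(\sB^d)}\int_{\sB^d}\nabla F(\vtheta+\mu\vv)\,d\vv$, and use the chain rule $\nabla F(\vtheta+\mu\vv)=\frac{1}{\mu}\nabla_\vv F(\vtheta+\mu\vv)$. Applying the divergence theorem coordinate-wise then turns the volume integral of $\nabla_\vv[F(\vtheta+\mu\vv)]$ over $\sB^d$ into the surface integral $\int_{\sS^{d-1}}F(\vtheta+\mu\vv)\,\vv\,dS$, whose outward unit normal on the unit sphere is $\vv$ itself. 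Converting the surface integral back into an expectation and using the identity $(\text{surface area of }\sS^{d-1})/\mathrm{Vol}(\sB^d)=d$ recovers exactly $\frac{d}{\mu}\E_{\vu\sim\sS^{d-1}}[F(\vtheta+\mu\vu)\vu]$, matching the previous display and closing \eqref{eq:skvj}. This Stokes/divergence-theorem step, and in particular getting the dimensional constant $d$ right, is the main technical obstacle; everything else is bookkeeping.

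For the bias bound in \eqref{eq:vine}, I would reuse the representation $\nabla F_\mu(\vtheta)=\E_{\vv\sim\sB^d}[\nabla F(\vtheta+\mu\vv)]$ derived above, so that $\nabla F(\vtheta)-\nabla F_\mu(\vtheta)=\E_{\vv\sim\sB^d}[\nabla F(\vtheta)-\nabla F(\vtheta+\mu\vv)]$. Applying Jensen's inequality to move the norm inside the expectation reduces the task to bounding $\|\nabla F(\vtheta)-\nabla F(\vtheta+\mu\vv)\|$ pointwise. The coordinate-wise Lipschitz smoothness \eqref{eq:smoothness} gives $|\nabla_i F(\vtheta)-\nabla_i F(\vtheta+\mu\vv)|\le L\|\mu\vv\|=\mu L\|\vv\|$ for each $i\in[d]$; squaring, summing over the $d$ coordinates, and using $\|\vv\|\le 1$ on the unit ball yields the factor $\sqrt{d}$ and hence $\|\nabla F(\vtheta)-\nabla F(\vtheta+\mu\vv)\|\le\mu L\sqrt{d}$. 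Since this bound is deterministic, the outer expectation is immediate and \eqref{eq:vine} follows. I expect this second part to be routine once the integral representation of $\nabla F_\mu$ from the first part is in hand.
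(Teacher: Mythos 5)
Your proposal is correct and follows essentially the same route as the paper's proof: for \eqref{eq:skvj} the paper likewise takes the expectation over $\xi$, kills the $F(\vtheta)\vu_k$ term by sphere symmetry, and invokes the identity $\E_{\vu \sim \sS^{d-1}}\left[F(\vtheta + \mu \vu)\vu\right] = \tfrac{\mu}{d}\nabla F_{\mu}(\vtheta)$, and for \eqref{eq:vine} it uses the same Leibniz-rule representation, Jensen's inequality, coordinate-wise Lipschitzness, and $\|\vu\| \leq 1$. The only difference is that the paper cites Lemma 1 of Flaxman et al.\ (2004) for that key identity, whereas you re-derive it via the divergence theorem (with the constant $d = \mathrm{Area}(\sS^{d-1})/\mathrm{Vol}(\sB^d)$ handled correctly), which is a valid, self-contained unfolding of the same argument.
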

\textbf{Remark.} Lemma \ref{le:connection} establishes that \textbf{\textit{(a)}} $\nabla F_{\mu}$ is the expectation of the gradient estimate in \eqref{eq:fd}, thereby overcoming the bias challenge mentioned above, and \textbf{\textit{(b)}} the discrepancy between $\nabla F_{\mu}$ and $\nabla F$ is bounded above by $\mathcal{O}(\mu)$, implying that the convergence of \ours{} with respect to $\nabla F$ can be easily derived after obtaining the results with respect to $\nabla F_{\mu}$. In light of these, $F_{\mu}$ will be a good choice for the convergence framework of adaptive ZO methods.

In addition, we provide the following Lemma \ref{le:lips} (proof in Appx.~\ref{proof:lips}) to ease our proof.
\begin{lemma}\label{le:lips}
\textit{\fontfamily{ppl}\selectfont
Given \eqref{eq:fd} with $\vu \sim \text{Unif}(\sS^{d-1})$, let Assump. \ref{assump:1} hold, $\forall \vtheta,\vtheta' \in \mathbb{R}^d$ and $\forall i \in [d]$,
\begin{align}
\left|\nabla_i F_{\mu}(\vtheta) - \nabla_i F_{\mu}(\vtheta')\right| &\leq L\left\|\vtheta - \vtheta'\right\| \ , \label{eq:smoothness-fu} \\
\E\left[\left|\hat{\nabla}_i f(\vtheta, \xi) - \nabla_i F_{\mu}(\vtheta)\right|^2\right] &\leq \frac{8(\sigma^2 + C^2)d}{K\mu^2} \ . \label{eq:var-fu}
\end{align}}
\end{lemma}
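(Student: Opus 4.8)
The plan is to prove the two inequalities separately. For the smoothness bound \eqref{eq:smoothness-fu}, I would start from the smoothing definition \eqref{eq:fu} and differentiate under the expectation — justified since $F$ is $C^1$ (it has a Lipschitz gradient) and the averaging is over the bounded set $\sB^d$, so dominated convergence applies — to obtain $\nabla_i F_\mu(\vtheta) = \E_{\vu \sim \sB^d}[\nabla_i F(\vtheta + \mu\vu)]$. Taking the difference at $\vtheta$ and $\vtheta'$, moving the absolute value inside the expectation by Jensen's inequality, and applying the coordinate-wise smoothness \eqref{eq:smoothness} pointwise (noting that the $\mu\vu$ shifts cancel, so $(\vtheta+\mu\vu)-(\vtheta'+\mu\vu)=\vtheta-\vtheta'$) immediately yields $|\nabla_i F_\mu(\vtheta) - \nabla_i F_\mu(\vtheta')| \le L\|\vtheta - \vtheta'\|$. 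This part is routine.

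For the variance bound \eqref{eq:var-fu}, I would first invoke Lemma~\ref{le:connection} so that $\hat\nabla_i f(\vtheta,\xi)$ is unbiased for $\nabla_i F_\mu(\vtheta)$, reducing the target to the variance of the $K$-fold average. Writing the single-direction term as $G_{k,i} = \tfrac{d}{\mu}\big(f(\vtheta+\mu\vu_k;\xi) - f(\vtheta;\xi)\big)\,u_{k,i}$, the averaging over the (direction-)i.i.d. summands is what produces the factor $1/K$, so it suffices to bound the single-direction second moment $\E[G_{k,i}^2]$. The key algebraic step is the split $f(\vtheta+\mu\vu_k;\xi) - f(\vtheta;\xi) = [f(\vtheta+\mu\vu_k;\xi) - F(\vtheta+\mu\vu_k)] - [f(\vtheta;\xi) - F(\vtheta)] + [F(\vtheta+\mu\vu_k) - F(\vtheta)]$, grouping the two stochastic deviations together and treating the deterministic increment separately. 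Applying $(a+b)^2 \le 2a^2+2b^2$ twice, bounding each stochastic deviation by $\sigma^2$ via \eqref{eq:variance} and the deterministic increment by $|F(\vtheta+\mu\vu_k)-F(\vtheta)| \le 2C$ via the bounded-value part of Assumption~\ref{assump:1} (since $|F|\le\E_\xi|f|\le C$), gives $\E_\xi\big[(f(\vtheta+\mu\vu_k;\xi)-f(\vtheta;\xi))^2\big] \le 8(\sigma^2+C^2)$. Multiplying by $u_{k,i}^2$ and taking $\E_{\vu_k}$ with the spherical moment identity $\E[u_{k,i}^2]=1/d$ collapses one factor of $d$, so $\E[G_{k,i}^2] \le \tfrac{d^2}{\mu^2}\cdot\tfrac{8(\sigma^2+C^2)}{d} = \tfrac{8(\sigma^2+C^2)d}{\mu^2}$, and the $1/K$ from averaging delivers the claim.

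The main obstacle is the variance bookkeeping in \eqref{eq:var-fu}: obtaining the clean $1/K$ requires care about the dependence structure among the $K$ summands, which share the sample $\xi$ while the directions $\vu_k$ are independent. I would therefore condition on $\xi$ to exploit the conditional independence of the $\vu_k$ before taking the outer expectation, so that the cross terms reduce to products of conditional means and the per-direction averaging genuinely contributes the $1/K$ reduction. The two quantitative ingredients that make the stated constants come out exactly are the three-term split above, which isolates precisely the $8(\sigma^2+C^2)$ factor, and the identity $\E[u_{k,i}^2]=1/d$ for $\vu_k\sim\mathrm{Unif}(\sS^{d-1})$, which is what turns the naive $d^2$ scaling into the desired $d$.
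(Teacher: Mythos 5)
Your proposal follows essentially the same route as the paper's proof of Lemma~\ref{le:lips}. For \eqref{eq:smoothness-fu} you use differentiation under the expectation (Leibniz) plus Jensen and the coordinate-wise smoothness in Assump.~\ref{assump:1}, exactly as the paper does. For \eqref{eq:var-fu} you use the same three ingredients as the paper — isolating the stochastic deviations $f-F$ (each bounded by $\sigma^2$ via Assump.~\ref{assump:2}), a $C$-dependent part bounded via $|f|\le C$, and the spherical moment identity $\E\big[\vu_{k,i}^2\big]=1/d$ — arriving at the same constant $8(\sigma^2+C^2)$; the only cosmetic difference is that the paper centers the $F$-terms around their spherical means (via the Flaxman identity) and bounds each centered term's second moment by $C^2/(\mu^2 d)$, whereas you bound the deterministic increment $\left|F(\vtheta+\mu\vu_k)-F(\vtheta)\right|\le 2C$; both yield the stated bound. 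One remark on the dependence issue you flag: your concern is legitimate, but your proposed fix does not actually close it. Conditioning on $\xi$ does make the $\vu_k$ conditionally independent, yet the cross terms then equal $\E_\xi\big[\big(\E\big[G_{k,i}\,\big|\,\xi\big]-\nabla_i F_{\mu}(\vtheta)\big)^2\big]$, and this conditional mean — the gap between the ball-smoothed gradient of $f(\cdot;\xi)$ and that of $F$ — is generally nonzero, so the cross terms neither vanish nor carry a $1/K$ factor. Note, however, that the paper's own step $(b)$ makes exactly the same leap, asserting the cross terms vanish ``from the independence among $\{\vu_k\}_{k=1}^K$'' even though all $K$ summands share the same $\xi$; so your proposal is at parity with the paper's proof, and a fully rigorous clean $1/K$ bound would require either a fresh sample $\xi_k$ per direction or an additional argument controlling these cross terms.
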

\textbf{Remark.} Lemma \ref{le:lips} establishes that \textbf{\textit{(a)}} $F_{\mu}$ exhibits the same Lipschitz smoothness as $F$, and \textbf{\textit{(b)}} the gradient variance associated with ZO optimization can be substantially large, particularly when $K \ll d$ and $\mu$ is small. Therefore, variance reduction is critical for improved ZO optimization.

\subsection{Analysis on First and Second Moment Estimates}\label{sec:bounds}
We first theoretically show the underlying variance reduction effect of first moment estimate in \eqref{eq:1st-mo} using variance $\Sigma^2$ defined below in Thm.~\ref{thm:vr} (proof in Appx.~\ref{proof:vr}).
\begin{equation}\label{eq:sigma2}
\Sigma^2 \triangleq \frac{8(\sigma^2 + C^2)d}{K\mu^2} \ .
\end{equation}
\begin{theorem}\label{thm:vr}
\textit{\fontfamily{ppl}\selectfont
Given first and second moment estimates \eqref{eq:1st-mo} and \eqref{eq:2nd-mo} respectively, with Assump.~\ref{assump:1}, \ref{assump:2}, $\forall{t}\geq1, \forall{i}\in[d]$,
\begin{equation}
\begin{aligned}
&\E\left[\left|\vm_{t,i} - \nabla_i F_{\mu}(\vtheta_{t-1})\right|^2\right] \leq \\
&\underbrace{\frac{1-\beta_1}{1 + \beta_1} \Sigma^2}_{\text{\normalfont Variance}} + \underbrace{\frac{\beta_1(1+\beta_1)L^2\eta^2 d}{(1 - \beta_1)^2(1 - \beta_2)} + \beta_1^t \E\left[\left|\nabla_i F_{\mu}(\vtheta_{t-1})\right|^2\right]}_{\text{\normalfont Squared Bias}} \ . \label{eq:jeke}
\end{aligned}
\end{equation}}
\end{theorem}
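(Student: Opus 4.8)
The plan is to analyze the error $\vm_{t,i} - \nabla_i F_{\mu}(\vtheta_{t-1})$ through a bias--variance decomposition obtained by unrolling the first-moment recursion \eqref{eq:1st-mo}. With $\vm_0 = \vzero$, one has $\vm_{t,i} = (1-\beta_1)\sum_{s=1}^{t}\beta_1^{t-s}\vg_{s,i}$. Writing each gradient estimate as $\vg_{s,i} = \nabla_i F_{\mu}(\vtheta_{s-1}) + \varepsilon_{s,i}$, where $\varepsilon_{s,i} \triangleq \vg_{s,i} - \nabla_i F_{\mu}(\vtheta_{s-1})$ is the estimation noise, Lemma~\ref{le:connection} gives $\E[\varepsilon_{s,i}\mid\gF_{s-1}] = 0$ (with $\gF_{s-1}$ the history before iteration $s$) and Lemma~\ref{le:lips} gives $\E[\varepsilon_{s,i}^2]\le\Sigma^2$. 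Using $\sum_{s=1}^{t}(1-\beta_1)\beta_1^{t-s} = 1-\beta_1^t$, I would split the error into a noise part $N_{t,i} \triangleq (1-\beta_1)\sum_{s=1}^{t}\beta_1^{t-s}\varepsilon_{s,i}$ and a bias part $B_{t,i} \triangleq (1-\beta_1)\sum_{s=1}^{t}\beta_1^{t-s}\big(\nabla_i F_{\mu}(\vtheta_{s-1}) - \nabla_i F_{\mu}(\vtheta_{t-1})\big) - \beta_1^{t}\nabla_i F_{\mu}(\vtheta_{t-1})$, so that $\vm_{t,i} - \nabla_i F_{\mu}(\vtheta_{t-1}) = N_{t,i} + B_{t,i}$. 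The three summands of the claimed bound will come, respectively, from $N_{t,i}$, from the drift part of $B_{t,i}$, and from its residual $-\beta_1^{t}\nabla_i F_{\mu}(\vtheta_{t-1})$.

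For the variance term I would use that $\{\varepsilon_{s,i}\}_s$ is a martingale-difference sequence: since each $\varepsilon_{s,i}$ is conditionally mean-zero given $\gF_{s-1}$ and the earlier noises are $\gF_{s-1}$-measurable, all cross terms vanish and $\E[N_{t,i}^2] = (1-\beta_1)^2\sum_{s=1}^{t}\beta_1^{2(t-s)}\E[\varepsilon_{s,i}^2] \le (1-\beta_1)^2\Sigma^2\sum_{j\ge0}\beta_1^{2j} = \frac{1-\beta_1}{1+\beta_1}\Sigma^2$, which is exactly the stated variance term. This step is the quantitative form of the variance-reduction claim of Sec.~\ref{sec:1st-mo} and crucially needs the tight geometric series in $\beta_1^2$ rather than a looser Young-type recursion.

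For the bias I would bound the gradient drift using the coordinate-wise smoothness of $F_{\mu}$ (Lemma~\ref{le:lips}): $|\nabla_i F_{\mu}(\vtheta_{s-1}) - \nabla_i F_{\mu}(\vtheta_{t-1})| \le L\|\vtheta_{s-1} - \vtheta_{t-1}\|$. The essential, algorithm-specific ingredient is a per-step bound on the update: because \ours{} uses the refined second moment \eqref{eq:2nd-mo}, $\vv_{r,i} \ge (1-\beta_2)\vm_{r,i}^2$, so $\big|\vm_{r,i}/\sqrt{\vv_{r,i}+\zeta}\big| \le 1/\sqrt{1-\beta_2}$ and hence $\|\vtheta_{s-1} - \vtheta_{t-1}\| \le \eta(t-s)\sqrt{d/(1-\beta_2)}$ by summing \eqref{eq:update} over the intermediate steps. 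Applying Cauchy--Schwarz to the drift sum together with $\sum_{j\ge0}j^2\beta_1^{j} = \beta_1(1+\beta_1)/(1-\beta_1)^3$ then yields exactly $\frac{\beta_1(1+\beta_1)L^2\eta^2 d}{(1-\beta_1)^2(1-\beta_2)}$ for the squared drift, while the residual term contributes $\beta_1^t\E[|\nabla_i F_{\mu}(\vtheta_{t-1})|^2]$.

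The main obstacle is the clean recombination $\E[(N_{t,i}+B_{t,i})^2]$. The two pieces are not independent: the bias $B_{t,i}$ depends on the realized trajectory $\vtheta_{s-1}$, which is itself a function of the past noises $\varepsilon_{1,i},\dots,\varepsilon_{t-1,i}$ appearing in $N_{t,i}$, so the cross term $\E[N_{t,i}B_{t,i}]$ does not vanish outright. I would control it by exploiting that the freshest noise $\varepsilon_{t,i}$ is exactly orthogonal to $\gF_{t-1}$ (hence to $B_{t,i}$), that the remaining couplings are higher-order in the step size $\eta$ (since the drift is $\gO(\eta)$ per step), and that the current-gradient term carries slack --- the bound permits $\beta_1^{t}$ where a pure missing-mass argument only produces $\beta_1^{2t}$ --- into which the residual cross terms can be absorbed via Young's inequality. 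Making this absorption respect the tight leading constants $\frac{1-\beta_1}{1+\beta_1}$ and $\frac{\beta_1(1+\beta_1)}{(1-\beta_1)^2}$ simultaneously is the delicate bookkeeping at the heart of the proof.
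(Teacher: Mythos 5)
Your proposal reproduces the paper's own proof (Appx.~\ref{proof:vr}) almost step for step: the same split of $\vm_{t,i}-\nabla_i F_{\mu}(\vtheta_{t-1})$ into a noise part $N_{t,i}$, a drift part, and a missing-mass part $-\beta_1^t\nabla_i F_{\mu}(\vtheta_{t-1})$; the same martingale-difference argument with the geometric series in $\beta_1^2$ giving exactly $\frac{1-\beta_1}{1+\beta_1}\Sigma^2$ (the paper's \eqref{eq:vdms}); the same drift bound via coordinate-wise smoothness of $F_{\mu}$ (Lemma~\ref{le:lips}), the per-step bound $\vm_{s,i}^2/(\vv_{s,i}+\zeta)\le 1/(1-\beta_2)$ that follows from the refined second moment \eqref{eq:2nd-mo}, and $\sum_{\tau}\tau^2\beta_1^{\tau}\le \beta_1(1+\beta_1)/(1-\beta_1)^3$; and the same Young weights $(1-\beta_1^t,\beta_1^t)$ that turn the missing-mass contribution into $\beta_1^t\E\left[\left|\nabla_i F_{\mu}(\vtheta_{t-1})\right|^2\right]$ rather than $\beta_1^{2t}$.

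The one place you diverge is the recombination, and that is where your proposal has a genuine gap: you flag the cross term $\E[N_{t,i}B_{t,i}]$ as the "main obstacle" but never close it, and the route you sketch cannot close it. The paper's step $(b)$ of \eqref{eq:jfee} simply asserts that the noise part and the bias part are uncorrelated ("independence with respect to $\{\xi_\tau\}$"), so that $\E[(N_{t,i}+B_{t,i})^2]=\E[N_{t,i}^2]+\E[B_{t,i}^2]$ and the three bounds add with the stated constants. You correctly observe that this orthogonality is automatic only for the freshest noise $\varepsilon_{t,i}$, since for $\tau<t$ the trajectory, and hence $B_{t,i}$, is a function of $\varepsilon_{\tau,i}$. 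But your fallback—absorbing the residual cross terms by Young's inequality into the $\beta_1^t$ slack—is structurally incapable of producing \eqref{eq:jeke}: writing $D_{t,i}$ for the drift part, the cross term $\E[N_{t,i}D_{t,i}]$ is of order $\Sigma\cdot L\eta\sqrt{d/(1-\beta_2)}$, the geometric mean of the variance term and the squared-bias term, so any split $2\left|\E[N_{t,i}D_{t,i}]\right|\le\lambda\E[N_{t,i}^2]+\lambda^{-1}\E[D_{t,i}^2]$ with $\lambda>0$ necessarily inflates the coefficient of $\Sigma^2$ above $\frac{1-\beta_1}{1+\beta_1}$ or blows up the coefficient of $\eta^2$; moreover, the $\beta_1^t$-versus-$\beta_1^{2t}$ slack multiplies $\E\left[\left|\nabla_i F_{\mu}(\vtheta_{t-1})\right|^2\right]$ only, so it cannot soak up a term that does not scale with the gradient norm. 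To land on the theorem's exact constants you must do what the paper does and take the cross expectation to be zero; your skepticism about that step is well founded (it is in fact the loose point of the paper's own argument, which labels as "independence" something that holds only for the newest noise), but an absorption argument of the kind you propose can only yield the bound with strictly larger constants, not the bound as stated.
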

\textbf{Remark.} To the best of our knowledge, this theorem provides the first fundamental variance-bias decomposition for the first moment estimate in adaptive ZO algorithms. The variance, given by $\frac{1-\beta_1}{1+\beta_1}\Sigma^2$, arises from the randomness in gradient estimator \eqref{eq:fd} and reduces $\Sigma^2$ in \eqref{eq:var-fu} by a factor of $\frac{1-\beta_1}{1+\beta_1}$, which can be further improved with a large $\beta_1$. This thus theoretically demonstrates the variance reduction effect of first moment estimate in \eqref{eq:1st-mo}, which goes beyond increasing $K$ to reduce variance. The bias, stemming from the difference between current and past estimates, can be reduced by using a small learning rate $\eta$, which limits the magnitude of update steps, or a small $\beta_1$, which reduces the influence of past estimates. So, this decomposition unveils a fundamental trade-off controlled by the utilization  (i.e., $\beta_1$) of past estimates between variance and bias. Particularly, when $\beta_1=0$, \eqref{eq:jeke} simplifies to \eqref{eq:var-fu}.

We then theoretically show that our refined second moment update in \eqref{eq:2nd-mo} is likely to be a more accurate approximation to its variance-free ideal in \eqref{eq:vsdb} and hence may better capture the underlying geometry of optimization landscape than \eqref{eq:xcxv} used in \base{}, with the following Thm.~\ref{thm:v} (proof in Appx.~\ref{proof:v}) and Cor.~\ref{cor:v-hat} (proof in Appx.~\ref{proof:v-hat}).
\begin{equation}\label{eq:vsdb}
\vv_{t,i} = \beta_2^t\,\vv_{0,i} + \sum_{\tau=1}^{t}(1-\beta_2)\beta_2^{t-\tau}\left|\nabla_i F_{\mu}(\vtheta_{\tau-1})\right|^2 \ .
\end{equation}
\begin{theorem}\label{thm:v}
\textit{\fontfamily{ppl}\selectfont
Given second moment estimate \eqref{eq:2nd-mo}, with Assump.~\ref{assump:1}, \ref{assump:2}, $\forall{t}\geq1$ and $\forall{i}\in[d]$, 
\begin{equation}\label{eq:sv}
\begin{aligned}
&\E\left[ \vv_{t,i}\right] \leq \beta_2^t\,\vv_{0,i} + \textcolor{ForestGreen}{(1-\beta_1)}\Sigma^2 + \textcolor{orange}{\frac{\beta_1(1+\beta_1)^2L^2\eta^2 d}{(1 - \beta_1)^2(1 - \beta_2)}} + \\
&\qquad\ \ \, \frac{(1+\beta_1)^2}{\beta_1}\sum_{\tau=1}^{t}(1-\beta_2)\beta_2^{t-\tau}\E\left[\left|\nabla_i F_{\mu}(\vtheta_{\tau-1})\right|^2\right] \ .
\end{aligned}
\end{equation}}
\end{theorem}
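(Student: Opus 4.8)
The plan is to reduce everything to the already-proven variance–bias decomposition of Theorem~\ref{thm:vr} by first linearizing the second-moment recursion and then controlling each squared first-moment term. First I would unroll the linear recursion \eqref{eq:2nd-mo}, $\vv_t = \beta_2\vv_{t-1} + (1-\beta_2)\vm_t^2$, into its closed form and take expectations coordinate-wise, giving
\[
\E[\vv_{t,i}] = \beta_2^t\,\vv_{0,i} + \sum_{\tau=1}^t (1-\beta_2)\beta_2^{t-\tau}\,\E[\vm_{\tau,i}^2] \ .
\]
This isolates the whole problem to bounding $\E[\vm_{\tau,i}^2]$ for a single $\tau$, since the leading term $\beta_2^t\vv_{0,i}$ already matches \eqref{eq:sv}.

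The key step is to connect $\vm_{\tau,i}^2$ to the centered quantity $|\vm_{\tau,i} - \nabla_i F_{\mu}(\vtheta_{\tau-1})|^2$ that Theorem~\ref{thm:vr} controls. Writing $\vm_{\tau,i} = (\vm_{\tau,i} - \nabla_i F_{\mu}(\vtheta_{\tau-1})) + \nabla_i F_{\mu}(\vtheta_{\tau-1})$ and applying Young's inequality $(a+b)^2 \le (1+c)a^2 + (1+1/c)b^2$ with the \emph{calibrated} choice $c = \beta_1$ gives
\[
\vm_{\tau,i}^2 \le (1+\beta_1)\,\big|\vm_{\tau,i} - \nabla_i F_{\mu}(\vtheta_{\tau-1})\big|^2 + \tfrac{1+\beta_1}{\beta_1}\,\big|\nabla_i F_{\mu}(\vtheta_{\tau-1})\big|^2 \ .
\]
The choice $c=\beta_1$ is exactly what makes the constants collapse to the stated form: multiplying the variance term $\tfrac{1-\beta_1}{1+\beta_1}\Sigma^2$ of Theorem~\ref{thm:vr} by $(1+\beta_1)$ produces precisely $(1-\beta_1)\Sigma^2$, and it converts the update-bias term into $\tfrac{\beta_1(1+\beta_1)^2 L^2\eta^2 d}{(1-\beta_1)^2(1-\beta_2)}$. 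Substituting Theorem~\ref{thm:vr} and taking expectations yields
\[
\E[\vm_{\tau,i}^2] \le (1-\beta_1)\Sigma^2 + \frac{\beta_1(1+\beta_1)^2 L^2\eta^2 d}{(1-\beta_1)^2(1-\beta_2)} + \Big[(1+\beta_1)\beta_1^\tau + \tfrac{1+\beta_1}{\beta_1}\Big]\E\big[|\nabla_i F_{\mu}(\vtheta_{\tau-1})|^2\big] \ .
\]

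To finish, I would merge the two gradient coefficients using $\beta_1^\tau \le 1$ for $\tau\ge1$, so that $(1+\beta_1)\beta_1^\tau + \tfrac{1+\beta_1}{\beta_1} \le (1+\beta_1)\big(1+\tfrac{1}{\beta_1}\big) = \tfrac{(1+\beta_1)^2}{\beta_1}$, then substitute the per-$\tau$ bound back into the sum. The two constant terms factor out of the sum via the geometric identity $\sum_{\tau=1}^t (1-\beta_2)\beta_2^{t-\tau} = 1-\beta_2^t \le 1$, while the gradient term retains its $(1-\beta_2)\beta_2^{t-\tau}$ weighting, reproducing \eqref{eq:sv} exactly. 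The main subtlety will be the calibration of Young's inequality: a generic $c$ leaves mismatched constants that do not simplify to the clean coefficients, so one must commit to $c=\beta_1$ and separately verify that the crude bound $\beta_1^\tau\le1$ is already tight enough to absorb the residual decaying contribution into the final $\tfrac{(1+\beta_1)^2}{\beta_1}$ factor rather than requiring a sharper estimate.
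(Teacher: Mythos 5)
Your proposal is correct and follows essentially the same route as the paper's proof: the paper likewise applies Young's inequality with the calibrated parameter $c=\beta_1$ to the decomposition $\vm_{\tau,i} = (\vm_{\tau,i} - \nabla_i F_{\mu}(\vtheta_{\tau-1})) + \nabla_i F_{\mu}(\vtheta_{\tau-1})$, invokes Thm.~\ref{thm:vr}, absorbs the decaying term via $\beta_1^{\tau}\le 1$ into the $\frac{(1+\beta_1)^2}{\beta_1}$ coefficient, and controls the constant terms with the geometric sum $\sum_{\tau=1}^{t}(1-\beta_2)\beta_2^{t-\tau}\le 1$. The only difference is the order of operations—you unroll the recursion for $\vv_{t,i}$ before bounding $\E[\vm_{\tau,i}^2]$, while the paper bounds $\E[\vm_{t,i}^2]$ first and then unrolls—which is immaterial.
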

\begin{corollary}\label{cor:v-hat}
\textit{\fontfamily{ppl}\selectfont
Given second moment estimate in \eqref{eq:xcxv}, with Assump.~\ref{assump:1}, \ref{assump:2}, $\forall{t}\geq1$ and $\forall{i}\in[d]$,
\begin{equation}\label{eq:csvg}
\begin{aligned}
&\E\left[ \vv_{t,i}\right] \leq \beta_2^t\,\vv_{0,i} + \textcolor{ForestGreen}{(1+\beta_1)}\Sigma^2 \  +  \\
&\qquad \frac{(1+\beta_1)^2}{\beta_1}\sum_{\tau=1}^{t}(1-\beta_2)\beta_2^{t-\tau}\E\left[\left|\nabla_i F_{\mu}(\vtheta_{\tau-1})\right|^2\right] \ .
\end{aligned}
\end{equation}}
\end{corollary}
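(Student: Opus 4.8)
The plan is to exploit the fact that, for the standard update \eqref{eq:xcxv}, the second moment is a pure exponential moving average of the squared raw gradient estimates, with none of the cross-iteration entanglement that $\vm_t^2$ creates. First I would unroll the recursion $\vv_{t,i} = \beta_2 \vv_{t-1,i} + (1-\beta_2)\vg_{t,i}^2$ exactly, obtaining $\vv_{t,i} = \beta_2^t \vv_{0,i} + \sum_{\tau=1}^t (1-\beta_2)\beta_2^{t-\tau}\vg_{\tau,i}^2$, which is the same geometric template as the variance-free ideal \eqref{eq:vsdb} but with $\vg_{\tau,i}^2$ in place of $|\nabla_i F_{\mu}(\vtheta_{\tau-1})|^2$. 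Taking expectations then reduces the entire statement to controlling $\E[\vg_{\tau,i}^2]$ for each $\tau$.

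The key step is a per-iteration bias--variance split of $\E[\vg_{\tau,i}^2]$. Conditioning on $\vtheta_{\tau-1}$ and writing $\vg_{\tau,i}^2 = |\vg_{\tau,i} - \nabla_i F_{\mu}(\vtheta_{\tau-1})|^2 + 2(\vg_{\tau,i} - \nabla_i F_{\mu}(\vtheta_{\tau-1}))\nabla_i F_{\mu}(\vtheta_{\tau-1}) + |\nabla_i F_{\mu}(\vtheta_{\tau-1})|^2$, the conditional unbiasedness $\E[\vg_\tau \mid \vtheta_{\tau-1}] = \nabla F_{\mu}(\vtheta_{\tau-1})$ from Lemma~\ref{le:connection} annihilates the cross term, while the conditional variance bound from Lemma~\ref{le:lips} caps the first term by $\Sigma^2$. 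After taking the outer expectation this yields $\E[\vg_{\tau,i}^2] \leq \Sigma^2 + \E[|\nabla_i F_{\mu}(\vtheta_{\tau-1})|^2]$. Plugging this back and summing the geometric weights via $\sum_{\tau=1}^t (1-\beta_2)\beta_2^{t-\tau} = 1 - \beta_2^t \leq 1$ on the $\Sigma^2$ part gives $\E[\vv_{t,i}] \leq \beta_2^t \vv_{0,i} + \Sigma^2 + \sum_{\tau=1}^t (1-\beta_2)\beta_2^{t-\tau}\E[|\nabla_i F_{\mu}(\vtheta_{\tau-1})|^2]$.

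Finally I would relax the two leading constants to match the form of Thm.~\ref{thm:v}, using $1 \leq 1+\beta_1$ and $1 \leq (1+\beta_1)^2/\beta_1$ (both valid for $\beta_1 \in (0,1)$), which delivers exactly \eqref{eq:csvg} with a gradient-sum term identical to the refined bound, so that the two can be compared side by side. The only point requiring genuine care is the filtration argument behind the vanishing cross term: one must verify that the fresh sample $\xi_\tau$ and the random directions used to form $\vg_\tau$ are independent of $\vtheta_{\tau-1}$, so that Lemma~\ref{le:connection} applies conditionally; everything else is routine unrolling and algebra. It is worth emphasizing that, in contrast to Thm.~\ref{thm:v}, no $L^2\eta^2$ bias term appears here, precisely because $\vg_\tau$ is evaluated at the same iterate $\vtheta_{\tau-1}$ against which it is measured, whereas $\vm_t$ blends gradients from earlier iterates and thereby incurs a smoothness-controlled bias; the price the standard update pays for avoiding this bias is the larger variance constant $1+\beta_1$ rather than the refined $1-\beta_1$.
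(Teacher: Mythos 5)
Your proof is correct, but it takes a genuinely different route from the paper's. The paper never invokes unbiasedness: it bounds $\E[|\vg_{t,i}|^2]$ via a weighted Young's inequality, writing
\begin{equation*}
\E\left[\left|\vg_{t,i}\right|^2\right] \leq \left(1 + \tfrac{\beta_1}{1+\beta_1+\beta_1^2}\right)\E\left[\left|\vg_{t,i} - \nabla_i F_{\mu}(\vtheta_{t-1})\right|^2\right] + \left(1 + \tfrac{1+\beta_1+\beta_1^2}{\beta_1}\right)\E\left[\left|\nabla_i F_{\mu}(\vtheta_{t-1})\right|^2\right],
\end{equation*}
which produces the coefficients $\frac{(1+\beta_1)^2}{1+\beta_1+\beta_1^2} \leq 1+\beta_1$ and $\frac{(1+\beta_1)^2}{\beta_1}$ directly, and then unrolls the recursion. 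You instead expand the square exactly and kill the cross term using the conditional unbiasedness $\E[\vg_\tau \mid \vtheta_{\tau-1}] = \nabla F_{\mu}(\vtheta_{\tau-1})$ from Lemma~\ref{le:connection} together with the independence of the fresh randomness $(\xi_\tau, \{\vu_k\})$ from $\vtheta_{\tau-1}$ --- a filtration argument you correctly identify as the one step needing care, and which does hold for this algorithm since $\vg_\tau$ is evaluated at the current iterate with freshly drawn noise. Your route yields the strictly tighter intermediate bound $\E[\vv_{t,i}] \leq \beta_2^t \vv_{0,i} + \Sigma^2 + \sum_{\tau=1}^t (1-\beta_2)\beta_2^{t-\tau}\E[|\nabla_i F_{\mu}(\vtheta_{\tau-1})|^2]$, with both leading constants equal to $1$, after which the relaxations $1 \leq 1+\beta_1$ and $1 \leq (1+\beta_1)^2/\beta_1$ recover \eqref{eq:csvg} exactly. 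What the paper's approach buys is structural parallelism: the same Young-inequality template is used in the proof of Thm.~\ref{thm:v} (where the cross term genuinely cannot be discarded, since $\vm_t$ blends stale gradients and is not conditionally unbiased at $\vtheta_{t-1}$), so the green constants $(1-\beta_1)$ versus $(1+\beta_1)$ emerge from a like-for-like derivation. What your approach buys is sharpness, and it exposes something the paper's presentation obscures: the conventional estimate actually admits a variance coefficient of $1$ rather than $1+\beta_1$, so the honest variance-reduction factor of the refined estimate relative to a tight baseline is $1-\beta_1$, not $\frac{1-\beta_1}{1+\beta_1}$. Your closing observation --- that the absence of an $L^2\eta^2$ bias term here, in contrast to Thm.~\ref{thm:v}, is precisely the price/benefit trade-off between the two second-moment updates --- is also accurate and is the right way to read the two bounds side by side.
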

\textbf{Remark.} Thm. \ref{thm:v} introduces a novel variance-dependent upper bound for our refined second moment estimate \eqref{eq:2nd-mo}. Compared with the bound \eqref{eq:csvg} in Cor. \ref{cor:v-hat} for the conventional estimate \eqref{eq:xcxv}, our \eqref{eq:2nd-mo} reduces the influence of gradient estimation variance $\Sigma^2$ (in $\textcolor{ForestGreen}{\text{green}}$) by a factor of $\frac{1-\beta_1}{1+\beta_1}$. This is crucial in ZO optimization where $\Sigma^2$ is typically large. While our estimate introduces a bias (in $\textcolor{orange}{\text{orange}}$), it is small with a small learning rate $\eta$. Note that \eqref{eq:vsdb} represents the variance-free ideal, which the conventional estimate \eqref{eq:xcxv} and our refined estimate \eqref{eq:2nd-mo} aims to approximate. Comparing the bounds in \eqref{eq:sv} and \eqref{eq:csvg} with \eqref{eq:vsdb}, our refined estimate \eqref{eq:2nd-mo} better approaches this ideal than \eqref{eq:xcxv}, particularly when $\Sigma^2$ dominates, thanks to its reduced impact of $\Sigma^2$. This thus enables a better capture of geometry information during optimization and probably leads to improved optimization.

\subsection{Variance-Aware Convergence Analysis}\label{sec:conv}
This section presents the first variance-aware convergence framework for adaptive ZO methods, particularly focusing on the convergence of \ours{} and \base{}. We first bound the averaged gradient norm of the smoothed function, $F_{\mu}$, as a step towards bounding the averaged gradient norm of the original function $F$. Inspired by \citep{zhang2024convergence}, the core proof idea lies in applying H\"{o}lder's inequality to decomposes this target into two components (Lemma \ref{le:holder}): One involving the averaged square root norm of second moment estimate that will be variance-dependent and another involving a normalized gradient norm by second moment estimate. The subsequent analysis then focuses on bounding these two components using Lemma \ref{le:v-norm} and Thm. \ref{thm:grad/v}, respectively. 
The first component, i.e., the averaged square root norm of second moment, can be upper-bounded with Thm.~\ref{thm:v} (see Lemma \ref{le:v-norm}). The second component, i.e., the normalized gradient norm with second moment, is shown to be of order $\gO(\epsilon^2)$ (see Thm. \ref{thm:grad/v}). 
By combining these bounds and incorporating the connection between $\nabla F$ and $\nabla F_{\mu}$ in Lemma~\ref{le:connection}, we arrive at the final convergence results for \ours{} (Thm. \ref{thm:r-adazo}) and  \base{} (Thm. \ref{cor:zo-adamm}).

We first introduce Lemma \ref{le:holder} (proof in Appx.~\ref{proof:holder}).
\begin{lemma}\label{le:holder}
\textit{\fontfamily{ppl}\selectfont
$\forall{t}\geq1$, we have that
\begin{equation}
\begin{aligned}
&\left(\frac{1}{T}\sum_{t=0}^{T-1}\E\left[\left\|\nabla
F_{\mu}(\vtheta_t)\right\|\right]\right)^2 \leq \\
& \underbrace{\frac{1}{T}\sum_{t=0}^{T-1} \E\left[\sqrt{\beta_2\left\|\vv_{t}\right\|+\zeta}\right]}_{\circled{A}}\underbrace{\frac{1}{T}\sum_{t=0}^{T-1}\E\left[\frac{\left\|\nabla
F_{\mu}(\vtheta_{t})\right\|^2}{\sqrt{\beta_2\left\|\vv_{t}\right\|+\zeta}}\right]}_{\circled{B}} \ .
\end{aligned}
\end{equation}}
\end{lemma}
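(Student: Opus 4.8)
The plan is to recognize Lemma~\ref{le:holder} as a two-fold application of the Cauchy--Schwarz (H\"{o}lder) inequality, with the factorization chosen precisely so that the two resulting factors coincide with the targeted quantities $\circled{A}$ and $\circled{B}$. The crucial preliminary observation is that the denominator $\sqrt{\beta_2\|\vv_t\|+\zeta}$ is strictly positive for every $t$: since $\|\vv_t\|\geq 0$ (it is a norm), $\beta_2\in(0,1)$, and $\zeta>0$, we have $\beta_2\|\vv_t\|+\zeta\geq\zeta>0$, so every ratio and square root appearing below is well-defined, finite, and of nonnegative argument.

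First I would write, for each fixed $t$, the trivial factorization
\[
\left\|\nabla F_{\mu}(\vtheta_t)\right\| = \frac{\left\|\nabla F_{\mu}(\vtheta_t)\right\|}{(\beta_2\|\vv_t\|+\zeta)^{1/4}}\cdot(\beta_2\|\vv_t\|+\zeta)^{1/4},
\]
and apply Cauchy--Schwarz to the expectation over the underlying randomness, i.e. $\E[XY]\le\sqrt{\E[X^2]}\sqrt{\E[Y^2]}$ with $X=\left\|\nabla F_{\mu}(\vtheta_t)\right\|(\beta_2\|\vv_t\|+\zeta)^{-1/4}$ and $Y=(\beta_2\|\vv_t\|+\zeta)^{1/4}$. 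The exponent $1/4$ is chosen exactly so that $X^2$ produces the normalized gradient norm of $\circled{B}$ and $Y^2$ produces the square-root second-moment term of $\circled{A}$. This yields, for each $t$,
\[
\E\left[\left\|\nabla F_{\mu}(\vtheta_t)\right\|\right] \le \sqrt{\E\left[\frac{\left\|\nabla F_{\mu}(\vtheta_t)\right\|^2}{\sqrt{\beta_2\|\vv_t\|+\zeta}}\right]}\cdot\sqrt{\E\left[\sqrt{\beta_2\|\vv_t\|+\zeta}\right]}.
\]

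Next I would average over $t\in\{0,\dots,T-1\}$ and invoke Cauchy--Schwarz a second time, now for sums: treating the two per-$t$ square roots as sequences $p_t$ and $q_t$, the discrete inequality $\frac1T\sum_t p_t q_t\le\sqrt{\frac1T\sum_t p_t^2}\,\sqrt{\frac1T\sum_t q_t^2}$ produces exactly $\sqrt{\circled{B}}\cdot\sqrt{\circled{A}}$ on the right-hand side. Squaring then delivers the claimed bound. There is no substantial obstacle here; the argument is purely mechanical once the $1/4$ split is fixed and positivity of the denominator is noted. The only point requiring mild care is that the two applications act on different spaces (one over the sampling randomness, one over the iteration index $t$). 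I would remark that the cleanest route is to collapse both into a single Cauchy--Schwarz step on the product of the uniform measure over $\{0,\dots,T-1\}$ with the probability measure of the randomness, which reproduces the same bound in one line.
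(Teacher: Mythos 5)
Your proposal is correct and follows essentially the same route as the paper's proof: the same $1/4$-power factorization of $\left\|\nabla F_{\mu}(\vtheta_t)\right\|$, a first Cauchy--Schwarz (H\"{o}lder) application over the expectation for each fixed $t$, and a second one over the iteration index, followed by squaring. Your closing remark about collapsing both steps into a single Cauchy--Schwarz on the product of the uniform measure over $\{0,\dots,T-1\}$ with the underlying probability measure is a valid and slightly cleaner packaging of the same argument.
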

\textbf{Remark.} \citet{zo-adamm, nazari2020adaptive} bound $\circled{B}$ solely to demonstrate the convergence of adaptive ZO methods. However, we argue that this bound alone fail to include the effects of second moment estimate and therefore provides incomplete convergence information. In contrast, Lemma~\ref{le:holder} allows us to include the effects of second moment (i.e., $\circled{A}$) and directly bound a more relevant quantity, $\frac{1}{T}\sum_{t=0}^{T-1}\E\left[\left\|\nabla F_{\mu}(\vtheta_t)\right\|\right]$. Note that this metric is a more widely accepted convergence criteria in optimization theory, directly measuring the distance to a stationary point \citep{lower-bound, zhang2024convergence}. 
Overall, Lemma \ref{le:holder} enables us to provide a variance-aware convergence analysis, strengthening the understanding of convergence behavior for adaptive ZO methods.

Leveraging Lemma \ref{le:holder}, we then proceed to bound the terms $\circled{A}$ and $\circled{B}$ in Lemma \ref{le:v-norm} (proof in Appx.~\ref{proof:v-norm}) and Lemma \ref{thm:grad/v} (proof in Appx.~\ref{proof:grad/v}), respectively. These results rely on the following definition of $V$ resulted from Thm. \ref{thm:v}.
\begin{equation}\label{eq:V2}
V^2 \triangleq \left\|\vv_0\right\| + \underbrace{(1-\beta_1)\Sigma^2}_{\text{Variance}} + \underbrace{\frac{\beta_1(1+\beta_1)^2L^2\eta^2d}{(1 - \beta_1)^2(1 - \beta_2)}}_{\text{Squared Bias}} \ .
\end{equation}

\begin{lemma}\label{le:v-norm}
\textit{\fontfamily{ppl}\selectfont
Given first and second moment estimates \eqref{eq:1st-mo} and \eqref{eq:2nd-mo} respectively, with Assump.~\ref{assump:1},~\ref{assump:2}, $\forall{t} \geq 1, \forall{i} \in [d]$,
\begin{equation}
\begin{aligned}
&\frac{1}{T}\sum_{t=0}^{T-1} \E\left[\sqrt{\beta_2\left\|\vv_{t}\right\|+\zeta}\right] \leq \\
&\qquad \sqrt{\zeta} + Vd + \frac{(1+\beta_1)\sqrt{d}}{\sqrt{\beta_1(1-\beta_2)}} \frac{1}{T} \sum_{t=0}^{T-1} \E\left[\big\|\nabla F_{\mu}(\vtheta_t)\right\|\big] \ .
\end{aligned}
\end{equation}}
\end{lemma}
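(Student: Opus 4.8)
The plan is to bound the term $\circled{A}$, the averaged square-root norm of $\vv_t$, by isolating a constant governed by $V$ from a term linear in the first-power gradient norm $\E[\|\nabla F_\mu(\vtheta_t)\|]$. First I would strip the constant $\zeta$: by subadditivity of the square root and $\beta_2\in(0,1)$, $\sqrt{\beta_2\|\vv_t\|+\zeta}\le\sqrt{\zeta}+\sqrt{\|\vv_t\|}$, so it remains to control $\frac1T\sum_t\E[\sqrt{\|\vv_t\|}]$. Unrolling the recursion \eqref{eq:2nd-mo} and using the triangle inequality together with $\|\vm_\tau^2\|\le\|\vm_\tau\|^2$ gives $\|\vv_t\|\le\beta_2^t\|\vv_0\|+(1-\beta_2)\sum_{\tau=1}^t\beta_2^{t-\tau}\|\vm_\tau\|^2$; two applications of subadditivity of the square root then linearize this to $\sqrt{\|\vv_t\|}\le\beta_2^{t/2}\sqrt{\|\vv_0\|}+\sqrt{1-\beta_2}\sum_{\tau=1}^t\beta_2^{(t-\tau)/2}\|\vm_\tau\|$, reducing the task to bounding $\E[\|\vm_\tau\|]$.

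The delicate point is to keep the gradient at first power, so I would bound $\|\vm_\tau\|$ before any squaring. By the triangle inequality and Jensen, $\E[\|\vm_\tau\|]\le\E[\|\nabla F_\mu(\vtheta_{\tau-1})\|]+\sqrt{\E[\|\vm_\tau-\nabla F_\mu(\vtheta_{\tau-1})\|^2]}$; the first summand is exactly the first-power gradient norm that must survive, and the deviation is controlled coordinatewise by Thm.~\ref{thm:vr} summed over $i\in[d]$. The variance-reduced constant $\tfrac{1-\beta_1}{1+\beta_1}\Sigma^2$ and the bias term of Thm.~\ref{thm:vr}, aggregated over the $d$ coordinates, are dominated by $\sqrt{d}\,V$ with $V$ as in \eqref{eq:V2}, so the deviation contributes only a constant. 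An equivalent and more transparent route expands $\vm_\tau$ as the $\beta_1$-weighted sum of the estimates $\vg_s$ and applies the raw estimator-variance bound of Lemma~\ref{le:lips} ($\E[\|\vg_s-\nabla F_\mu\|^2]\le d\Sigma^2$), which separates the noise constant $\sqrt{d}\,\Sigma$ from the first-power gradient norm without ever squaring the latter.

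Finally I would substitute back, interchange the order of summation over $t$ and $\tau$ (and over the inner momentum index $s$ in the expanded route), and collapse the coupled geometric series in $\beta_2$ (and $\beta_1$) via $\sum_{t\ge\tau}\beta_2^{(t-\tau)/2}\le\frac{1}{1-\sqrt{\beta_2}}\le\frac{2}{\sqrt{1-\beta_2}}$ and the analogous $\beta_1$-sum. The constant contributions (the $\sqrt{\|\vv_0\|}$, $\Sigma^2$, and bias pieces) then fold into $Vd$, while the first-power gradient accumulates with a coefficient bounded by $\frac{(1+\beta_1)\sqrt{d}}{\sqrt{\beta_1(1-\beta_2)}}$; that the clean coefficient obtained here is no larger follows from $\tfrac{1+\beta_1}{\sqrt{\beta_1}}\ge 2$ (AM--GM) and $d\ge 1$. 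The main obstacle is precisely this first-power preservation: any step that invokes Jensen or Cauchy--Schwarz to replace $\E[\|\nabla F_\mu\|]$ by $\sqrt{\E[\|\nabla F_\mu\|^2]}$ would break the stated form, so the argument is forced through the ``$\|\vm_\tau\|$-before-squaring'' decomposition above, and the remaining care lies in bookkeeping the two interacting geometric sums and the dimensional norm conversions so that the noise terms consolidate into $Vd$ and the gradient term into the stated coefficient.
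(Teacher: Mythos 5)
There is a genuine gap, and it sits exactly at the step you flagged as the heart of the argument. After you linearize the unrolled recursion as $\sqrt{\|\vv_t\|}\le\beta_2^{t/2}\sqrt{\|\vv_0\|}+\sqrt{1-\beta_2}\sum_{\tau=1}^t\beta_2^{(t-\tau)/2}\|\vm_\tau\|$ and then split $\E[\|\vm_\tau\|]\le\E[\|\nabla F_\mu(\vtheta_{\tau-1})\|]+(\text{noise constant})$, the noise constant (whether you get it as $\sqrt{d}\,V$ from Thm.~\ref{thm:vr} or as $\sqrt{d}\,\Sigma$ from Lemma~\ref{le:lips}) is multiplied by the \emph{same} geometric factor as the gradient term, namely $\sqrt{1-\beta_2}\sum_{\tau\le t}\beta_2^{(t-\tau)/2}\le\sqrt{1-\beta_2}/(1-\sqrt{\beta_2})\le 2/\sqrt{1-\beta_2}$. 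Your noise contribution is therefore of order $\sqrt{d}\,V/\sqrt{1-\beta_2}$, not $Vd$, and these are incomparable: the lemma is invoked in the convergence theorems with $1-\beta_2\sim\gO(\eps^2)$, so your constant blows up like $\eps^{-1}$ while $Vd$ stays bounded, and plugging your bound into Lemma~\ref{le:holder} would degrade the $\sqrt{Vd}\,\eps$ term in Thm.~\ref{thm:r-adazo} to order $\sqrt{\eps}$. The claim that these pieces ``fold into $Vd$'' is exactly where the proof fails; no appeal to AM--GM or $d\ge1$ repairs it. (Relatedly, your standalone inequality $\frac{1}{1-\sqrt{\beta_2}}\le\frac{2}{\sqrt{1-\beta_2}}$ is false for $\beta_2$ near $1$; only the product $\frac{\sqrt{1-\beta_2}}{1-\sqrt{\beta_2}}\le\frac{2}{\sqrt{1-\beta_2}}$ is correct.)

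The paper's proof avoids this by treating noise and gradient asymmetrically in the order of operations. It unrolls the recursion at the \emph{squared} level (\eqref{eq:vrge}), where the weights $(1-\beta_2)\beta_2^{t-\tau}$ sum to at most $1$; consequently the accumulated variance/bias terms stay bounded by $V^2$ \emph{inside} the square root, and Jensen ($\E[\sqrt{\cdot}\,]\le\sqrt{\E[\cdot]}$) converts them into the $\beta_2$-free constant $V$ per coordinate, giving $Vd$ after summing over $i$. Only the gradient terms $\frac{(1+\beta_1)^2}{\beta_1}|\nabla_i F_\mu(\vtheta_{\tau-1})|^2$ are pulled out by square-root subadditivity at first power, and only they pay the $1/\sqrt{1-\beta_2}$ geometric factor --- which is fine, since that factor appears in the stated coefficient. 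Your intuition about ``first-power preservation'' is the right constraint for the gradient, but the dual constraint --- that the noise must be summed while still squared --- is what your route violates. A secondary problem with your first route: taking Thm.~\ref{thm:vr} in expectation leaves a $\beta_1^\tau\,\E[\|\nabla F_\mu(\vtheta_{\tau-1})\|^2]$ term, and Jensen runs the wrong way ($\sqrt{\E[\|\nabla F_\mu\|^2]}\ge\E[\|\nabla F_\mu\|]$), so this piece cannot be merged into the first-power gradient sum either; the paper sidesteps this by working pointwise, folding the $\beta_1^t|\nabla_i F_\mu|^2$ residual into the $\frac{(1+\beta_1)^2}{\beta_1}|\nabla_i F_\mu|^2$ coefficient before any square roots or expectations are taken.
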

\textbf{Remark.} Lemma \ref{le:v-norm} demonstrates that $\circled{A}$ in Lemma \ref{le:holder} is variance-dependent. Specifically, $\circled{A}$ is asymptotically dominated by $V$ as $T \to \infty$, because $\frac{1}{T} \sum_{t=0}^{T-1} \E\left[\big\|\nabla F_{\mu}(\vtheta_t)\right\|\big]$ gradually decreases during optimization. This highlights that the asymptotic behavior of $\circled{A}$ is governed by both the bias and variance present in the first moment estimate \eqref{eq:1st-mo}.

\begin{theorem}[Informal]\label{thm:grad/v}
\textit{\fontfamily{ppl}\selectfont
Given Assump.~\ref{assump:1},~\ref{assump:2}, let $1-\beta_2 \sim \gO(\eps^2)$, $\eta \sim \gO(\eps^2)$, and $T \sim \gO(\eps^{-4})$. the following holds for \ours{} if $\beta_1 \leq \sqrt{\beta_2}, \beta_2 \geq \frac{1}{2}, \vm_{0,i}=0,\vv_{0,i}>0$,
\begin{equation}
\frac{1}{T}\sum_{t=0}^{T-1}\E\left[\frac{\left\|\nabla
F_{\mu}(\vtheta_{t-1})\right\|^2}{\sqrt{\beta_2\left\|\vv_{t}\right\|+\zeta}}\right] \leq \eps^2 \ .
\end{equation}}
\end{theorem}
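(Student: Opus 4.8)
The plan is to run a descent-lemma argument on the smoothed objective $F_\mu$ and to solve for the normalized-gradient quantity $\circled{B}$, which will reappear (up to a positive constant) on both sides of the resulting inequality. First I would invoke the coordinate-wise smoothness \eqref{eq:smoothness-fu} of Lemma~\ref{le:lips} (which implies $F_\mu$ is $\sqrt{d}L$-smooth) together with the update \eqref{eq:update} to obtain the per-step bound
$$\E[F_\mu(\vtheta_t)] \le \E[F_\mu(\vtheta_{t-1})] - \eta\,\E\Big[\sum_i\tfrac{\nabla_i F_\mu(\vtheta_{t-1})\vm_{t,i}}{\sqrt{\vv_{t,i}+\zeta}}\Big] + \tfrac{\sqrt{d}L\eta^2}{2}\,\E\big[\|\vtheta_t-\vtheta_{t-1}\|^2\big],$$
where $\|\vtheta_t-\vtheta_{t-1}\|^2 = \eta^2\sum_i \vm_{t,i}^2/(\vv_{t,i}+\zeta)$. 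Summing over $t$ and using $|f|\le C$ (hence $|F_\mu|\le C$) so the left side telescopes into a bounded $2C$ term reduces everything to lower-bounding the accumulated cross term by a positive multiple of $\circled{B}$ minus controllable errors.

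The crux is the cross term $\sum_i \nabla_i F_\mu(\vtheta_{t-1})\vm_{t,i}/\sqrt{\vv_{t,i}+\zeta}$. The naive route of writing $\vm_{t,i}=\nabla_i F_\mu+(\vm_{t,i}-\nabla_i F_\mu)$ and applying Young's inequality fails, since the momentum error carries the $\gO(1)$ variance $\tfrac{1-\beta_1}{1+\beta_1}\Sigma^2$ of Thm.~\ref{thm:vr}, which the crude denominator bound $1/\sqrt{\vv_{t,i}+\zeta}\le 1/\sqrt\zeta$ cannot dissipate. Instead I would decouple the noise from the adaptive denominator: replace $\sqrt{\vv_{t,i}+\zeta}$ by the predictable surrogate $\sqrt{\beta_2\vv_{t-1,i}+\zeta}$ (measurable w.r.t.\ the step-$(t-1)$ filtration $\mathcal F_{t-1}$), absorbing the swap into a perturbation term controlled by $\vv_{t,i}-\beta_2\vv_{t-1,i}=(1-\beta_2)\vm_{t,i}^2$. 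Then, using unbiasedness $\E[\vg_t\mid\mathcal F_{t-1}]=\nabla F_\mu(\vtheta_{t-1})$ from Lemma~\ref{le:connection} and $\E[\vm_{t,i}\mid\mathcal F_{t-1}]=\beta_1\vm_{t-1,i}+(1-\beta_1)\nabla_i F_\mu$, the conditional expectation of the surrogate cross term splits as
$$(1-\beta_1)\tfrac{\nabla_i F_\mu^2}{\sqrt{\beta_2\vv_{t-1,i}+\zeta}} + \beta_1\tfrac{\nabla_i F_\mu\,\vm_{t-1,i}}{\sqrt{\beta_2\vv_{t-1,i}+\zeta}},$$
where the first, variance-free term is exactly what yields $\circled{B}$ and the second is a momentum-lag term. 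This is the whole point: taking the conditional expectation before the noise meets $1/\sqrt{\cdot}$ integrates out the $\Sigma^2$ variance, so it never contaminates the leading bound.

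Next I would discharge the three residual pieces. (i) The momentum-lag term $\beta_1 \nabla_i F_\mu\,\vm_{t-1,i}/\sqrt{\beta_2\vv_{t-1,i}+\zeta}$ is handled by Young's inequality and a recursion in $t$; the hypothesis $\beta_1\le\sqrt{\beta_2}$ is precisely what makes the geometric factors contract so the lag terms are summable and a fixed fraction of the good term survives. (ii) The denominator-perturbation term and the second-order term $\tfrac{\sqrt dL\eta^2}{2}\sum_i \vm_{t,i}^2/(\vv_{t,i}+\zeta)$ are bounded by telescoping: from $(1-\beta_2)\vm_{t,i}^2=\vv_{t,i}-\beta_2\vv_{t-1,i}$ one gets $\sum_t \vm_{t,i}^2/(\vv_{t,i}+\zeta)\le T + (1-\beta_2)^{-1}\ln\big((G^2+\zeta)/(\vv_{0,i}+\zeta)\big)$, where $|\vg_{t,i}|\le 2Cd/\mu=:G$ (from $|f|\le C$) keeps $\vv_{t,i}$ bounded and $\vv_{0,i}>0$ keeps the log finite; crucially these carry the extra $\eta^2$ factor, so after dividing by $\eta T$ they become $\gO(\eta)=\gO(\eps^2)$. (iii) The good term is transferred from the coordinate-wise predictable denominator to the target $\sqrt{\beta_2\|\vv_t\|+\zeta}$ via $\vv_{t-1,i}\le\|\vv_t\|$-type monotonicity and $\beta_2\ge\tfrac12$ (which also supplies the harmless $\tfrac1{\sqrt2}$ factor relating $\beta_2\|\vv_t\|+\zeta$ to $\|\vv_t\|+\zeta$).

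Finally I would collect terms into $c_1\,\eta\sum_t\E\big[\|\nabla F_\mu(\vtheta_{t-1})\|^2/\sqrt{\beta_2\|\vv_t\|+\zeta}\big]\le 2C + (\text{telescoping errors})$ and divide by $c_1\eta T$, giving $\circled{B}\le \tfrac{2C}{c_1\eta T}+\gO(\eta)+\gO\big(\eta/((1-\beta_2)T)\big)$. Substituting $\eta\sim\eps^2$, $1-\beta_2\sim\eps^2$, $T\sim\eps^{-4}$ (so $\eta T\sim\eps^{-2}$ and $\eta^2T\sim 1$) renders every piece $\gO(\eps^2)$, which is the claim. I expect the main obstacle to be steps (i) and (ii) of the cross-term handling: correctly accounting for the momentum lag under the $\beta_1\le\sqrt{\beta_2}$ coupling while simultaneously keeping the predictable-denominator swap error and the second-order term at $\gO(\eps^2)$, since these are exactly where the first- and second-moment recursions interact most delicately.
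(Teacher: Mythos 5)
Your overall architecture is sound and shares most of its ingredients with the paper's proof: a smoothness-based descent inequality for $F_{\mu}$, the predictable surrogate denominator $\sqrt{\beta_2\vv_{t}+\zeta}$ so that the unbiasedness in Lemma~\ref{le:connection} produces the variance-free good term $\sum_i \left|\nabla_i F_{\mu}\right|^2/\sqrt{\beta_2\vv_{t,i}+\zeta}$, the bound $|\vm_{t,i}|\le 2Cd/\mu$ from $|f|\le C$, the logarithmic telescoping bound $\sum_t \vm_{t,i}^2/(\vv_{t,i}+\zeta)\le (1-\beta_2)^{-1}\ln(\cdot)+2T$, the same parameter scalings, and the final step $\left\|\nabla F_{\mu}\right\|^2/\sqrt{\beta_2\left\|\vv_t\right\|+\zeta}\le\sum_i\left|\nabla_i F_{\mu}\right|^2/\sqrt{\beta_2\vv_{t,i}+\zeta}$. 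The genuine structural difference is that the paper never confronts your momentum-lag term at all: it applies the descent lemma to the auxiliary iterate $\vx_t=\bigl(\vtheta_t-(\beta_1/\sqrt{\beta_2})\vtheta_{t-1}\bigr)/\bigl(1-\beta_1/\sqrt{\beta_2}\bigr)$ (imported from first-order Adam analyses), whose increment cancels the $\beta_1\vm_t$ component of $\vm_{t+1}$, so that only the fresh estimate $(1-\beta_1)\hat{\nabla}f(\vtheta_t,\xi_{t+1})$ survives in the leading inner product; the price is a set of denominator-mismatch terms and a $\nabla F_{\mu}(\vx_t)-\nabla F_{\mu}(\vtheta_t)$ term, each dispatched by Young's inequality plus telescoping. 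Your route keeps $\vtheta_t$ and trades that algebraic cancellation for an explicit recursion on the lag term.

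That recursion, your step (i), is exactly where the sketch is too thin, and its most natural reading fails. If you apply Young's inequality directly to $\beta_1\nabla_i F_{\mu}(\vtheta_{t-1})\vm_{t-1,i}/\sqrt{\beta_2\vv_{t-1,i}+\zeta}$, the residual is $\vm_{t-1,i}^2/\sqrt{\beta_2\vv_{t-1,i}+\zeta}$, which is $\Theta(1)$ per step: it carries no factor of $\eta$ or $1-\beta_2$ (when $\vv_{t-1,i}\approx\vm_{t-1,i}^2$ it is roughly $|\vm_{t-1,i}|/\sqrt{\beta_2}$), so after summing over $t$ and dividing by $\eta T$ it contributes a constant, and no choice of Young parameter rescues it without driving the good term's coefficient negative. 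The workable version must stay signed: unroll $\vm_{t-1}=\beta_1\vm_{t-2}+(1-\beta_1)\vg_{t-1}$ inside the lag term, swap to the $\gF_{t-2}$-measurable denominator, use unbiasedness so the $\vg_{t-1}$ part becomes a nonnegative squared-gradient term that you may simply drop when lower-bounding the cross term, and recurse on the remaining lag with contraction factor $\beta_1/\sqrt{\beta_2}$ (your condition $\vm_{0,i}=0$ makes the base case vanish; note also that both you and the paper really need $\beta_1<\sqrt{\beta_2}$ strictly, since $1-\beta_1/\sqrt{\beta_2}$ ends up in denominators). Crucially, the smoothness and denominator-swap corrections generated at every step of this recursion must themselves be split by Young's inequality with a parameter of order $\eta$, so that both residuals are of $\vm^2/(\vv+\zeta)$ type and the logarithmic telescoping bound applies — giving $\gO(L^2\eta^2 d^2/\lambda)+\gO(\lambda d)\sim\gO(\eps^2)$ for $\lambda\sim\eta$. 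The cruder estimate $|\vm_{t,i}|/\sqrt{\beta_2\vv_{t,i}+\zeta}\le 1/\sqrt{\beta_2(1-\beta_2)}$, tempting at this point, yields an error of order $\eta/(1-\beta_2)=\gO(1)$ under your scalings and sinks the proof. With these two precautions your outline does go through and delivers the claimed $\eps^2$ bound.
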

\textbf{Remark.} Of note, Thm. \ref{thm:grad/v} attains the same rate of $\gO(\frac{1}{\sqrt{T}})$ as \citep{zo-adamm, nazari2020adaptive} to achieve that $\frac{1}{T}\sum_{t=0}^{T-1}\E\left[\frac{\left\|\nabla
F_{\mu}(\vtheta_{t-1})\right\|^2}{\sqrt{\beta_2\left\|\vv_{t}\right\|+\zeta}}\right] \leq \eps$.

\begin{figure*}[t]
    \centering
    \includegraphics[width=\textwidth]{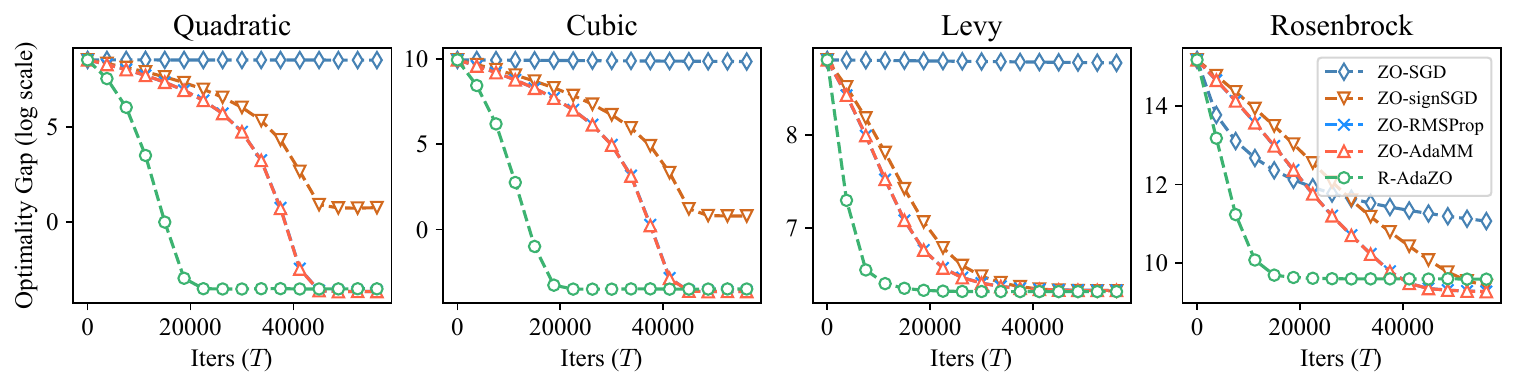}
    \caption{Convergence comparison among different adaptive ZO optimizers for various synthetic functions, in which $y$-axis represents the log-scale optimality gap $F(\vtheta) - \min_{\vtheta'} F(\vtheta')$ and $x$-axis is the number of iterations $T$. Each curve denotes the mean from 3 independent runs.}
    \label{fig:syn}
\end{figure*}

By incorporating Lemma \ref{le:connection}, Lemma \ref{le:v-norm}, and Thm. \ref{thm:grad/v} into Lemma \ref{le:holder}, we derive the following convergence for \ours{} (Thm. \ref{thm:r-adazo}, proof in Appx.~\ref{proof:r-adazo}) and \base{} (Thm. \ref{cor:zo-adamm}, proof in Appx.~\ref{sec:proof:zo-adamm}), respectively.
\begin{theorem}[Informal]\label{thm:r-adazo}
\textit{\fontfamily{ppl}\selectfont
Given Assump.~\ref{assump:1},~\ref{assump:2}, let $ 1-\beta_2 \sim \gO(\eps^2)$, $\eta \sim \gO(\eps^2)$, and $T \sim \gO(\eps^{-4})$. We have the following for \ours{} if $\beta_1 \leq \sqrt{\beta_2}, \beta_2 \geq 1/2$ and $\forall{i} \in [d], \vm_{0,i}\,{=}\,0,\vv_{0,i}>0$,
\begin{equation}
\begin{aligned}
\frac{1}{T}\sum_{t=0}^{T-1}\E\big[\left\|\nabla
F(\vtheta_t)\right\|\big] & {\leq} \frac{(1+\beta_1)\sqrt{d}}{\sqrt{\beta_1(1-\beta_2)}} \eps^2 {+} \left(\sqrt[4]{\zeta} {+} \sqrt{Vd}\right) \eps \\
&\qquad\quad + \mu L\sqrt{d}
\end{aligned}
\end{equation}
where $V$ is defined in \eqref{eq:V2}.
}
\end{theorem}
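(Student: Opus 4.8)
The plan is to assemble the three ingredients already in place---the H\"older decomposition of Lemma~\ref{le:holder}, the bound on $\circled{A}$ from Lemma~\ref{le:v-norm}, and the bound on $\circled{B}$ from Thm.~\ref{thm:grad/v}---and then transfer the resulting estimate from $F_{\mu}$ back to $F$ via Lemma~\ref{le:connection}. For brevity I would abbreviate $G \triangleq \frac{1}{T}\sum_{t=0}^{T-1}\E[\|\nabla F_{\mu}(\vtheta_t)\|]$, which is exactly the quantity on the left of Lemma~\ref{le:holder} and also the quantity that reappears inside the $\circled{A}$ bound of Lemma~\ref{le:v-norm}.

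First I would feed the two component bounds into Lemma~\ref{le:holder}. Under the stated schedule $1-\beta_2\sim\gO(\eps^2)$, $\eta\sim\gO(\eps^2)$, $T\sim\gO(\eps^{-4})$ together with $\beta_1\le\sqrt{\beta_2}$, $\beta_2\ge 1/2$, $\vm_{0,i}=0$, $\vv_{0,i}>0$, Thm.~\ref{thm:grad/v} gives $\circled{B}\le\eps^2$, while Lemma~\ref{le:v-norm} gives $\circled{A}\le\sqrt{\zeta}+Vd+cG$ with $c\triangleq\frac{(1+\beta_1)\sqrt{d}}{\sqrt{\beta_1(1-\beta_2)}}$. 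Substituting both into $G^2\le\circled{A}\,\circled{B}$ produces the self-referential quadratic inequality $G^2\le(\sqrt{\zeta}+Vd+cG)\,\eps^2$.

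The crux of the argument---and the step I expect to be the main obstacle---is that $\circled{A}$ itself contains the target quantity $G$, so the product bound does not close on its own. I would rewrite the inequality as $G^2-c\eps^2 G-(\sqrt{\zeta}+Vd)\eps^2\le 0$, view it as a quadratic in $G$, and take the nonnegative root. Applying subadditivity of the square root, $\sqrt{x+y}\le\sqrt{x}+\sqrt{y}$, to split $\sqrt{c^2\eps^4+4(\sqrt{\zeta}+Vd)\eps^2}$ then collapses the root to the clean bound $G\le c\eps^2+\sqrt{\sqrt{\zeta}+Vd}\,\eps$. A second application of subadditivity, $\sqrt{\sqrt{\zeta}+Vd}\le\sqrt[4]{\zeta}+\sqrt{Vd}$, matches the first two terms of the theorem exactly.

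Finally I would pass from $F_{\mu}$ to $F$: by the triangle inequality $\|\nabla F(\vtheta_t)\|\le\|\nabla F_{\mu}(\vtheta_t)\|+\|\nabla F(\vtheta_t)-\nabla F_{\mu}(\vtheta_t)\|$, so averaging over $t$, taking expectations, and invoking the $\mu L\sqrt{d}$ bound of Lemma~\ref{le:connection} appends precisely the additive $\mu L\sqrt{d}$ term. Since everything beyond the quadratic solve is arithmetic, the only real care needed is tracking the constant $c$ through the root extraction and ensuring each subadditivity step is applied in the correct direction.
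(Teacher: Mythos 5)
Your proposal is correct and takes essentially the same route as the paper's own proof: both feed the bounds of Lemma~\ref{le:v-norm} and Thm.~\ref{thm:grad/v} into Lemma~\ref{le:holder}, solve the resulting self-referential quadratic inequality in $\frac{1}{T}\sum_{t=0}^{T-1}\E[\|\nabla F_{\mu}(\vtheta_t)\|]$ (your explicit root extraction and two subadditivity steps are exactly what the paper compresses into ``applying the formula for the root of square equation''), and then pass from $F_{\mu}$ to $F$ via Lemma~\ref{le:connection}. If anything, your write-up is slightly more complete, since the paper's appendix stops at the $F_{\mu}$ bound and leaves the final triangle-inequality step producing the $\mu L\sqrt{d}$ term implicit.
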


\begin{theorem}[Informal]\label{cor:zo-adamm}
\textit{\fontfamily{ppl}\selectfont
Given Assump.~\ref{assump:1},~\ref{assump:2}, let $1-\beta_2 \sim \gO(\eps^2)$, $\eta \sim \gO(\eps^2)$, and $T \sim \gO(\eps^{-4})$. We have the following for \base{} if $\beta_1 \leq \sqrt{\beta_2}, \beta_2 \geq 1/2$ and $\forall{i} \in [d], \vm_{0,i}\,{=}\,0,\vv_{0,i}>0$,
\begin{equation}
\begin{aligned}
\frac{1}{T}\sum_{t=0}^{T-1}\E\big[\left\|\nabla
F(\vtheta_t)\right\|\big] & {\leq} \frac{(1+\beta_1)\sqrt{d}}{\sqrt{\beta_1(1-\beta_2)}} \eps^2 {+} \left(\sqrt[4]{\zeta} {+} \sqrt{\hat{V}d}\right) \eps \\
&\qquad\quad + \mu L\sqrt{d} 
\end{aligned}
\end{equation}
where $\hat{V}^2 \triangleq \left\|\vv_0\right\| + \underbrace{(1+\beta_1)\Sigma^2}_{\normalfont\text{Variance}}$.}
\end{theorem}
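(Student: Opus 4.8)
The plan is to reprise the convergence proof of \ours{} (Thm.~\ref{thm:r-adazo}) almost verbatim, exploiting the fact that \base{} and \ours{} differ only in how the second moment $\vv_t$ is formed ($\vg_t^2$ in \eqref{eq:xcxv} versus $\vm_t^2$). Consequently the entire variance-aware framework---the H\"older decomposition of Lemma~\ref{le:holder} together with the control of its two factors $\circled{A}$ and $\circled{B}$---carries over unchanged, and the only algorithm-specific ingredient is the second-moment upper bound, where I would substitute Cor.~\ref{cor:v-hat} (valid for the \base{} recursion) in place of Thm.~\ref{thm:v}. Writing $G \triangleq \frac{1}{T}\sum_{t=0}^{T-1}\E\left[\left\|\nabla F_{\mu}(\vtheta_t)\right\|\right]$, Lemma~\ref{le:holder} yields $G^2 \leq \circled{A}\cdot\circled{B}$, so the proof reduces to bounding each factor and then solving the resulting scalar inequality.

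For $\circled{A}$, I would run the argument of Lemma~\ref{le:v-norm} but feed it the bound \eqref{eq:csvg} of Cor.~\ref{cor:v-hat}. The key observation is that the gradient-dependent summation term $\frac{(1+\beta_1)^2}{\beta_1}\sum_{\tau}(1-\beta_2)\beta_2^{t-\tau}\E[|\nabla_i F_{\mu}(\vtheta_{\tau-1})|^2]$ is \emph{identical} in \eqref{eq:sv} and \eqref{eq:csvg}, so the coefficient $\frac{(1+\beta_1)\sqrt{d}}{\sqrt{\beta_1(1-\beta_2)}}$ multiplying $G$ is unchanged; only the additive constant differs, giving $\circled{A} \leq \sqrt{\zeta} + \hat{V}d + \frac{(1+\beta_1)\sqrt{d}}{\sqrt{\beta_1(1-\beta_2)}}G$ with $\hat{V}$ replacing $V$. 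For $\circled{B}$ I would establish the \base{} analogue of Thm.~\ref{thm:grad/v}, namely $\circled{B}\leq\eps^2$ under the stated scaling $1-\beta_2\sim\gO(\eps^2)$, $\eta\sim\gO(\eps^2)$, $T\sim\gO(\eps^{-4})$.

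Combining the two bounds turns $G^2\leq\circled{A}\cdot\circled{B}$ into the quadratic inequality $G^2 - \frac{(1+\beta_1)\sqrt{d}}{\sqrt{\beta_1(1-\beta_2)}}\eps^2\,G - (\sqrt{\zeta}+\hat{V}d)\eps^2 \leq 0$. Solving it with the quadratic formula and the subadditivity $\sqrt{a+b}\leq\sqrt{a}+\sqrt{b}$---applied once as $\sqrt{b^2+4c}\leq b+2\sqrt{c}$ and once to split $\sqrt{\sqrt{\zeta}+\hat{V}d}$ into $\sqrt[4]{\zeta}+\sqrt{\hat{V}d}$---yields $G\leq\frac{(1+\beta_1)\sqrt{d}}{\sqrt{\beta_1(1-\beta_2)}}\eps^2+(\sqrt[4]{\zeta}+\sqrt{\hat{V}d})\eps$. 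Finally I transfer from $F_{\mu}$ to $F$: since $\|\nabla F(\vtheta_t)\|\leq\|\nabla F_{\mu}(\vtheta_t)\|+\|\nabla F(\vtheta_t)-\nabla F_{\mu}(\vtheta_t)\|$, averaging and invoking \eqref{eq:vine} of Lemma~\ref{le:connection} adds $\mu L\sqrt{d}$, reproducing the claimed bound exactly.

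I expect the one genuinely new step to be the $\circled{B}\leq\eps^2$ bound (the \base{} analogue of Thm.~\ref{thm:grad/v}), because the denominator $\sqrt{\beta_2\|\vv_t\|+\zeta}$ is now built from $\vg_t^2$ rather than $\vm_t^2$, so the coupling between the update direction $\vm_t/\sqrt{\vv_t+\zeta}$ and its own scaling differs from the \ours{} case. I would handle it by a telescoping descent argument on $F_{\mu}$ using its coordinate-wise smoothness \eqref{eq:smoothness-fu}, controlling the first-moment bias through the variance--bias decomposition of Thm.~\ref{thm:vr} and the raw gradient-estimate variance through \eqref{eq:var-fu}; the prescribed parameter scaling then collapses the accumulated error to $\gO(\eps^2)$. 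Everything else is a mechanical re-run of the \ours{} proof with $V\mapsto\hat{V}$.
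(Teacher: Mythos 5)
Your overall architecture coincides with the paper's proof: the H\"older decomposition of Lemma~\ref{le:holder}, the bound on $\circled{A}$ obtained by rerunning Lemma~\ref{le:v-norm} with Cor.~\ref{cor:v-hat} substituted for Thm.~\ref{thm:v} (which, as you note, leaves the coefficient of $G$ unchanged and only replaces the additive constant $V$ by $\hat V$), a bound $\circled{B}\le\eps^2$ analogous to Thm.~\ref{thm:grad/v}, the quadratic-formula step, and the transfer from $F_\mu$ to $F$ via \eqref{eq:vine} of Lemma~\ref{le:connection}. You also correctly isolate the only genuinely new work in the $\circled{B}$ bound.

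However, your plan for that one new step has a real gap. You propose to close it with the statistical tools---Thm.~\ref{thm:vr} and \eqref{eq:var-fu}---but the actual obstruction is algebraic, not stochastic. The \ours{} proof of Thm.~\ref{thm:grad/v} relies repeatedly on the pointwise domination $\vv_{t,i}\ge(1-\beta_2)\vm_{t,i}^2$, which holds for the update \eqref{eq:2nd-mo} but is false for the \base{} update \eqref{eq:xcxv}, where $\vv_t$ dominates $(1-\beta_2)\vg_t^2$ instead. Concretely, three ingredients of your ``mechanical re-run'' break: the inequality $\left|\vm_{t,i}\right|/\sqrt{\vv_{t,i}+\zeta}\le 1/\sqrt{1-\beta_2}$ used in bounding $\circled{2}$ and $\circled{3}$, and the log-telescoping estimate \eqref{eq:vebe}--\eqref{eq:vmele} for $\sum_t \vm_{t,i}^2/(\vv_{t,i}+\zeta)$ used for $\circled{4}$ and the smoothness quadratic term. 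The paper repairs all of these with two Cauchy--Schwarz lemmas imported from \citep{wang2024closing}: Lemma~\ref{lem:m_div_v}, giving $\left|\vm_{t,i}\right|/\sqrt{\vv_{t,i}+\zeta}\le(1-\beta_1)/\bigl(\sqrt{1-\beta_2}\sqrt{1-\beta_1^2/\beta_2}\bigr)$, and Lemma~\ref{lem:m2_div_v}, which transfers the logarithmic bound from $\vg_{t,i}^2/\vv_{t,i}$ (where it does hold for \base{}) to $\vm_{t,i}^2/\vv_{t,i}$ by expanding $\vm_t$ over past gradients and exchanging the order of summation; this is also where the hypothesis $\beta_1\le\sqrt{\beta_2}$ does essential extra work, since $1-\beta_1^2/\beta_2$ now appears in denominators. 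Your cited tools cannot substitute for these: Thm.~\ref{thm:vr} and \eqref{eq:var-fu} bound expectations of estimation error, whereas what is needed is an almost-sure relation between $\vm_t^2$ and a $\vv_t$ built from squared raw gradients. Without these two ratio lemmas (or an equivalent), the telescoping descent argument you sketch cannot be closed, so they are the missing bridge you would need to state and prove.
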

\textbf{Remark.} To the best of our knowledge, our Thm.~\ref{thm:r-adazo} and Thm.~\ref{cor:zo-adamm} are the first analyses to explicitly incorporate the impact of second moment estimate (measured by $V$ or $\hat{V}$) that is variance-dependent into the convergence of adaptive ZO methods. Specifically, Thm. \ref{thm:r-adazo} and Thm.~\ref{cor:zo-adamm} demonstrate that the convergence of $\frac{1}{T}\sum_{t=0}^{T-1}\E\big[\left\|\nabla
F(\vtheta_t)\right\|\big]$ typically exhibits a dependence of $\gO(\sqrt{V}\eps)$ in adaptive ZO methods, highlighting the importance of an improved second moment estimate with reduced variance. This explains the advantage of \ours{} over other adaptive ZO methods like \base{} thanks to our refined second moment estimate \eqref{eq:2nd-mo} achieving a reduction of at most $\frac{1-\beta_1}{1+\beta_1}$ on $V$. Moreover, Thm.~\ref{thm:r-adazo} implies that a larger $\beta_1$ typically leads to an improved convergence of \ours{} especially when variance $\Sigma^2$ dominates. Comparing Thm. \ref{thm:r-adazo} and Cor. \ref{cor:zo-adamm}, we observe that \ours{} achieves a speedup of $\gO\left(\sqrt[4]{\frac{1+\beta_1}{1-\beta_1}}\right)$ for the convergence of averaged gradient norm primarily when variance $\Sigma^2$ dominates. These will be further empirically supported by the experiments in our Sec.~\ref{sec:exps} below.

\section{Experiments}\label{sec:exps}
In this section, we conduct extensive experiments on various tasks in practice, including synthetic functions (Sec.~\ref{sec:syn}), black-box adversarial attack (Sec.~\ref{sec:attack}), as well as memory-efficient fine-tuning of large language models (Sec.~\ref{sec:tuning}), to empirically support the superior efficacy of our \ours{} (Algo.~\ref{alg:ours}).

\subsection{Synthetic Functions}\label{sec:syn}

\begin{figure}[t]
\centering
\includegraphics[width=0.49\textwidth]{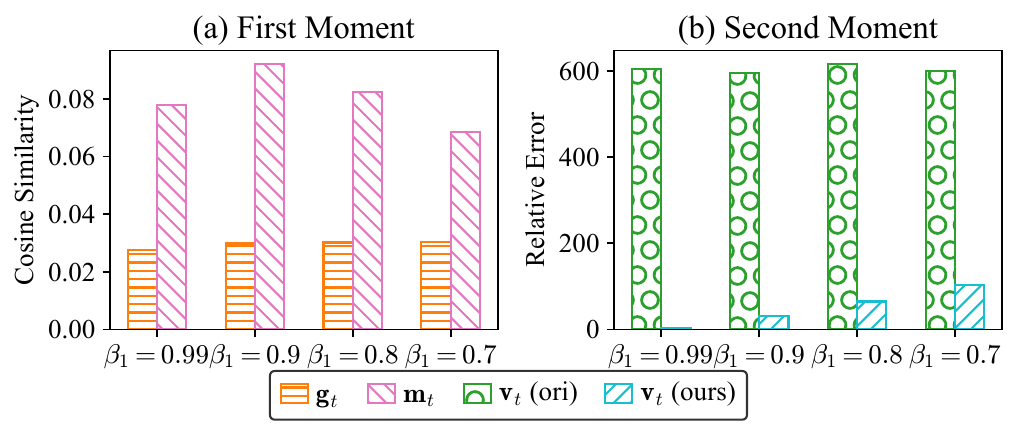}
\caption{Effects of (a) first and (b) second moment under varying $\beta_1$ during the convergence of Quadratic function. Here, $\vg_t$ and $\vm_t$ corresponds to the results from the estimated gradient in \eqref{eq:fd} and the first moment in \eqref{eq:1st-mo}, and $\vv_t$ (ori) and $\vv_t$ (ours) are results of the second moment estimates defined in \eqref{eq:xcxv} and \eqref{eq:2nd-mo} respectively. The $y$-axis in (a) represents the cosine similarity between $\vg_t$ or $\vm_t$ and the true gradient $\nabla F(\vtheta_{t-1})$, while the $y$-axis in (b) denotes the relative error between $\vv_t$ in \eqref{eq:xcxv} or \eqref{eq:2nd-mo} and the $\vv_t$ computed using the square of the true gradient $\nabla F(\vtheta_{t-1})$.}
\label{fig:moments}
\vspace{-3mm}
\end{figure}

\begin{figure*}[t]
\centering
\includegraphics[width=\textwidth]{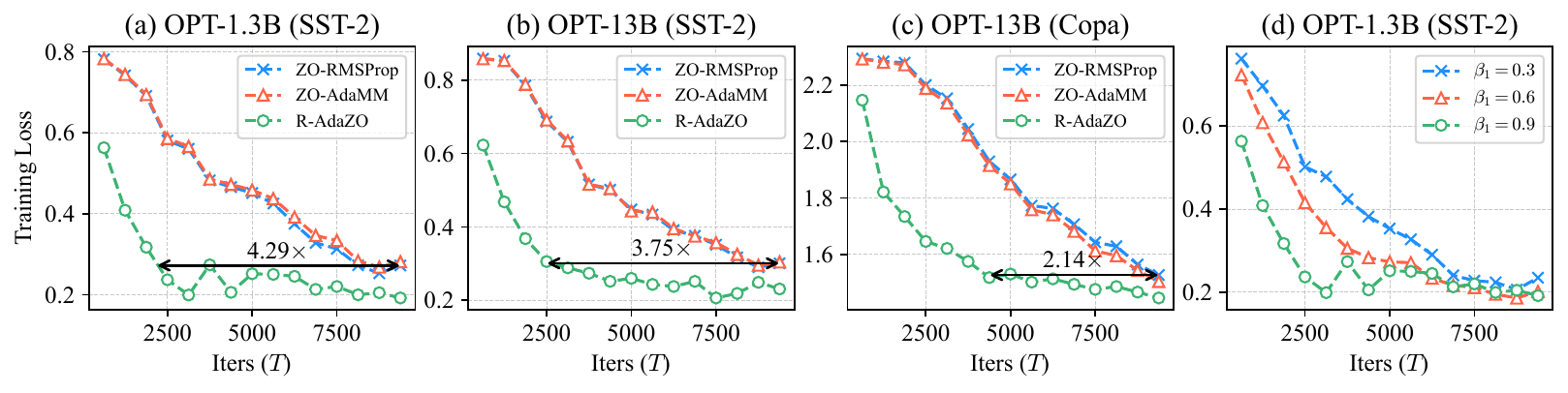}
\caption{(a-c) Training loss comparison among various adaptive ZO optimizers for the fine-tuning of LLMs under different model sizes and different dataset. (d) Training loss comparison of varying $\beta_1$ in \ours{} on OPT-1.3B and SST-2 \citep{sst2}. Each curve denotes the mean from 3 independent runs.}
\label{fig:mezo}
\end{figure*}

\textbf{On Convergence.} We first compare the convergence of \ours{} with \zosgd{}, \zosignsgd{} \citep{zo-signsgd}, \base{}, and \rms{}, an integration of RMSProp \citep{rmsprop} and ZO gradient estimator, using four synthetic functions with $d{=}10^4$, including Quadratic, Cubic, Levy, and Rosenbrock function. We refer to Appx.~\ref{appx:syn} for more details. The results are in Fig. \ref{fig:syn}, showing that \ours{} generally achieves significantly faster convergence while maintaining competitive optimality gaps compared to other baselines. The consistent gain of \ours{} across most synthetic functions suggests that \ours{} is robust to the structure of the underlying problem. Furthermore, Fig. \ref{fig:syn} reveals a notable similarity in the convergence behavior of \base{} and \rms{} across all four benchmark functions. In contrast, \ours{} consistently demonstrates a substantial speedup compared to \rms{}. These results imply that the first moment itself contributes minimally to the convergence gains for adaptive ZO optimization, and underscores the critical role of our refined second moment estimate in achieving the superior performance of \ours{}.

\textbf{On First Moment.} 
We further conduct an experimental analysis to understand how $\beta_1$ affects first moment estimates during the optimization process of the Quadratic function. In Fig.~\ref{fig:moments} (a), we present the results, using cosine similarity to measure the alignment between the estimated gradient $\vg_t$ or the estimated first moment $\vm_t$, and the true gradient $\nabla F(\vtheta_{t-1})$. The results indicate that the estimated first moment $\vm_t$ exhibits better cosine similarity than $\vg_t$, resulting from its variance reduction effect, as proven in Thm.~\ref{thm:vr}.  Moreover, we observe that increasing $\beta_1$ generally enhances this variance reduction. However, excessively high values of $\beta_1$ result in a minor decrease in similarity. This trend is consistent with the trade-off discussed in Thm.~\ref{thm:vr}.

\textbf{On Second Moment.} We further conduct an experimental analysis to understand how $\beta_1$ affects second moment estimates during the optimization process of the Quadratic function. Figure~\ref{fig:moments}(b) compares the second moment estimates, $\vv_t(\text{ori})$ from \eqref{eq:xcxv} and $\vv_t(\text{ours})$ from \eqref{eq:2nd-mo}, using the relative error against the second moment estimate based on the squared ground truth $(\nabla F(\vtheta_{t-1}))^2$. The results demonstrate that our refined second moment estimate, $\vv_t(\text{ours})$, significantly reduces the relative error compared to the standard second moment estimate, $\vv_t(\text{ori})$, which therefore enables the capture of more accurate geometry information during optimization. Interestingly, increasing values of $\beta_1$ generally lead to a lower relative error, a trend that contrasts with the behavior of first moment estimates. This lack of a trade-off is likely due to the loose bound we derived for our refined second moment.


\begin{table}[t]
\caption{
Comparison of the number of iterations to achieve a successful black-box adversarial attack.
Each cell represents mean $\pm$ standard deviation from five independent runs.
}
\label{tab:attack}
\centering
\resizebox{\columnwidth}{!}{
\begin{tabular}{lccc}
\toprule
Measurement & \rms{} & \base{}  & \ours{} \\
\midrule 
\# Iters ($\times 10^3$) & 15.6$\pm$3.2 & 15.5$\pm$4.1 & \textbf{2.9}$\pm$\textbf{0.8} \\
Speedup & 1.0$\times$ & 1.0$\times$ & \textbf{5.4}$\times$ \\
\bottomrule
\end{tabular}
}
\vspace{-3mm}
\end{table}

\subsection{Black-Box Adversarial Attack}\label{sec:attack}

We further investigate the performance of \ours{} within the realm of black-box adversarial attacks, a key application area for zeroth-order optimization \citep{prgf, zord}. In this context, the objective is to find an optimal perturbation $\delta$ that causes a black-box model to misclassify an input image $\rvx$ when perturbed to $\rvx + \delta$. Following the practice in \citep{zord}, we also present a comparative analysis of the number of iterations required for successful black-box adversarial attacks on an image from the MNIST dataset \citep{lecun1998mnist}, using \rms{}, \base{}, and \ours{} in Tab.~\ref{tab:attack} (experimental setup in Appx.~\ref{appx:attack}).  As shown in the table, \rms{} and \base{} exhibit similar performance, requiring an average of approximately 15.6 and 15.5 thousand iterations, respectively. The standard deviations of the iteration counts were similar as well, about 3200 to 4100 iterations. These align with our results in Sec.~\ref{sec:syn}. On the other hand, \ours{} requires a significantly lower number of iterations with an average of only 2900, and a smaller standard deviation of 800 iterations, suggesting a faster and more stable convergence. The speedup achieved by \ours{}, i.e., a speedup of 5.4$\times$ compared to the baseline \rms{}, is also highlighted in Tab.~\ref{tab:attack}. These findings thus further underscore the superior efficacy of \ours{}.

\subsection{Memory-Efficient LLM Fine-Tuning}\label{sec:tuning}

Recent interest in memory-efficient fine-tuning of large language models using ZO optimization \citep{mezo, revisit-mezo} motivates our use of this setting to further demonstrate the superiority of \ours{} over other adaptive ZO optimization algorithms (experimental setup in Appx.~\ref{appx:tuning}). The results in Fig.~\ref{fig:mezo}(a-c) show that, for both OPT-1.3B and OPT-13B models \citep{zhang2022opt} and dataset SST-2 \citep{sst2} and Copa \citep{copa}, \ours{} converges significantly faster than \rms{} and \base{} on SST-2 dataset, achieving a speedup of 4.29$\times$ for OPT-1.3B and 3.75$\times$ for OPT-13B to reach the same training loss. The optimization curves of \rms{} and \base{} are indistinguishable, indicating the similar convergence behavior we have seen in Sec.~\ref{sec:syn} and Sec.~\ref{sec:attack}. These empirical results strongly support \ours{} as a more efficient and effective adaptive ZO optimizer. In addition, the results in Fig.~\ref{fig:mezo}(d) show that \ours{} typically enjoys an improved convergence when a larger $\beta_1$ is applied, which aligns with the insights provided by our Thm.~\ref{thm:r-adazo}.

\section{Conclusion}
In conclusion, this work introduces \ours{}, a novel approach that addresses the critical limitations of existing adaptive ZO methods by effectively leveraging moment information. Through rigorous theoretical analysis, we have demonstrated the inherent variance reduction effect of first moment estimates on ZO gradient estimates, leading to more stable and accurate updates, as well as the improved accuracy of our refined second moment estimates. Furthermore, we establish the first variance-aware convergence framework for adaptive ZO methods and prove the superior convergence rate of \ours{}. The consistent empirical performance gains of \ours{} across diverse applications underscore its potential as a powerful and practical solution for real-world ZO optimization challenges. 



\section*{Impact Statement}
This paper presents work whose goal is to advance the field of zeroth-order optimization and machine learning. There are many potential societal consequences of our work, none of which we feel must be specifically highlighted here.

\bibliography{workspace/reference}
\bibliographystyle{icml2025}

\clearpage
\appendix
\onecolumn
\begin{appendices}
\onecolumn

\section{Proofs}
\subsection{Proof of Lemma \ref{le:connection}}\label{proof:connection}
Based on the definition of $\hat{\nabla} f(\vtheta, \xi)$ in \eqref{eq:fd}, we first prove \eqref{eq:skvj} in Lemma \ref{le:connection} as below,
\begin{equation}
\begin{aligned}
\E\left[\hat{\nabla} f(\vtheta, \xi)\right] &= \frac{d}{K}\sum_{k=1}^K\E_{\vu_k \sim \sS^{d-1}}\left[\E_{\xi}\left[\frac{f(\vtheta + \mu \vu_k; \xi) - f(\vtheta; \xi)}{\mu} \vu_k\right]\right] \\
&= \frac{d}{K}\sum_{k=1}^K\left(\E_{\vu_k \sim \sS^{d-1}}\left[\frac{F(\vtheta + \mu \vu_k)}{\mu} \vu_k\right] - \E_{\vu_k \sim \sS^{d-1}}\left[\frac{F(\vtheta)}{\mu}\vu_k\right]\right) \\
&= \frac{1}{K}\sum_{k=1}^K \nabla F_{\mu}(\vtheta) \\[8pt]
&= \nabla F_{\mu}(\vtheta)
\end{aligned}
\end{equation}
where the third equality is due to the fact that $\E_{\vu_k \sim \sS^{d-1}}\left[\frac{F(\vtheta + \mu \vu_k)}{\mu} \vu_k\right] = \frac{\nabla F_{\mu}(\vtheta)}{d}$, which comes from Lemma 1 in \citep{flaxman2004online} with $F_{\mu}$ defined on a unit ball $\sB^d$ as shown in \eqref{eq:fu}.

We then prove \eqref{eq:vine} in Lemma \ref{le:connection} as below,
\begin{equation}
\begin{aligned}
\E\left[\left\|\nabla
F(\vtheta) - \nabla
F_{\mu}(\vtheta)\right\|\right] &\stackrel{(a)}{=} \E\left[\big\|\E_{\vu \sim \sB^d}\left[\nabla F(\vtheta) - \nabla F(\vtheta + \mu \vu)\right]\big\|\right] \\[5pt]
&\stackrel{(b)}{\leq} \E\left[\big\|\nabla F(\vtheta) - \nabla F(\vtheta + \mu \vu)\big\|\right] \\[2pt]
&\stackrel{(c)}{\leq} \E\left[\mu L \sqrt{d} \left\|\vu\right\|\right] \\
&\stackrel{(d)}{=} \mu L \sqrt{d}
\end{aligned}
\end{equation}
where $(a)$ comes from the Leibniz's Rule, $(b)$ results from Jensen's inequality, $(c)$ is based on Assump. \ref{assump:1}, and $(d)$ is due to the fact that $\left\|\vu\right\| \leq 1$. We therefore conclude our proof for Lemma \ref{le:connection}.

\subsection{Proof of Lemma \ref{le:lips}\label{proof:lips}}

With Leibniz's Rule, Jensen's inequality, and $(d)$ Assump. \ref{assump:1}, the following holds for \eqref{eq:smoothness-fu} in Lemma \ref{le:lips}:
\begin{equation}
\begin{aligned}
\left|\nabla_i F_{\mu}(\vtheta) - \nabla_i F_{\mu}(\vtheta')\right| &= \Big|\nabla_i \E_{\vu \sim \sB^d}\left[F(\vtheta+\mu \vu)  - F(\vtheta'+\mu \vu)\right]\Big| \\
&= \Big|\E_{\vu \sim \sB^d}\left[ \nabla_i F(\vtheta+\mu \vu) - \nabla_i F(\vtheta'+\mu \vu)\right]\Big| \\[3pt]
&\leq \E_{\vu \sim \sB^d}\left[\left|\nabla_i F(\vtheta+\mu \vu) - \nabla_i F(\vtheta'+\mu \vu)\right|\right] \\[7pt]
&\leq L\left\|\vtheta - \vtheta'\right\| \ .
\end{aligned}
\end{equation}

We finally prove \eqref{eq:var-fu} in Lemma \ref{le:lips} as below,
\begin{equation}
\begin{aligned}
&\E\left[\left|\hat{\nabla}_i f(\vtheta, \xi) - \nabla_i F_{\mu}(\vtheta)\right|^2\right] \\
\stackrel{(a)}{=}\ & \frac{d^2}{K^2}\E\left[\left(\sum_{k=1}^K \left(\frac{f(\vtheta + \mu\vu_k, \xi)}{\mu}\vu_{k,i} - \E_{\vu_k \sim \sS^{d-1}}\left[\frac{F(\vtheta + \mu \vu_k)}{\mu} \vu_{k,i}\right]\right) - \left(\frac{f(\vtheta, \xi)}{\mu}\vu_{k,i} -  \E_{\vu_k \sim \sS^{d-1}}\left[\frac{F(\vtheta)}{\mu}\vu_{k,i}\right]\right)\right)^2\right] \\
\stackrel{(b)}{=}\ & \frac{d^2}{K^2}\sum_{k=1}^K\E\left[\left(\left(\frac{f(\vtheta + \mu\vu_k, \xi)}{\mu}\vu_{k,i} - \E_{\vu_k \sim \sS^{d-1}}\left[\frac{F(\vtheta + \mu \vu_k)}{\mu} \vu_{k,i}\right]\right) - \left(\frac{f(\vtheta, \xi)}{\mu}\vu_{k,i} -  \E_{\vu_k \sim \sS^{d-1}}\left[\frac{F(\vtheta)}{\mu}\vu_{k,i}\right]\right)\right)^2\right] \\
\stackrel{(c)}{\leq}\ & \frac{2d^2}{K^2}\sum_{k=1}^K\E\left[\left(\frac{f(\vtheta + \mu\vu_k, \xi) - F(\vtheta + \mu\vu_k)}{\mu}\vu_{k,i} + \frac{F(\vtheta + \mu\vu_k)}{\mu}\vu_{k,i}- \E_{\vu_k \sim \sS^{d-1}}\left[\frac{F(\vtheta + \mu \vu_k)}{\mu} \vu_{k,i}\right]\right)^2\right] + \\
&\qquad \qquad \E\left[\left(\frac{f(\vtheta, \xi) - F(\vtheta)}{\mu}\vu_{k,i} + \frac{F(\vtheta)}{\mu}\vu_{k,i}-  \E_{\vu_k \sim \sS^{d-1}}\left[\frac{F(\vtheta)}{\mu}\vu_{k,i}\right]\right)^2\right] \\
\stackrel{(d)}{=}\ & \frac{4d^2}{K^2}\sum_{k=1}^K\E\left[\left(\frac{f(\vtheta + \mu\vu_k, \xi) - F(\vtheta + \mu\vu_k)}{\mu}\vu_{k,i}\right)^2 + \left(\frac{F(\vtheta + \mu\vu_k)}{\mu}\vu_{k,i}- \E_{\vu_k \sim \sS^{d-1}}\left[\frac{F(\vtheta + \mu \vu_k)}{\mu} \vu_{k,i}\right]\right)^2\right] + \\
&\qquad \qquad \E\left[\left(\frac{f(\vtheta, \xi) - F(\vtheta)}{\mu}\vu_{k,i}\right)^2 + \left(\frac{F(\vtheta)}{\mu}\vu_{k,i}-  \E_{\vu_k \sim \sS^{d-1}}\left[\frac{F(\vtheta)}{\mu}\vu_{k,i}\right]\right)^2\right] \\
\stackrel{(e)}{\leq}\ & \frac{4d^2}{K^2}\sum_{k=1}^K\E\left[\frac{\sigma^2}{\mu^2}\vu_{k,i}^2 + \left(\frac{F(\vtheta + \mu\vu_k)}{\mu}\vu_{k,i}\right)^2 + \frac{\Sigma^2}{\mu^2}\vu_{k,i}^2 + \left(\frac{F(\vtheta)}{\mu}\vu_{k,i}\right)^2\right] \\[6pt]
\stackrel{(f)}{\leq}\ &\frac{8(\sigma^2 + C^2)d }{\mu^2 K} \ .
\end{aligned}
\end{equation}
Here, $(a)$ is due to the fact that $\E_{\vu_k \sim \sS^{d-1}}\left[\frac{F(\vtheta + \mu \vu_k)}{\mu} \vu_k\right] = \frac{\nabla F_{\mu}(\vtheta)}{d}$, which comes from Lemma 1 in \citep{flaxman2004online} with $F_{\mu}$ defined on a unit ball $\sB^d$ as shown in \eqref{eq:fu}, and the fact that $\E_{\vu_k \sim \sS^{d-1}}\left[\frac{F(\vtheta)}{\mu}\vu_{k,i}\right] = \vzero$. In addition, $(b)$ comes from the independence among $\{\vu_k\}_{k=1}^K$, and $(c), (d)$ are from Cauchy-Schwarz inequality. Besides, $(e)$ results from Assump. \ref{assump:2} and the definition of variance. Finally, $(f)$ is due to Assump. \ref{assump:1} and the fact that $\E\left[\vu_{k,i}^2\right] = 1/d$. We therefore conclude our proof for Lemma \ref{le:lips}.

\subsection{Proof of Thm. \ref{thm:vr}}\label{proof:vr}

We first show the following variance reduction effect in first moment estimate based on the definition of $\Sigma^2$ in \eqref{eq:sigma2}:
\begin{equation} \label{eq:vdms}
\begin{aligned}
\E\left[\left|\vm_{t,i} - \E[\vm_{t,i}]\right|^2\right] &\stackrel{(a)}{=} \E\left[\left|(1-\beta_1)\sum_{\tau=1}^{t}\beta_1^{t-\tau}\left(\hat{\nabla}_{i} f(\vtheta_{\tau-1}, \xi_{\tau}) - \nabla_{i} F_{\mu}(\vtheta_{\tau-1})\right)\right|^2\right] \\
&\stackrel{(b)}{=} (1-\beta_1)^2\sum_{\tau=1}^{t}\beta_1^{2(t-\tau)}\E\left[\left|\hat{\nabla}_{i} f(\vtheta_{\tau-1}, \xi_{\tau}) - \nabla_{i} F_{\mu}(\vtheta_{\tau-1})\right|^2\right] \\[4pt]
&\stackrel{(c)}{\leq} \frac{(1-\beta_1)(1-\beta_1^{2t})}{1 + \beta_1} \Sigma^2 \\[5pt]
&\stackrel{(d)}{\leq} \frac{1-\beta_1}{1 + \beta_1} \Sigma^2
\end{aligned}
\end{equation}
where $(b)$ comes from the independence among $\{\xi_{\tau}\}_{\tau=1}^t$ and $(c)$ results from Lemma \ref{le:lips}.

\textbf{Remark.} As suggested by \eqref{eq:vdms}, the standard bias correction term (i.e., $1-\beta_1^t$) in Adam \citep{adam} is intentionally excluded to avoid compromising the variance reduction effect.

We then show the bias in the first moment estimate as below,
\begin{equation}
\begin{aligned}
&\E\left[\left|\frac{1}{1 - \beta_1^t}\E[\vm_{t,i}] - \nabla_i F_{\mu}(\vtheta_{t-1})\right|^2\right] \\
\stackrel{(a)}{=}\,\,& \E\left[\left|\frac{(1 - \beta_1)}{1 - \beta_1^t} \sum_{\tau=1}^t \beta_1^{t-\tau}\left(\nabla_i F_{\mu}(\vtheta_{\tau-1}) - \nabla_i F_{\mu}(\vtheta_{t-1})\right)\right|^2\right] \\
\stackrel{(b)}{=}\,\,&\left(\frac{1-\beta_1}{1-\beta_1^t}\right)^2\sum_{\tau,\tau'=1}^t\E\left[\Big\langle \beta_1^{t-\tau}(\nabla_i F_{\mu}(\vtheta_{\tau-1}) - \nabla_i F_{\mu}(\vtheta_{t-1})), \beta_1^{t-\tau'}(\nabla_i F_{\mu}(\vtheta_{\tau'-1}) - \nabla_i F_{\mu}(\vtheta_{t-1}))\Big\rangle\right] \\
\stackrel{(c)}{\leq}\,\,& \left(\frac{1-\beta_1}{1-\beta_1^t}\right)^2\sum_{\tau,\tau'=1}^t \frac{\beta_1^{2t-\tau-\tau'}}{2}\left(\E\left[\left|\nabla_i F_{\mu}(\vtheta_{\tau-1}) - \nabla_i F_{\mu}(\vtheta_{t-1}))\right|^2\right] + \E\left[\left|\nabla_i F_{\mu}(\vtheta_{\tau'-1}) - \nabla_i F_{\mu}(\vtheta_{t-1}))\right|^2\right]\right) \\
\stackrel{(d)}{=}\,\,& \left(\frac{1-\beta_1}{1-\beta_1^t}\right)^2\sum_{\tau=1}^t \frac{\beta_1^{t-\tau}(1 - \beta_1^t)}{1-\beta_1} \E\left[\left|\nabla_i F_{\mu}(\vtheta_{\tau-1}) - \nabla_i F_{\mu}(\vtheta_{t-1}))\right|^2\right] \\
\stackrel{(e)}{\leq}\,\,&\frac{(1-\beta_1)L^2}{1-\beta_1^t} \sum_{\tau=1}^{t-1} \beta_1^{t-\tau}\E\left[\left\|\vtheta_{\tau-1} - \vtheta_{t-1}\right\|^2\right] \\
\stackrel{(f)}{\leq}\,\,&\frac{(1-\beta_1)L^2\eta^2}{1-\beta_1^t} \sum_{\tau=1}^{t-1} \beta_1^{t-\tau}(t-\tau)\sum_{i=1}^d \sum_{s=\tau}^{t-1}\E\left[\frac{\vm_{s,i}^2}{\vv_{s,i}+\zeta}\right] \\
\stackrel{(g)}{\leq}\,\,&\frac{(1-\beta_1)L^2\eta^2d}{(1-\beta_1^t)(1 - \beta_2)} \sum_{\tau=1}^{t-1} \beta_1^{t-\tau}(t-\tau)^2 \\
\stackrel{(h)}{\leq}\,\,&\frac{\beta_1(1+\beta_1)L^2\eta^2d}{(1-\beta_1^t)(1 - \beta_1)^2(1 - \beta_2)}
\end{aligned}
\end{equation}
where $(c)$ is from Cauchy-Schwarz inequality, $(d)$ is from the sum of geometric series, $(e)$ is from \eqref{eq:smoothness-fu} in Lemma \ref{le:lips}, $(f)$ is based on the update rule in \eqref{eq:update} and Cauchy-Schwarz inequality, $(g)$ is due to the fact that $\frac{\vm^2_{s,i}}{\vv_{s,i} + \zeta} \leq \frac{\vm^2_{s,i}}{(1 - \beta_2)\vm^2_{s,i}}$. Finally, $(h)$ results from the following:
\begin{equation}
\begin{aligned}
\sum_{\tau=1}^{t} \tau^2\beta_1^{\tau} = \frac{\beta_1\left(1 + \beta_1 - (t+1)^2\beta_1^t + (2t^2 + 2t - 1)\beta_1^{t+1} - t^2\beta_1^{t+2}\right)}{(1 - \beta_1)^3} \leq \frac{\beta_1(1 + \beta_1)}{(1-\beta_1)^3} \ .
\end{aligned}
\end{equation}

By putting the results above together, we then conclude our proof for Thm.~\ref{thm:vr} as below:
\begin{equation} \label{eq:jfee}
\begin{aligned}
&\E\left[\left|\vm_{t,i} - \nabla_i F_{\mu}(\vtheta_{t-1})\right|^2\right] \\
\stackrel{(a)}{=}\,\,& \E\left[\left|\vm_{t,i} - \E\left[\vm_{t,i}\right] + \E\left[\vm_{t,i}\right] - \nabla_i F_{\mu}(\vtheta_{t-1})\right|^2\right] \\[5pt]
\stackrel{(b)}{=}\,\,& \E\left[\left|\vm_{t,i} - \E\left[\vm_{t,i}\right]\right|^2\right] + \E\left[\left|\E\left[\vm_{t,i}\right] - \nabla_i F_{\mu}(\vtheta_{t-1})\right|^2\right] \\
\stackrel{(c)}{\leq}\,\,& \E\left[\left|\vm_{t,i} - \E\left[\vm_{t,i}\right]\right|^2\right] + \left(1-\beta_1^t\right)\E\left[\left|\frac{1}{1 - \beta_1^t}\E[\vm_{t,i}] - \nabla_i F_{\mu}(\vtheta_{t-1})\right|^2\right] + \beta_1^{t}\E\left[\left|\nabla_i F_{\mu}(\vtheta_{t-1})\right|^2\right] \\
\stackrel{(d)}{\leq}\,\,& \frac{1-\beta_1}{1 + \beta_1} \Sigma^2 + \frac{\beta_1(1+\beta_1)L^2\eta^2 d}{(1 - \beta_1)^2(1 - \beta_2)} + \beta_1^{t}\E\left[\left|\nabla_i F_{\mu}(\vtheta_{t-1})\right|^2\right]
\end{aligned}
\end{equation}
where $(b)$ comes from the independence between $\vm_{t,i} - \E\left[\vm_{t,i}\right]$ and $\E\left[\vm_{t,i}\right] - \nabla_i F_{\mu}(\vtheta_{t-1})$ with respect to $\{\xi_{\tau}\}_{\tau}^t$, $(c)$ is due the fact that $(a+b)^2 \leq \left(1 + \frac{1 - \beta_1^t}{\beta_1^t}\right)a^2 + \left(1 + \frac{\beta_1^t}{1-\beta_1^t}\right)b^2$.

\subsection{Proof of Thm. \ref{thm:v}}\label{proof:v}
Based on our Thm.~\ref{thm:vr}, we naturally can bound $\vm_{t,i}^2$ as below
\begin{equation}
\begin{aligned}
\E\left[\left|\vm_{t,i}\right|^2\right]& = \E\left[\left|\vm_{t,i} -\nabla_i F_{\mu}(\vtheta_{t-1}) + \nabla_i F_{\mu}(\vtheta_{t-1})\right|^2\right] \\[2pt]
&\leq (1 + \beta_1)\E\left[\left|\vm_{t,i} - \nabla_i F_{\mu}(\vtheta_{t-1})\right|^2\right] + \left(1 + \frac{1}{\beta_1}\right)\E\left[\left|\nabla_i F_{\mu}(\vtheta_{t-1})\right|^2\right] \\
&\leq (1-\beta_1) \Sigma^2 + \frac{\beta_1(1+\beta_1)^2L^2\eta^2 d}{(1 - \beta_1)^2(1 - \beta_2)} + \frac{(1+\beta_1)^2}{\beta_1} \E\left[\left|\nabla_i F_{\mu}(\vtheta_{t-1})\right|^2\right] \ . \label{eq:vjneb}
\end{aligned}
\end{equation}

As $\vv_{t,i}$ is the moving average of $\vm_{t,i}$, we conclude our proof for Thm.~\ref{thm:v} as below
\begin{equation}
\begin{aligned}
\E\left[ \vv_{t,i}\right]
&= \E\left[\beta_2 \vv_{t-1,i} + (1-\beta_2)\vm^2_{t,i}\right] \\[5pt]
&\leq \E\big[\beta_2\vv_{t-1,i}\big] + (1-\beta_2)\left((1-\beta_1) \Sigma^2 + \frac{\beta_1(1+\beta_1)^2L^2\eta^2 d}{(1 - \beta_1)^2(1 - \beta_2)} + \frac{(1+\beta_1)^2}{\beta_1} \E\left[\left|\nabla_i F_{\mu}(\vtheta_{t-1})\right|^2\right]\right) \\
&\leq \beta_2^{t}\vv_{0,i} + \sum_{\tau=1}^{t}(1-\beta_2)\beta_2^{t-\tau}\left((1-\beta_1) \Sigma^2 + \frac{\beta_1(1+\beta_1)^2L^2\eta^2 d}{(1 - \beta_1)^2(1 - \beta_2)} + \frac{(1+\beta_1)^2}{\beta_1} \E\left[\left|\nabla_i F_{\mu}(\vtheta_{t-1})\right|^2\right]\right) \\
&\leq \beta_2^{t}\vv_{0,i} + (1-\beta_1) \Sigma^2 + \frac{\beta_1(1+\beta_1)^2L^2\eta^2 d}{(1 - \beta_1)^2(1 - \beta_2)} + \frac{(1+\beta_1)^2}{\beta_1} \sum_{\tau=1}^{t}(1-\beta_2)\beta_2^{t-\tau}\E\left[\left|\nabla_i F_{\mu}(\vtheta_{\tau-1})\right|^2\right] \ . \label{eq:urov}
\end{aligned}
\end{equation}

\subsection{Proof of Cor. \ref{cor:v-hat}}\label{proof:v-hat}
Similar to the proof in Appx.~\ref{proof:v}, let $\vg_{t} = \hat{\nabla} f(\vtheta_{t-1}) $ we have
\begin{equation}
\begin{aligned}
\E\left[\left|\vg_{t,i}\right|^2\right]& = \E\left[\left|\vg_{t,i} -\nabla_i F_{\mu}(\vtheta_{t-1}) + \nabla_i F_{\mu}(\vtheta_{t-1})\right|^2\right] \\[2pt]
&\leq \left(1 + \frac{\beta_1}{1 + \beta_1 + \beta_1^2}\right)\E\left[\left|\vg_{t,i} - \nabla_i F_{\mu}(\vtheta_{t-1})\right|^2\right] + \left(1 + \frac{1 + \beta_1 + \beta_1^2}{\beta_1}\right)\E\left[\left|\nabla_i F_{\mu}(\vtheta_{t-1})\right|^2\right] \\
&\leq \frac{(1 + \beta_1)^2}{1 + \beta_1+\beta_1^2}\Sigma^2 + \frac{(1+\beta_1)^2}{\beta_1}\E\left[\left|\nabla_i F_{\mu}(\vtheta_{t-1})\right|^2\right] \ .
\end{aligned}
\end{equation}

Consequently,
\begin{equation}
\begin{aligned}
\E\left[ \vv_{t,i}\right]
&= \E\left[\beta_2 \vv_{t-1,i} + (1-\beta_2)\vg^2_{t,i}\right] \\[5pt]
&\leq \E\big[\beta_2\vv_{t-1,i}\big] + (1-\beta_2)\left(\frac{(1 + \beta_1)^2}{1 + \beta_1+\beta_1^2}\Sigma^2 + \frac{(1+\beta_1)^2}{\beta_1}\E\left[\left|\nabla_i F_{\mu}(\vtheta_{t-1})\right|^2\right]\right) \\
&\leq \beta_2^{t}\vv_{0,i} + \frac{(1 + \beta_1)^2}{1 + \beta_1+\beta_1^2}\Sigma^2 + \frac{(1+\beta_1)^2}{\beta_1} \sum_{\tau=1}^{t}(1-\beta_2)\beta_2^{t-\tau}\E\left[\left|\nabla_i F_{\mu}(\vtheta_{\tau-1})\right|^2\right] \\
&\leq \beta_2^{t}\vv_{0,i} + (1+\beta_1)\Sigma^2 + \frac{(1+\beta_1)^2}{\beta_1} \sum_{\tau=1}^{t}(1-\beta_2)\beta_2^{t-\tau}\E\left[\left|\nabla_i F_{\mu}(\vtheta_{\tau-1})\right|^2\right] \ ,
\end{aligned}
\end{equation}
which concludes our proof for Cor.~\ref{cor:v-hat}.

\subsection{Proof of Lemma \ref{le:holder}}\label{proof:holder}
By applying H\"{o}lder's inequality twice, we have the following
\begin{equation}
\begin{aligned}
\left(\frac{1}{T}\sum_{t=0}^{T-1}\E\left[\left\|\nabla
F_{\mu}(\vtheta_t)\right\|\right]\right)^2 &=\frac{1}{T^2}\left(\sum_{t=0}^{T-1}\E\left[\sqrt[4]{\beta_2\left\|\vv_{t}\right\|+\zeta}\frac{\left\|\nabla
F_{\mu}(\vtheta_t)\right\|}{\sqrt[4]{\beta_2\left\|\vv_{t}\right\|+\zeta}}\right]\right)^2 \\
&\leq \frac{1}{T^2}\left(\sum_{t=0}^{T-1}\left(\E\left[\sqrt{\beta_2\left\|\vv_{t}\right\|+\zeta}\right]\right)^{\frac{1}{2}}\left(\E\left[\frac{\left\|\nabla
F_{\mu}(\vtheta_t)\right\|^2}{\sqrt{\beta_2\left\|\vv_{t}\right\|+\zeta}}\right]\right)^{\frac{1}{2}}\right)^2 \\
&\leq \left(\frac{1}{T}\sum_{t=0}^{T-1} \E\left[\sqrt{\beta_2\left\|\vv_{t}\right\|+\zeta}\right]\right)\left(\frac{1}{T}\sum_{t=0}^{T-1}\E\left[\frac{\left\|\nabla
F_{\mu}(\vtheta_t)\right\|^2}{\sqrt{\beta_2\left\|\vv_{t}\right\|+\zeta}}\right]\right) \ ,
\end{aligned}
\end{equation}
which concludes our proof.

\subsection{Proof of Lemma. \ref{le:v-norm}}\label{proof:v-norm}


Based on \eqref{eq:jfee}, \eqref{eq:vjneb}, and \eqref{eq:urov}, we have
\begin{equation} \label{eq:vrge}
\begin{aligned}
 \vv_{t,i}
&= \beta_2 \vv_{t-1,i} + (1-\beta_2)\vm^2_{t,i} \\[5pt]
&\leq \beta_2\vv_{t-1,i} + (1-\beta_2)\left( (1 + \beta_1)\left|\vm_{t,i} - \nabla_i F_{\mu}(\vtheta_{t-1})\right|^2 + \left(1 + \frac{1}{\beta_1}\right)\left|\nabla_i F_{\mu}(\vtheta_{t-1})\right|^2\right) \\
&= \beta_2\vv_{t-1,i} + (1-\beta_2)\Bigg( (1 + \beta_1)\Big(\left|\vm_{t,i} - \E\left[\vm_{t,i}\right]\right|^2 + 2 \left(\vm_{t,i} - \E\left[\vm_{t,i}\right]\right)\left(\E\left[\vm_{t,i}\right] - \nabla_i F_{\mu}(\vtheta_{t-1})\right)\Big.\Bigg. \\
&\qquad + \Bigg.\Big.\left(1-\beta_1^t\right)\left|\frac{1}{1 - \beta_1^t}\E[\vm_{t,i}] - \nabla_i F_{\mu}(\vtheta_{t-1})\right|^2 + \beta_1^{t}\left|\nabla_i F_{\mu}(\vtheta_{t-1})\right|^2\Big) + \left(1 + \frac{1}{\beta_1}\right)\left|\nabla_i F_{\mu}(\vtheta_{t-1})\right|^2\Bigg) \\
&\leq \beta_2^{t}\vv_{0,i} + \sum_{\tau=1}^{t}(1-\beta_2)\beta_2^{t-\tau}\Bigg( (1 + \beta_1)\Big(\left|\vm_{t,i} - \E\left[\vm_{t,i}\right]\right|^2 + 2 \left(\vm_{t,i} - \E\left[\vm_{t,i}\right]\right)\left(\E\left[\vm_{t,i}\right] - \nabla_i F_{\mu}(\vtheta_{t-1})\right)\Big.\Bigg. \\
&\qquad + \Bigg.\Big.\left(1-\beta_1^t\right)\left|\frac{1}{1 - \beta_1^t}\E[\vm_{t,i}] - \nabla_i F_{\mu}(\vtheta_{t-1})\right|^2\Big) + \frac{(1+\beta_1)^2}{\beta_1} \left|\nabla_i F_{\mu}(\vtheta_{t-1})\right|^2\Bigg)
\end{aligned}
\end{equation}

Consequently,
\begin{equation}
\begin{aligned}
&\frac{1}{T}\sum_{t=0}^{T-1} \E\left[\sqrt{\beta_2\left\|\vv_{t}\right\|+\zeta}\right] \\
\stackrel{(a)}{\leq}\,\,& \sqrt{\zeta} + \frac{1}{T}\sum_{t=0}^{T-1}\sum_{i=1}^d \E\left[\sqrt{\beta_2\vv_{t,i}}\right] \\
\stackrel{(b)}{\leq}\,\,&\sqrt{\zeta} + \frac{1}{T}\sum_{t=0}^{T-1}\sum_{i=1}^d \left(\sqrt{\beta_2}V + \frac{1+\beta_1}{\sqrt{\beta_1}}\sum_{\tau=1}^{t}\sqrt{1-\beta_2}\beta_2^{(t-\tau)/2}\E\left[\left|\nabla_i F_{\mu}(\vtheta_{\tau-1})\right| \right]\right) \\
\stackrel{(c)}{\leq}\,\,& \sqrt{\zeta} + Vd + \frac{(1+\beta_1)\sqrt{d}}{\sqrt{\beta_1(1-\beta_2)}} \frac{1}{T} \sum_{t=0}^{T-1} \E\left[\big\|\nabla F_{\mu}(\vtheta_t)\right\|\big] \label{eq:jenv}
\end{aligned}
\end{equation}
where $(a),(b)$ comes from the fact that $\sqrt{\sum_{i=1}^d a_i} \leq \sum_{i=1}^d\sqrt{a_i}$. In addition, $(b)$ also results from Jensen's inequality by getting the square root of $\left|\nabla_i F_{\mu}(\vtheta_{t-1})\right|^2$ within the upper bound of $\vv_{t,i}$ first (i.e., \eqref{eq:vrge}) and taking expectation over it later, and $(c)$ is due to the sum of geometric series. Finally, $(c)$ is also the consequence of the sum of geometric series.

\subsection{Proof of Thm. \ref{thm:grad/v}}\label{proof:grad/v}
Inspired by the proof of Adam \citep{adam} in FO optimization \citep{wang2024closing, zhang2024convergence}, we focus on the study of the potential function $F_{\mu}(\vx_t)$ with $\vx_t$ defined as below:
\begin{equation}
\vx_t \triangleq \frac{\vtheta_t - \beta_1/\sqrt{\beta_2}\vtheta_{t-1}}{1- \beta_1/\sqrt{\beta_2}} \ .
\end{equation}

Consequently,
\begin{equation}
\begin{aligned}
\vx_t - \vtheta_t = \frac{\beta_1/\sqrt{\beta_2}}{1-\beta_1/\sqrt{\beta_2}}\left(\vtheta_t - \vtheta_{t-1}\right) \ , \label{eq:vjenf}
\end{aligned}
\end{equation}
and
\begin{equation}
\begin{aligned}
\vx_{t+1} -\vx_{t} &= \frac{\vtheta_{t+1} - \vtheta_{t} - \beta_1/\sqrt{\beta_2}(\vtheta_{t} - \vtheta_{t-1})}{1-\beta_1/\sqrt{\beta_2}} \\[10pt]
&= \frac{- \eta\vm_{t+1}/\sqrt{\vv_{t+1} + \zeta} + \eta\beta_1\vm_{t} / \sqrt{\beta_2\vv_{t} + \beta_2\zeta}}{1-\beta_1/\sqrt{\beta_2}} \ . \label{eq:vveb}
\end{aligned}
\end{equation}

According to the Lipschitz smoothness of function $F_{\mu}$, the following holds conditions on $\gF_t$, i.e., the stochastics up to iteration $t$:
\begin{equation}\label{eq:xvdsvd}
\begin{aligned}
\E\left[F_{\mu}(\vx_{t+1})|\gF_t\right] &\leq F_{\mu}(\vx_t) + \E\left[\left\langle\nabla F_{\mu}(\vx_t), \vx_{t+1} - \vx_{t}\right\rangle | \gF_t\right] + \frac{\sqrt{d}L}{2}\E\left[\left\|\vx_{t+1} - \vx_{t}\right\|^2 | \gF_t\right] \ .
\end{aligned}
\end{equation}

We first reframe the second term on the RHS of \eqref{eq:xvdsvd} as below using the update rule in \eqref{eq:update}:
\begin{equation}
\begin{aligned}
&\E\left[\left\langle\nabla F_{\mu}(\vx_t), \vx_{t+1} - \vx_{t}\right\rangle | \gF_t\right] \\[14pt]
=\,\,& \E\left[\left\langle\nabla F_{\mu}(\vtheta_t), \vx_{t+1} - \vx_{t}\right\rangle | \gF_t\right] + \E\left[\left\langle\nabla F_{\mu}(\vx_t) - \nabla F_{\mu}(\vtheta_t), \vx_{t+1} - \vx_{t}\right\rangle | \gF_t \right] \\[10pt]
=\,\,& \underbrace{\E\left[\left\langle\nabla F_{\mu}(\vtheta_t), \frac{- \eta(1-\beta_1)\hat{\nabla}f(\vtheta_t, \xi_{t+1})}{(1 - \beta_1/\sqrt{\beta_2})\sqrt{\beta_2\vv_{t} + \zeta}}\right\rangle \Bigg| \gF_t\right]}_{\circled{1}} + \underbrace{\E\left[\left\langle \nabla F_{\mu}(\vtheta_t), \frac{-\eta\vm_{t+1}/\sqrt{\vv_{t+1} + \zeta} + \eta\vm_{t+1} / \sqrt{\beta_2\vv_{t} + \zeta}}{1-\beta_1/\sqrt{\beta_2}}\right\rangle \Bigg| \gF_t \right]}_{\circled{2}} \\
&\qquad + \underbrace{\E\left[\left\langle \nabla F_{\mu}(\vtheta_t), \frac{\eta\beta_1\vm_{t}/\sqrt{\beta_2\vv_{t} + \beta_2\zeta} - \eta\beta_1\vm_t / \sqrt{\beta_2\vv_{t} + \zeta}}{1-\beta_1/\sqrt{\beta_2}}\right\rangle \Bigg| \gF_t \right]}_{\circled{3}} + \underbrace{\E\left[\left\langle\nabla F_{\mu}(\vx_t) - \nabla F_{\mu}(\vtheta_t), \vx_{t+1} - \vx_{t}\right\rangle \big| \gF_t \right]}_{\circled{4}}
\end{aligned}
\end{equation}

We then bound the $\circled{1}, \circled{2}, \circled{3}$, and $\circled{4}$ term above separately. To begin with, we have the following based on the expectation of $\hat{\nabla}f(\vtheta_t, \xi_{t+1})$:
\begin{equation}
\begin{aligned}
\circled{1} &\triangleq \E\left[\E\left[\left\langle\nabla F_{\mu}(\vtheta_t), \frac{- \eta(1-\beta_1)\hat{\nabla}f(\vtheta_t, \xi_{t+1})/\sqrt{\beta_2\vv_{t} + \zeta}}{1 - \beta_1/\sqrt{\beta_2}}\right\rangle \Bigg| \gF_t\right]\right] \\[10pt]
&= \frac{- \eta(1 - \beta_1)}{1 - \beta_1/\sqrt{\beta_2}}\sum_{i=1}^d\E\left[\frac{\left|\nabla_i F_{\mu}(\vtheta_t)\right|^2}{\sqrt{\beta_2\vv_{t,i} + \zeta}}\right] \ . \label{eq:cir-1}
\end{aligned}
\end{equation}

In addition, $\circled{2}$ can be upper bounded as below: 
\begin{equation}
\begin{aligned}
\circled{2} \triangleq \,\,& \E\left[\left\langle \nabla F_{\mu}(\vtheta_t), \frac{-\eta\vm_{t+1}/\sqrt{\vv_{t+1} + \zeta} + \eta\vm_{t+1} / \sqrt{\beta_2\vv_{t} + \zeta}}{1-\beta_1/\sqrt{\beta_2}}\right\rangle \Bigg| \gF_t \right] \\
\stackrel{(a)}{\leq}\,\,& \sum_{i=1}^d\frac{\eta}{1-\beta_1/\sqrt{\beta_2}}\E\left[\left|\nabla_i F_{\mu}(\vtheta_t)\right|\frac{(1-\beta_2)\vm_{t+1,i}^2 \left|\vm_{t+1,i}\right|}{\sqrt{\vv_{t+1,i} + \zeta}\sqrt{\beta_2\vv_{t,i} + \zeta}(\sqrt{\beta_2\vv_{t,i} + \zeta} + \sqrt{\vv_{t+1,i} + \zeta})} \Bigg| \gF_t \right] \\
\stackrel{(b)}{\leq}\,\,& \sum_{i=1}^d\frac{\eta}{1-\beta_1/\sqrt{\beta_2}}\frac{\left|\nabla_i F_{\mu}(\vtheta_t)\right|}{\sqrt{\beta_2\vv_{t,i} + \zeta}}\E\left[\frac{\sqrt{1-\beta_2}\vm_{t+1,i}^2 }{\sqrt{\beta_2\vv_{t,i} + \zeta} + \sqrt{\vv_{t+1,i} + \zeta}} \Bigg| \gF_t \right] \\
\stackrel{(c)}{\leq}\,\,&\sum_{i=1}^d\frac{\eta}{1-\beta_1/\sqrt{\beta_2}}\left(\frac{\left|\nabla_i F_{\mu}(\vtheta_t)\right|^2}{2\gamma_0\sqrt{\beta_2\vv_{t,i} + \zeta}} + \frac{\gamma_0}{2\sqrt{\beta_2\vv_{t,i} + \zeta}}\left(\E\left[\frac{\sqrt{1-\beta_2}\vm_{t+1,i}^2 }{\sqrt{\beta_2\vv_{t,i} + \zeta} + \sqrt{\vv_{t+1,i} + \zeta}} \Bigg| \gF_t \right]\right)^2\right) \\
\stackrel{(d)}{\leq}\,\,&\sum_{i=1}^d\frac{\eta}{1-\beta_1/\sqrt{\beta_2}}\left(\frac{\left|\nabla_i F_{\mu}(\vtheta_t)\right|^2}{2\gamma_0\sqrt{\beta_2\vv_{t,i} + \zeta}} + \frac{\gamma_0 \E\left[\vm_{t+1,i}^2 | \gF_t \right]}{2\sqrt{\beta_2\vv_{t,i} + \zeta}}\E\left[\frac{(1-\beta_2)\vm_{t+1,i}^2 }{\left(\sqrt{\beta_2\vv_{t,i} + \zeta} + \sqrt{\vv_{t+1,i} + \zeta}\right)^2} \Bigg| \gF_t \right]\right) \\
\stackrel{(e)}{\leq}\,\,&\sum_{i=1}^d\frac{\eta}{1-\beta_1/\sqrt{\beta_2}}\left(\frac{\left|\nabla_i F_{\mu}(\vtheta_t)\right|^2}{2\gamma_0\sqrt{\beta_2\vv_{t,i} + \zeta}} + \frac{\gamma_0 \E\left[\vm_{t+1,i}^2 | \gF_t \right]}{2} \right.\\
&\qquad \qquad \Bigg. \times \, \E\left[\frac{\vv_{t+1,i} - \beta_2\vv_{t,i}}{\sqrt{\vv_{t+1,i}+\zeta}\sqrt{\beta_2\vv_{t,i}+\zeta}\left(\sqrt{\beta_2\vv_{t,i} + \zeta} + \sqrt{\vv_{t+1,i} + \zeta}\right)} \Bigg| \gF_t \right]\Bigg)\\[5pt]
\stackrel{(f)}{=}\,\,&\sum_{i=1}^d\frac{\eta}{1-\beta_1/\sqrt{\beta_2}}\left(\frac{\left|\nabla_i F_{\mu}(\vtheta_t)\right|^2}{2\gamma_0\sqrt{\beta_2\vv_{t,i} + \zeta}} + \frac{\gamma_0 \E\left[\vm_{t+1,i}^2 | \gF_t \right]}{2}\E\left[\frac{1}{\sqrt{\beta_2\vv_{t,i} + \zeta}} - \frac{1}{\sqrt{\vv_{t+1,i} + \zeta}} \Bigg| \gF_t \right]\right) \\
\stackrel{(g)}{=}\,\,&\sum_{i=1}^d\frac{\eta(1-\beta_1)}{4(1-\beta_1/\sqrt{\beta_2})}\frac{\left|\nabla_i F_{\mu}(\vtheta_t)\right|^2}{\sqrt{\beta_2\vv_{t,i} + \zeta}} + \sum_{i=1}^d\frac{4\eta C^2d^2}{(1-\beta_1/\sqrt{\beta_2})(1-\beta_1)\mu^2}\E\left[\frac{1}{\sqrt{\beta_2\vv_{t,i} + \zeta}} - \frac{1}{\sqrt{\vv_{t+1,i} + \zeta}} \Bigg| \gF_t \right] \label{eq:cir-2}
\end{aligned}
\end{equation}
where $(a)$ is due to the update rule of second moment estimate in \eqref{eq:2nd-mo}, $(b)$ results from $\frac{\left|\vm_{t+1,i}\right|}{\sqrt{\vv_{t+1,i} + \zeta}} \leq \frac{\left|\vm_{t+1,i}\right|}{\sqrt{1 - \beta_2}\left|\vm_{t+1,i}\right|}$, $(c)$ is from $ab \leq \frac{a^2}{2\gamma_0} + \frac{\gamma_0 b^2}{2}$, $(d)$ is from the update rule in \eqref{eq:2nd-mo}, $(f)$ can be obtained by choosing $\gamma_0 = \frac{2}{1-\beta_1}$ and $\sqrt{\beta_2\vv_{t,i} + \zeta} > \sqrt{\zeta}$, and $(g)$ results from \eqref{eq:vjneb} and \eqref{eq:vjenv} below.
\begin{equation}
\begin{aligned}
\left|\hat{\nabla}_i f(\vtheta, \xi)\right| &= \left|\frac{d}{K}\sum_{k=1}^K\frac{f(\vtheta + \mu \vu_k; \xi) - f(\vtheta; \xi)}{\mu} \vu_{k,i}\right| \\
&\leq \frac{d}{K}\sum_{k=1}^K \left|\frac{f(\vtheta + \mu \vu_k; \xi) - f(\vtheta; \xi)}{\mu}\right|\Big|\vu_{k,i}\Big| \\[5pt]
& \leq \frac{2Cd}{\mu} \ ,
\end{aligned}
\end{equation}

\begin{equation}
\begin{aligned}
\left|\vm_{t+1,i}\right| &= \left|(1-\beta_1)\sum_{\tau=1}^t \beta_1^{t-\tau} \hat{\nabla}_i f(\vtheta_{\tau-1}, \xi_{\tau})\right| \\
&\leq (1-\beta_1)\sum_{\tau=1}^t \beta_1^{t-\tau} \left|\hat{\nabla}_i f(\vtheta_{\tau-1}, \xi_{\tau})\right| \\[3pt]
& \leq \frac{2Cd}{\mu} \ . \label{eq:vjenv}
\end{aligned}
\end{equation}


Let $2\beta_2 \geq 1$, $\circled{3}$ can be bounded as below:
\begin{equation}
\begin{aligned}
\circled{3}\triangleq\,\,&\E\left[\left\langle \nabla F_{\mu}(\vtheta_t), \frac{\eta\beta_1\vm_{t}/\sqrt{\beta_2\vv_{t} + \beta_2\zeta} - \eta\beta_1\vm_t / \sqrt{\beta_2\vv_{t} + \zeta}}{1-\beta_1/\sqrt{\beta_2}}\right\rangle | \gF_t \right] \\
\stackrel{(a)}{=}\,\,&\frac{\eta\beta_1}{1-\beta_1/\sqrt{\beta_2}}\sum_{i=1}^d\left|\nabla_i F_{\mu}(\vtheta_t)\right|\frac{(1-\beta_2)\zeta\left|\vm_{t,i}\right|}{\sqrt{\beta_2\vv_{t,i} + \beta_2\zeta}\sqrt{\beta_2\vv_{t,i} + \zeta}(\sqrt{\beta_2\vv_{t,i} + \beta_2\zeta} + \sqrt{\beta_2\vv_{t,i} + \zeta})} \\
\stackrel{(b)}{\leq}\,\,&\frac{\eta\beta_1}{1-\beta_1/\sqrt{\beta_2}}\sum_{i=1}^d\left|\nabla_i F_{\mu}(\vtheta_t)\right|\frac{\sqrt{1-\beta_2}\sqrt{\zeta}}{\sqrt{\beta_2\vv_{t,i} + \zeta}} \\
\stackrel{(c)}{\leq}\,\,&\frac{\eta\beta_1\sqrt{\zeta}}{1-\beta_1/\sqrt{\beta_2}}\sum_{i=1}^d\left(\frac{\left|\nabla_i F_{\mu}(\vtheta_t)\right|^2}{2\gamma_1\sqrt{\beta_2\vv_{t,i} + \zeta}} + \frac{\gamma_1(1-\beta_2)}{2\sqrt{\beta_2\vv_{t,i} + \zeta}}\right) \\
\stackrel{(d)}{\leq}\,\,&\frac{\eta\beta_1\sqrt{\zeta}}{1-\beta_1/\sqrt{\beta_2}}\sum_{i=1}^d\frac{\left|\nabla_i F_{\mu}(\vtheta_t)\right|^2}{2\gamma_1\sqrt{\beta_2\vv_{t,i} + \zeta}} + \frac{\eta\beta_1\gamma_1(1-\beta_2)d}{2(1-\beta_1/\sqrt{\beta_2})} \\
\stackrel{(e)}{=}\,\,&\frac{\eta(1-\beta_1)}{4(1-\beta_1/\sqrt{\beta_2})}\sum_{i=1}^d\frac{\left|\nabla_i F_{\mu}(\vtheta_t)\right|^2}{\sqrt{\beta_2\vv_{t,i} + \zeta}} + \frac{\eta\beta_1^2(1-\beta_2)d\sqrt{\zeta}}{(1-\beta_1/\sqrt{\beta_2})(1-\beta_1)} \label{eq:cir-3}
\end{aligned}
\end{equation}
where $(b)$ results from $\frac{\left|\vm_{t,i}\right|}{\sqrt{\vv_{t,i} + \zeta}} \leq \frac{\left|\vm_{t,i}\right|}{\sqrt{1 - \beta_2}\left|\vm_{t,i}\right|}$ and $2\beta_2 \geq 1$, $(c)$ is from $ab \leq \frac{a^2}{2\gamma_1} + \frac{\gamma_1 b^2}{2}$, and $(e)$ is obtained by choosing $\gamma_1 = \frac{2\beta_1\sqrt{\zeta}}{1-\beta_1}$.

Finally, $\circled{4}$ is bounded as below:
\begin{equation}
\begin{aligned}
\circled{4}\triangleq\,\,&\E\left[\left\langle\nabla F_{\mu}(\vx_t) - \nabla F_{\mu}(\vtheta_t), \vx_{t+1} - \vx_{t}\right\rangle | \gF_t \right] \\[5pt]
\stackrel{(a)}{\leq}\,\,& \sum_{i=1}^d\frac{\beta_1 L/\sqrt{\beta_2}}{1 - \beta_1/\sqrt{\beta_2}}\E\left[\left\|\vtheta_{t} - \vtheta_{t-1}\right\|\left|\frac{\vtheta_{t+1,i} - \vtheta_{t,i} - \beta_1/\sqrt{\beta_2}(\vtheta_{t,i} - \vtheta_{t-1,i})}{1-\beta_1/\sqrt{\beta_2}}\right| \Bigg| \gF_t \right]  \\
\stackrel{(b)}{\leq}\,\,& \sum_{i=1}^d\frac{\beta_1L/\sqrt{\beta_2}}{(1 - \beta_1/\sqrt{\beta_2})^2}\E\left[\frac{\left\|\vtheta_{t} - \vtheta_{t-1}\right\|^2}{2\sqrt{d}} + \frac{\sqrt{d}\left|\vtheta_{t+1,i} - \vtheta_{t,i}\right|^2}{2} + \beta_1/\sqrt{\beta_2}\left(\frac{\left\|\vtheta_{t} - \vtheta_{t-1}\right\|^2}{2\sqrt{d}} + \frac{\sqrt{d}\left|\vtheta_{t,i} - \vtheta_{t-1,i}\right|^2}{2}\right)\Bigg| \gF_t\right] \\
\stackrel{(c)}{=}\,\,& \frac{\beta_1 L\sqrt{d} /\sqrt{\beta_2}}{2(1 - \beta_1/\sqrt{\beta_2})^2}\left(\left(1 + 2\beta_1/\sqrt{\beta_2}\right)\E\left[\left\|\vtheta_{t} - \vtheta_{t-1}\right\|^2 \big| \gF_t\right] + \E\left[\left\|\vtheta_{t+1} - \vtheta_{t}\right\|^2 \big| \gF_t\right] \right) \\
\stackrel{(d)}{=}\,\,& \frac{\beta_1 L\eta^2 \sqrt{d} /\sqrt{\beta_2}}{2(1 - \beta_1/\sqrt{\beta_2})^2}\sum_{i=1}^d\left(\left(1 + 2\beta_1/\sqrt{\beta_2}\right)\E\left[\frac{\vm_{t,i}^2}{\vv_{t,i}+\zeta} \big| \gF_t\right] + \E\left[\frac{\vm_{t+1,i}^2}{\vv_{t+1,i}+\zeta}\big| \gF_t\right]\right) \label{eq:cir-4}
\end{aligned}
\end{equation}
where $(a)$ is from \eqref{eq:vjenf}, \eqref{eq:vveb}, Cauchy-Schwarz inequality, and the Lipschitz smoothness of $F_{\mu}$ in Lemma \ref{le:lips}. In addition, $(b)$ is from $ab \leq \frac{a^2}{2\sqrt{d}} + \frac{\sqrt{d}b^2}{2}$, and $(d)$ is based on the update rule in \eqref{eq:update}.

we finally bound the last term on the RHS of \eqref{eq:xvdsvd} as below:
\begin{equation}
\begin{aligned}
\frac{\sqrt{d}L}{2}\E\left[\left\|\vx_{t+1} - \vx_{t}\right\|^2 \big| \gF_t\right] &= \sum_{i=1}^d\frac{\sqrt{d}L}{2(1-\beta_1/\sqrt{\beta_2})^2}\E\left[\left|\vtheta_{t+1,i} - \vtheta_{t,i} - \beta_1/\sqrt{\beta_2}(\vtheta_{t,i} - \vtheta_{t-1,i})\right|^2 \big| \gF_t\right] \\
&\leq\sum_{i=1}^d\frac{\sqrt{d}L}{2(1-\beta_1/\sqrt{\beta_2})^2}\left(\E\left[2\left|\vtheta_{t+1,i} - \vtheta_{t,i}\right|^2\big| \gF_t\right] + 2\beta^2_1/\beta_2\E\left[\left|\vtheta_{t,i} - \vtheta_{t-1,i}\right|^2\big| \gF_t\right]\right) \\
&=\sum_{i=1}^d\frac{\sqrt{d}L\eta^2}{(1-\beta_1/\sqrt{\beta_2})^2}\left(\E\left[\frac{\vm_{t+1,i}^2}{\vv_{t+1,i}+\zeta}\big| \gF_t\right] + \beta^2_1/\beta_2\E\left[\frac{\vm_{t,i}^2}{\vv_{t,i}+\zeta}\big| \gF_t\right]\right) \ . \label{eq:last-rhs}
\end{aligned}
\end{equation}


By introducing \eqref{eq:cir-1}, \eqref{eq:cir-2}, \eqref{eq:cir-3}, \eqref{eq:cir-4}, \eqref{eq:last-rhs} into \eqref{eq:xvdsvd}, let $\beta_1 \leq \sqrt{\beta_2}, \vm_{0,i}=0,\vv_{0,i}>0$, we have the following
\begin{equation}
\begin{aligned}
&\sum_{t=0}^{T-1}\left(\E\left[F_{\mu}(\vx_{t+1})\right] - \E\left[F_{\mu}(\vx_t)\right]\right) \\
\leq\,\,& - \frac{\eta(1-\beta_1)}{2(1-\beta_1/\sqrt{\beta_2})} \sum_{t=0}^{T-1}\sum_{i=1}^d \E\left[\frac{\left|\nabla_i F_{\mu}(\vtheta_{t})\right|^2}{\sqrt{\beta_2\vv_{t,i} + \zeta}}\right] + \frac{3\sqrt{d}L\eta^2}{2(1-\beta_1/\sqrt{\beta_2})^2}\sum_{t=0}^{T-1}\sum_{i=1}^d\E\left[\frac{\vm_{t+1,i}^2}{\vv_{t+1,i}+\zeta}\right] + \\
&\qquad \frac{5\sqrt{d}L\eta^2}{2(1-\beta_1/\sqrt{\beta_2})^2}\sum_{t=0}^{T-1}\sum_{i=1}^d\E\left[\frac{\vm_{t,i}^2}{\vv_{t,i}+\zeta}\right] + \sum_{t=0}^{T-1}\sum_{i=1}^d\frac{4\eta C^2d^2}{(1-\beta_1/\sqrt{\beta_2})(1-\beta_1)\mu^2}\E\left[\frac{1}{\sqrt{\beta_2\vv_{t,i} + \zeta}} - \frac{1}{\sqrt{\vv_{t+1,i} + \zeta}} \right] + \\
&\qquad \qquad T\frac{\eta\beta_1^2(1-\beta_2)d\sqrt{\zeta}}{(1-\beta_1/\sqrt{\beta_2})(1-\beta_1)} \\
\leq\,\,& - \frac{\eta(1-\beta_1)}{2(1-\beta_1/\sqrt{\beta_2})} \sum_{t=0}^{T-1}\sum_{i=1}^d \E\left[\frac{\left|\nabla_i F_{\mu}(\vtheta_t)\right|^2}{\sqrt{\beta_2\vv_{t} + \zeta}}\right] + \frac{4\sqrt{d}L\eta^2}{(1-\beta_1/\sqrt{\beta_2})^2}\sum_{i=1}^d\left(\frac{\ln\left(\left(\beta_2^{T}\vv_{0,i} + 4C^2d^2/\mu^2\right)/\vv_{0,i}\right)}{1-\beta_2} + 2T\right) + \\
&\qquad  \frac{4\eta C^2d^3}{(1-\beta_1/\sqrt{\beta_2})(1-\beta_1)\mu^2}\left(\frac{1}{\sqrt{\zeta}} + \frac{T(1-\beta_2)}{\sqrt{\zeta}}\right) + T\frac{\eta\beta_1^2(1-\beta_2)d\sqrt{\zeta}}{(1-\beta_1/\sqrt{\beta_2})(1-\beta_1)} \label{eq:jdnvd}
\end{aligned}
\end{equation}
where the last inequality comes from the following \eqref{eq:vdkds} and \eqref{eq:vmele}.
\begin{equation}
\begin{aligned}
\sum_{t=0}^{T-1}\E\left[\frac{1}{\sqrt{\beta_2\vv_{t,i} + \zeta}} - \frac{1}{\sqrt{\vv_{t+1,i} + \zeta}}\right] &= \frac{1}{\sqrt{\beta_2\vv_{0,i} + \zeta}} + \sum_{t=0}^{T-2} \E\left[\frac{1}{\sqrt{\beta_2\vv_{t+1,i} + \zeta}} - \frac{1}{\sqrt{\vv_{t+1,i} + \zeta}}\right] -\E\left[\frac{1}{\sqrt{\vv_{T,i} + \zeta}}\right] \\
&\leq \frac{1}{\sqrt{\zeta}} + \sum_{t=1}^{T-1} \E\left[\frac{1}{\sqrt{\beta_2\vv_{t+1,i} + \zeta}} - \frac{\sqrt{\beta_2}}{\sqrt{\beta_2\vv_{t+1,i} + \zeta}}\right] \\
&\leq \frac{1}{\sqrt{\zeta}} + \frac{T(1-\beta_2)}{\sqrt{\zeta}} \ . \label{eq:vdkds}
\end{aligned}
\end{equation}

Moreover, due to the fact that $\frac{a}{1 + a} \leq \ln(1 + a)$,  the following holds:
\begin{equation}
\begin{aligned}
\frac{(1-\beta_2)\vm_{t,i}^2}{\vv_{t,i}} &= \frac{\frac{(1-\beta_2)\vm_{t,i}^2}{\vv_{t,i} - (1-\beta_2)\vm_{t,i}^2}}{1 + \frac{(1-\beta_2)\vm_{t,i}^2}{\vv_{t,i} - (1-\beta_2)\vm_{t,i}^2}} \leq \ln\left(1 + \frac{(1-\beta_2)\vm_{t,i}^2}{\vv_{t,i} - (1-\beta_2)\vm_{t,i}^2}\right) = \ln\left(\frac{\vv_{t,i}}{\beta_2 \vv_{t-1,i}}\right) = \ln\left(\frac{\vv_{t,i}}{\vv_{t-1,i}}\right) - \ln\left(\beta_2\right) \ . \label{eq:vebe}
\end{aligned}
\end{equation}
Given the results above and $2\beta_2 \geq 1$, we have
\begin{equation}
\begin{aligned}
\E\left[\sum_{t=0}^{T} \frac{\vm_{t,i}^2}{\vv_{t,i}+\zeta}\right] &\leq \frac{1}{1-\beta_2} \left(\E\left[\ln\left(\vv_{T,i}\right) - \ln\left(\vv_{0,i}\right)\right] - T\ln{\beta_2}\right) \\
&\leq \frac{1}{1-\beta_2}\left(\ln\left(\E\left[\frac{\vv_{T,i}}{\vv_{0,i}}\right]\right) + 2T(1-\beta_2)\right) \\
&\leq \frac{1}{1-\beta_2}\ln\left(\frac{\beta_2^{T}\vv_{0,i} + 4C^2d^2/\mu^2}{\vv_{0,i}}\right) + 2T \label{eq:vmele}
\end{aligned}
\end{equation}
where the second inequality is due to $\ln a \leq a - 1$ and  last inequality comes from \eqref{eq:vjenv}.

Define $\Delta \triangleq F_{\mu}(\vx_1) - F^*_{\mu}$, by re-arranging \eqref{eq:jdnvd}, we have
\begin{equation}
\begin{aligned}
\frac{1}{T}\sum_{t=0}^{T-1} \sum_{i=1}^d \E\left[\frac{\left|\nabla_i F_{\mu}(\vtheta_t)\right|^2}{\sqrt{\beta_2\vv_{t,i} + \zeta}}\right] &\leq \frac{2(1-\beta_1/\sqrt{\beta_2})\Delta}{\eta T(1-\beta_1)} + \frac{8L\eta\sqrt{d}}{(1-\beta_1/\sqrt{\beta_2})(1-\beta_1)}\sum_{i=1}^d\left(\frac{\ln\left(\left(\beta_2^{T}\vv_{0,i} + 4C^2d^2/\mu^2\right)/\vv_{0,i}\right)}{T(1-\beta_2)} + 2\right) + \\
&\qquad \qquad  \frac{8C^2 d^3}{(1-\beta_1)^2\mu^2}\left(\frac{1}{T\sqrt{\zeta}} + \frac{1-\beta_2}{\sqrt{\zeta}}\right) + \frac{2\beta_1^2(1-\beta_2)d\sqrt{\zeta}}{(1-\beta_1)^2} \ .
\end{aligned}
\end{equation}

By choosing $1-\beta_2 = \left(\frac{8C^2 d^3}{(1-\beta_1)^2\mu^2\sqrt{\zeta}} + \frac{2\beta_1^2d\sqrt{\zeta}}{(1-\beta_1)^2} \right)^{-1}\eps^2 / 3 \sim \gO(\eps^2), \eta = \left(\frac{16Ld\sqrt{d}}{(1-\beta_1/\sqrt{\beta_2})(1-\beta_1)}\right)^{-1}\eps^2 / 3 \sim \gO(\eps^2)$ and $T = 3 \eps^{-2} \left(\frac{2(1-\beta_1/\sqrt{\beta_2})\Delta}{\eta (1-\beta_1)} + \frac{8C^2 d^3}{(1-\beta_1)^2\mu^2\sqrt{\xi}} + \frac{8L\eta\sqrt{d}}{(1-\beta_1/\sqrt{\beta_2})(1-\beta_1)}\sum_{i=1}^d\left(\frac{\ln\left(\left(\beta_2^{T}\vv_{0,i} + 4C^2d^2/\mu^2\right)/\vv_{0,i}\right)}{(1-\beta_2)}\right)\right) \sim \gO(\eps^{-4})$, we can simply have the following,
\begin{equation} \frac{1}{T}\sum_{t=0}^{T-1}\E\left[\frac{\left\|\nabla
F_{\mu}(\vtheta_t)\right\|^2}{\sqrt{\beta_2\left\|\vv_{t}\right\|+\zeta}}\right] \leq \frac{1}{T}\sum_{t=0}^{T-1} \sum_{i=1}^d \E\left[\frac{\left|\nabla_i F_{\mu}(\vtheta_t)\right|^2}{\sqrt{\beta_2\vv_{t,i} + \zeta}}\right] \leq \eps^2 \label{eq:jvdj}
\end{equation}
where the first inequality is due to the fact that $\sum_i \left(a_i/\sum_j b_j\right) = \sum_i a_i / \sum_i b_i \leq \sum a_i/b_i$. We therefore conclude our proof of Thm.~\ref{thm:grad/v}.

\subsection{Proof of Thm. \ref{thm:r-adazo}}\label{proof:r-adazo}
By introducing \eqref{eq:jenv} and \eqref{eq:jvdj} into Lemma \ref{le:holder}, we have
\begin{equation}
\begin{aligned}
\left(\frac{1}{T}\sum_{t=0}^{T-1}\E\left[\left\|\nabla
F_{\mu}(\vtheta_t)\right\|\right]\right)^2 &\leq \left(\frac{1}{T}\sum_{t=0}^{T-1} \E\left[\sqrt{\beta_2\left\|\vv_{t}\right\|+\zeta}\right]\right)\left(\frac{1}{T}\sum_{t=0}^{T-1}\E\left[\frac{\left\|\nabla
F_{\mu}(\vtheta_t)\right\|^2}{\sqrt{\beta_2\left\|\vv_{t}\right\|+\zeta}}\right]\right) \\
&\leq \left(\sqrt{\zeta} + Vd + \frac{(1+\beta_1)\sqrt{d}}{\sqrt{\beta_1(1-\beta_2)}} \frac{1}{T} \sum_{t=0}^{T-1} \E\left[\big\|\nabla F_{\mu}(\vtheta_t)\right\|\big]\right) \eps^2 \ .
\end{aligned}
\end{equation}

By applying the formula for the root of square equation, we have the following
\begin{equation}
\frac{1}{T}\sum_{t=0}^{T-1}\E\left[\left\|\nabla
F_{\mu}(\vtheta_t)\right\|\right] \leq \frac{(1+\beta_1)\sqrt{d}}{\sqrt{\beta_1(1-\beta_2)}} \eps^2 + \left(\sqrt[4]{\zeta} + \sqrt{Vd}\right) \eps \ ,
\end{equation}
which concludes our proof for Thm.~\ref{thm:r-adazo}.

\subsection{Proof of Thm. \ref{cor:zo-adamm}}\label{sec:proof:zo-adamm}
We first introduce Lemma~\ref{lem:m_div_v} and Lemma~\ref{lem:m2_div_v} from \citep{wang2024closing}:
\begin{lemma}[Lemma 6 in \citep{wang2024closing}]
\label{lem:m_div_v}
\textit{\fontfamily{ppl}\selectfont
The following inequality holds for ZO-AdaMM if $0 < \beta_1^2 < \beta_2 < 1$,
\begin{equation}
\frac{\vm_{t,i}}{\sqrt{\vv_{t,i} + \zeta}} \leq \frac{1 - \beta_1}{\sqrt{1 - \beta_2} \sqrt{1 - \beta_1^2 / \beta_2}} \ .
\end{equation}}
\end{lemma}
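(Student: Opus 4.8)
The plan is to establish the stronger pathwise bound on $|\vm_{t,i}|/\sqrt{\vv_{t,i}+\zeta}$, which immediately yields the stated inequality because its right-hand side is nonnegative. First I would unroll both moment recursions of ZO-AdaMM under the initialization $\vm_{0,i}=0$ (exactly as in \eqref{eq:vjenv}), writing $\vm_{t,i}=(1-\beta_1)\sum_{\tau=1}^{t}\beta_1^{t-\tau}\vg_{\tau,i}$ and $\vv_{t,i}=\beta_2^{t}\vv_{0,i}+(1-\beta_2)\sum_{\tau=1}^{t}\beta_2^{t-\tau}\vg_{\tau,i}^2$, where $\vg_{\tau,i}$ is the $i$-th coordinate of the ZO gradient estimate at iteration $\tau$. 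Since $\zeta>0$ and $\vv_{0,i}\geq 0$, I can discard both the $\zeta$ and the $\beta_2^{t}\vv_{0,i}$ terms in the denominator, giving the lower bound $\sqrt{\vv_{t,i}+\zeta}\geq\sqrt{(1-\beta_2)\sum_{\tau=1}^{t}\beta_2^{t-\tau}\vg_{\tau,i}^2}$.

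The crux is to control the numerator $\sum_{\tau=1}^{t}\beta_1^{t-\tau}|\vg_{\tau,i}|$ by the very same $\beta_2$-weighted sum of squares that survives in the denominator. To this end I would split each weight as $\beta_1^{t-\tau}=(\beta_1^2/\beta_2)^{(t-\tau)/2}\cdot\beta_2^{(t-\tau)/2}$ and apply the Cauchy--Schwarz inequality, obtaining
\[
\sum_{\tau=1}^{t}\beta_1^{t-\tau}|\vg_{\tau,i}|\leq\sqrt{\sum_{\tau=1}^{t}\Big(\tfrac{\beta_1^2}{\beta_2}\Big)^{t-\tau}}\;\sqrt{\sum_{\tau=1}^{t}\beta_2^{t-\tau}\vg_{\tau,i}^2}\,.
\]
The hypothesis $\beta_1^2<\beta_2$ guarantees $\beta_1^2/\beta_2<1$, so the first factor is a convergent geometric series bounded by $1/\sqrt{1-\beta_1^2/\beta_2}$. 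Substituting this into the numerator and the lower bound into the denominator, the common factor $\sqrt{\sum_{\tau=1}^{t}\beta_2^{t-\tau}\vg_{\tau,i}^2}$ cancels, and collecting the remaining constants $(1-\beta_1)$, $\sqrt{1-\beta_2}$, and $\sqrt{1-\beta_1^2/\beta_2}$ reproduces exactly the claimed right-hand side.

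I expect the only subtle point to be the choice of the Cauchy--Schwarz split: the exponents must be balanced so that the surviving sum of squares matches the $\beta_2$-weighting of the denominator, since any other pairing leaves a residual $\vg$-dependent factor that fails to cancel. The remaining ingredients---unrolling the recursions, summing the geometric series, and the final cancellation---are routine. I would also remark that the argument is purely algebraic and deterministic: it holds for every realization of the gradient estimates and invokes neither the smoothness assumptions nor any bound on $\vg_{\tau,i}$, which is why the same template carries over verbatim to the refined second-moment update \eqref{eq:2nd-mo} in the companion Lemma~\ref{lem:m2_div_v}.
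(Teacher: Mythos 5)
Your proposal is correct and follows essentially the same route as the paper's proof: drop $\zeta$ (and the initial $\vv_{0,i}$ term) from the denominator, unroll both recursions, apply Cauchy--Schwarz with the split $\beta_1^{t-\tau} = (\beta_1^2/\beta_2)^{(t-\tau)/2}\,\beta_2^{(t-\tau)/2}$ so the $\beta_2$-weighted sum of squares cancels, and bound the remaining geometric series by $1/\sqrt{1-\beta_1^2/\beta_2}$. Your identification of the exponent balancing as the key step and the observation that the argument is deterministic (requiring no smoothness or boundedness assumptions) both match the paper's treatment exactly.
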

\begin{proof}
Let $\vg_{\tau, i } \triangleq \hat{\nabla}_i f(\vtheta_{\tau}, \xi_{\tau+1})$, we have
\begin{equation}
\begin{aligned}
\frac{\vm_{t,i}}{\sqrt{\vv_{t,i} + \zeta}} &\leq \frac{\vm_{t,i}}{\sqrt{\vv_{t,i}}} = \frac{\qty(1 - \beta_1) \sum_{\tau = 0}^{t - 1} \beta_1^{t - \tau} \vg_{\tau, i}}{\sqrt{\qty(1 - \beta_2) \sum_{\tau = 0}^{t - 1} \beta_2^{t - \tau} \vg_{\tau, i}^2}} \\
&\overset{\text{($a$)}}{\leq} \frac{1 - \beta_1}{\sqrt{1 - \beta_2}} \frac{\sqrt{\sum_{\tau = 0}^{t - 1} \beta_2^{t - \tau} \vg_{\tau, i}^2} \sqrt{\sum_{\tau = 0}^{t - 1} \qty(\beta_1^2 / \beta_2)^{t - \tau}}}{\sqrt{\sum_{\tau = 0}^{t - 1} \beta_2^{t - \tau} \vg_{\tau, i}^2}} \\
&\leq \frac{1 - \beta_1}{\sqrt{1 - \beta_2}\sqrt{1 - \qty(\beta_1^2 / \beta_2)}}
\end{aligned}
\end{equation}
where $(a)$ is due to the Cauchy-Schwarz inequality.
\end{proof}

\begin{lemma}[Lemma 5 in \citep{wang2024closing}]
    \label{lem:m2_div_v}
    \textit{\fontfamily{ppl}\selectfont
    The following inequality holds for ZO-AdaMM if $0 < \beta_1^2 < \beta_2 < 1$,
    \begin{equation}
        \sum_{t = 0}^{T - 1} \frac{\qty(1 - \beta_2) \vm_{t,i}^2}{\vv_{t,i} + \zeta} \leq \frac{\qty(1 - \beta_1)^2}{\qty(1 - \beta_1 / \sqrt{\beta_2})^2} \qty(\ln \qty(\frac{\vv_{T}}{\vv_{0}}) - T \ln \beta_2) \ .
    \end{equation}}
\end{lemma}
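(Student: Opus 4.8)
The plan is to reduce the momentum sum on the left to a pure second-moment sum that telescopes logarithmically. Write $\vg_{\tau,i} \triangleq \hat{\nabla}_i f(\vtheta_\tau, \xi_{\tau+1})$, so that $\vm_{t,i} = (1-\beta_1)\sum_{\tau=0}^{t-1}\beta_1^{t-1-\tau}\vg_{\tau,i}$ and $\vv_{t+1,i} = \beta_2\vv_{t,i} + (1-\beta_2)\vg_{t,i}^2$. The first step is a weighted Cauchy--Schwarz on $\vm_{t,i}$: splitting each summand as $\beta_1^{t-1-\tau}\vg_{\tau,i} = (\beta_1/\sqrt{\beta_2})^{(t-1-\tau)/2}\cdot\big[(\beta_1/\sqrt{\beta_2})^{(t-1-\tau)/2}\beta_2^{(t-1-\tau)/2}\vg_{\tau,i}\big]$ and applying $(\sum_\tau a_\tau b_\tau)^2 \le (\sum_\tau a_\tau^2)(\sum_\tau b_\tau^2)$, the geometric sum $\sum_\tau (\beta_1/\sqrt{\beta_2})^{t-1-\tau} \le (1-\beta_1/\sqrt{\beta_2})^{-1}$ — which converges precisely because $\beta_1^2 < \beta_2$ — yields $\vm_{t,i}^2 \le \frac{(1-\beta_1)^2}{1-\beta_1/\sqrt{\beta_2}}\sum_{\tau=0}^{t-1}(\beta_1\sqrt{\beta_2})^{t-1-\tau}\vg_{\tau,i}^2$.

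Next I would multiply by $(1-\beta_2)/(\vv_{t,i}+\zeta)$, sum over $t$, and swap the order of summation to expose, for each fixed $\tau$, the inner sum $\sum_{t>\tau}(\beta_1\sqrt{\beta_2})^{t-1-\tau}/(\vv_{t,i}+\zeta)$. Because $\vv_{t,i}$ is a moving average of nonnegative terms, it is monotone along the decay, i.e. $\vv_{t,i}+\zeta \ge \beta_2^{\,t-1-\tau}(\vv_{\tau+1,i}+\zeta)$ for $t \ge \tau+1$ (using $\beta_2^{\,t-1-\tau}\le 1$ on the $\zeta$ term). Substituting this lower bound cancels the $\beta_2^{\,t-1-\tau}$ factor and leaves a \emph{second} geometric series $\sum_{t>\tau}(\beta_1/\sqrt{\beta_2})^{t-1-\tau} \le (1-\beta_1/\sqrt{\beta_2})^{-1}$. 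This is the origin of the second copy of $(1-\beta_1/\sqrt{\beta_2})^{-1}$, and combining it with the Cauchy--Schwarz copy produces exactly the constant $C \triangleq (1-\beta_1)^2/(1-\beta_1/\sqrt{\beta_2})^2$. The bound then collapses to $\sum_{t=0}^{T-1} \frac{(1-\beta_2)\vm_{t,i}^2}{\vv_{t,i}+\zeta} \le C \sum_{\tau=0}^{T-1} \frac{(1-\beta_2)\vg_{\tau,i}^2}{\vv_{\tau+1,i}+\zeta}$.

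The final step is the same logarithmic telescoping used in \eqref{eq:vebe}: dropping $\zeta$ from the denominator (which only enlarges each term) and writing $\frac{(1-\beta_2)\vg_{\tau,i}^2}{\vv_{\tau+1,i}} = 1 - \frac{\beta_2\vv_{\tau,i}}{\vv_{\tau+1,i}} \le -\ln\frac{\beta_2\vv_{\tau,i}}{\vv_{\tau+1,i}} = \ln\frac{\vv_{\tau+1,i}}{\vv_{\tau,i}} - \ln\beta_2$, via the elementary inequality $1-x \le -\ln x$. Summing over $\tau = 0,\dots,T-1$ telescopes the logarithms to $\ln(\vv_{T,i}/\vv_{0,i}) - T\ln\beta_2$, which is the stated bound (the coordinate index is suppressed in the lemma statement).

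The main obstacle is the double-sum bookkeeping in the middle step: the two geometric factors must be extracted in the correct order (Cauchy--Schwarz first, then the monotonicity substitution) and $\vg_{\tau,i}^2$ must be kept attached to $1/(\vv_{\tau+1,i}+\zeta)$ rather than cancelled prematurely, so that the constant sharpens to exactly $C$ and the sum reduces to the telescoping form. A naive per-term envelope $\frac{(1-\beta_2)\vm_{t,i}^2}{\vv_{t,i}} \le C$ would only deliver $CT$, which is far weaker than the target whenever $\beta_2$ is close to $1$ (since $-\ln\beta_2 \approx 1-\beta_2 \ll 1$); it is the logarithmic telescoping, not the envelope constant, that yields the tight $\ln(\vv_T/\vv_0)$-type rate. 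Careful tracking of the off-by-one in the exponents is the remaining routine chore.
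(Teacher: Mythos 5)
Your proof is correct and takes essentially the same route as the paper's: a Cauchy--Schwarz split with decay ratio $\beta_1/\sqrt{\beta_2}$, the monotonicity bound $\vv_{t,i} \geq \beta_2^{\,t-1-\tau}\vv_{\tau+1,i}$, an exchange of summation extracting the two geometric factors that square the constant to $(1-\beta_1)^2/(1-\beta_1/\sqrt{\beta_2})^2$, and the $1-x \leq -\ln x$ telescoping of \eqref{eq:vebe}. The only cosmetic differences are the order of operations---you apply Cauchy--Schwarz to the bare $\vm_{t,i}^2$ and invoke monotonicity on the denominator after swapping sums, whereas the paper pushes $\sqrt{\vv_{t,i}}$ back to $\sqrt{\vv_{\tau,i}}$ inside the momentum sum before squaring---and your indexing, which pairs $\vg_{\tau,i}^2$ with $\vv_{\tau+1,i} = \beta_2\vv_{\tau,i} + (1-\beta_2)\vg_{\tau,i}^2$ so the telescoping step is exactly justified, is in fact tidier than the paper's.
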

\begin{proof}
Let $\vg_{\tau, i } \triangleq \hat{\nabla}_i f(\vtheta_{\tau}, \xi_{\tau+1})$, similar to \eqref{eq:vebe}, we have,
    \begin{equation}\label{eq:g2_div_v}
        \sum_{t = 0}^{T - 1} \frac{\qty(1 - \beta_2) \vg_{t,i}^2}{\vv_{t} + \zeta} \leq \ln \qty(\frac{\vv_{T}}{\vv_{0}}) - T \ln \beta_2.
    \end{equation}
    Besides, 
    \begin{equation}
        \frac{\vm_{t,i}}{\sqrt{\vv_{t,i} + \zeta}} \leq \frac{\vm_{t,i}}{\sqrt{\vv_{t,i}}} \leq \qty(1 - \beta_1) \sum_{\tau = 0}^t \beta_1^{t - \tau} \frac{\vg_{\tau, i}}{\sqrt{\vv_{t,i}}} \leq \qty(1 - \beta_1) \sum_{\tau = 0}^t \qty(\frac{\beta_1}{\sqrt{\beta_2}})^{t - \tau} \frac{\vg_{\tau, i}}{\sqrt{\vv_{\tau, i}}}.
    \end{equation}
    Therefore, we have,
    \begin{equation}
    \begin{aligned}
        \sum_{t = 0}^{T - 1} \frac{\qty(1 - \beta_2) \vm_{t,i}^2}{\vv_{t,i} + \zeta} \leq& \qty(1 - \beta_2) \qty(1 - \beta_1)^2 \sum_{t = 0}^{T - 1} \qty(\sum_{\tau = 0}^t \qty(\frac{\beta_1}{\sqrt{\beta_2}})^{t - \tau} \frac{\vg_{\tau, i}}{\sqrt{\vv_{\tau, i}}})^2 \\
        \overset{\text{($a$)}}{\leq}& \qty(1 - \beta_2) \qty(1 - \beta_1)^2 \sum_{t = 0}^{T - 1} \qty(\sum_{\tau = 0}^{t} \qty(\frac{\beta_1}{\sqrt{\beta_2}})^{t - \tau}) \qty(\sum_{\tau = 0}^t \qty(\frac{\beta_1}{\sqrt{\beta_2}})^{t - \tau} \frac{\vg_{\tau, i}^2}{\vv_{\tau, i}}) \\
        \leq & \qty(1 - \beta_2) \frac{\qty(1 - \beta_1)^2}{1 - \beta_1 / \sqrt{\beta_2}} \sum_{t = 0}^{T - 1} \sum_{\tau = 0}^t \qty(\frac{\beta_1}{\sqrt{\beta_2}})^{t - \tau} \frac{\vg_{\tau, i}^2}{\vv_{\tau, i}} \\
        \overset{\text{($b$)}}{=}& \qty(1 - \beta_2) \frac{\qty(1 - \beta_1)^2}{1 - \beta_1 / \sqrt{\beta_2}} \sum_{\tau = 0}^{T - 1} \sum_{t = \tau}^{T - 1} \qty(\frac{\beta_1}{\sqrt{\beta_2}})^{t - \tau} \frac{\vg_{\tau, i}^2}{\vv_{\tau, i}} \\
        \leq& \frac{\qty(1 - \beta_1)^2}{\qty(1 - \beta_1 / \sqrt{\beta_2})^2} \sum_{t = 0}^{T - 1} \frac{\qty(1 - \beta_2) \vg_{t,i}^2}{\vv_{t,i}} \\
        \overset{\text{($c$)}}{\leq}& \frac{\qty(1 - \beta_1)^2}{\qty(1 - \beta_1 / \sqrt{\beta_2})^2} \qty(\ln \qty(\frac{\vv_{T}}{\vv_{0}}) - T \ln \beta_2)
    \end{aligned}
    \end{equation}
    where $(a)$ comes from Cauchy-Schwarz inequality and $(c)$ is due to \eqref{eq:g2_div_v}. In $(b)$ we exchange the order of summation over $t$ and $\tau$.
\end{proof}

Following the proof idea in Appx. \ref{proof:v-norm}, we have the following for \base{},
\begin{equation}
\begin{aligned}
&\frac{1}{T}\sum_{t=0}^{T-1} \E\left[\sqrt{\beta_2\left\|\vv_{t}\right\|+\zeta}\right] \\
\stackrel{(a)}{\leq}\,\,& \sqrt{\zeta} + \frac{1}{T}\sum_{t=0}^{T-1}\sum_{i=1}^d \E\left[\sqrt{\beta_2\vv_{t,i}}\right] \\
\stackrel{(b)}{\leq}\,\,&\sqrt{\zeta} + \frac{1}{T}\sum_{t=0}^{T-1}\sum_{i=1}^d \left(\sqrt{\beta_2}\hat{V} + \frac{1+\beta_1}{\sqrt{\beta_1}}\sum_{\tau=1}^{t}\sqrt{1-\beta_2}\beta_2^{(t-\tau)/2}\E\left[\left|\nabla_i F_{\mu}(\vtheta_{\tau-1})\right| \right]\right) \\
\stackrel{(c)}{\leq}\,\,& \sqrt{\zeta} + \hat{V}d + \frac{(1+\beta_1)\sqrt{d}}{\sqrt{\beta_1(1-\beta_2)}} \frac{1}{T} \sum_{t=0}^{T-1} \E\left[\big\|\nabla F_{\mu}(\vtheta_t)\right\|\big] \label{eq:jenvn}
\end{aligned}
\end{equation}
where $\hat{V}^2 \triangleq \left\|\vv_0\right\| + (1+\beta_1)\frac{8(\sigma^2 + C^2)d}{K\mu^2}$.

In addition, following the proof idea in Appx. \ref{proof:grad/v}, we have
\begin{equation}
\begin{aligned}
&\E\left[\left\langle\nabla F_{\mu}(\vx_t), \vx_{t+1} - \vx_{t}\right\rangle | \gF_t\right] \\[14pt]
=\,\,& \underbrace{\E\left[\left\langle\nabla F_{\mu}(\vtheta_t), \frac{- \eta(1-\beta_1)\hat{\nabla}f(\vtheta_t, \xi_{t+1})}{(1 - \beta_1/\sqrt{\beta_2})\sqrt{\beta_2\vv_{t} + \zeta}}\right\rangle \Bigg| \gF_t\right]}_{\circled{1}} + \underbrace{\E\left[\left\langle \nabla F_{\mu}(\vtheta_t), \frac{-\eta\vm_{t+1}/\sqrt{\vv_{t+1} + \zeta} + \eta\vm_{t+1} / \sqrt{\beta_2\vv_{t} + \zeta}}{1-\beta_1/\sqrt{\beta_2}}\right\rangle \Bigg| \gF_t \right]}_{\circled{2}} \\
&\qquad + \underbrace{\E\left[\left\langle \nabla F_{\mu}(\vtheta_t), \frac{\eta\beta_1\vm_{t}/\sqrt{\beta_2\vv_{t} + \beta_2\zeta} - \eta\beta_1\vm_t / \sqrt{\beta_2\vv_{t} + \zeta}}{1-\beta_1/\sqrt{\beta_2}}\right\rangle \Bigg| \gF_t \right]}_{\circled{3}} + \underbrace{\E\left[\left\langle\nabla F_{\mu}(\vx_t) - \nabla F_{\mu}(\vtheta_t), \vx_{t+1} - \vx_{t}\right\rangle \big| \gF_t \right]}_{\circled{4}}
\end{aligned}
\end{equation}
where each item can be upper bounded as below.
\begin{equation}
\begin{aligned}
\circled{1} &= \frac{- \eta(1 - \beta_1)}{1 - \beta_1/\sqrt{\beta_2}}\sum_{i=1}^d\E\left[\frac{\left|\nabla_i F_{\mu}(\vtheta_t)\right|^2}{\sqrt{\beta_2\vv_{t,i} + \zeta}}\right] \ , \\
\circled{2} &\leq \sum_{i=1}^d\frac{\eta(1-\beta_1)}{4(1-\beta_1/\sqrt{\beta_2})}\frac{\left|\nabla_i F_{\mu}(\vtheta_t)\right|^2}{\sqrt{\beta_2\vv_{t,i} + \zeta}} + \sum_{i=1}^d\frac{4\eta(1-\beta_1)C^2d^2}{(1-\beta_1/\sqrt{\beta_2})(1-\beta_1^2/\beta_2)\mu^2}\E\left[\frac{1}{\sqrt{\beta_2\vv_{t,i} + \zeta}} - \frac{1}{\sqrt{\vv_{t+1,i} + \zeta}} \Bigg| \gF_t \right] \ ,\\
\circled{3} &\leq \frac{\eta(1-\beta_1)}{4(1-\beta_1/\sqrt{\beta_2})}\sum_{i=1}^d\frac{\left|\nabla_i F_{\mu}(\vtheta_t)\right|^2}{\sqrt{\beta_2\vv_{t,i} + \zeta}} + \frac{\eta\beta_1^2(1-\beta_1)(1-\beta_2)d\sqrt{\zeta}}{(1-\beta_1/\sqrt{\beta_2})(1-\beta_1^2/\beta_2)} \ , \\
\circled{4} &\leq \frac{\beta_1 L\eta^2 \sqrt{d} /\sqrt{\beta_2}}{2(1 - \beta_1/\sqrt{\beta_2})^2}\sum_{i=1}^d\left(\left(1 + 2\beta_1/\sqrt{\beta_2}\right)\E\left[\frac{\vm_{t,i}^2}{\vv_{t,i}+\zeta} \big| \gF_t\right] + \E\left[\frac{\vm_{t+1,i}^2}{\vv_{t+1,i}+\zeta}\big| \gF_t\right]\right) \ .
\end{aligned}
\end{equation}

Here,  both $\circled{2}$ and $\circled{3}$ results from Lemma~\ref{lem:m_div_v}.

Finally, we have
\begin{equation}
\begin{aligned}
&\sum_{t=0}^{T-1}\left(\E\left[F_{\mu}(\vx_{t+1})\right] - \E\left[F_{\mu}(\vx_t)\right]\right) \\
\leq\,\,& - \frac{\eta(1-\beta_1)}{2(1-\beta_1/\sqrt{\beta_2})} \sum_{t=0}^{T-1}\sum_{i=1}^d \E\left[\frac{\left|\nabla_i F_{\mu}(\vtheta_t)\right|^2}{\sqrt{\beta_2\vv_{t} + \zeta}}\right] + \frac{4\sqrt{d}L\eta^2(1-\beta_1)^2}{(1-\beta_1/\sqrt{\beta_2})^4}\sum_{i=1}^d\left(\frac{\ln\left(\left(\beta_2^{T}\vv_{0,i} + 4C^2d^2/\mu^2\right)/\vv_{0,i}\right)}{1-\beta_2} + 2T\right) + \\
&\qquad  \frac{4\eta(1-\beta_1) C^2d^3}{(1-\beta_1/\sqrt{\beta_2})(1-\beta_1^2/\beta_2)\mu^2}\left(\frac{1}{\sqrt{\zeta}} + \frac{T(1-\beta_2)}{\sqrt{\zeta}}\right) + T\frac{\eta\beta_1^2(1-\beta_1)(1-\beta_2)d\sqrt{\zeta}}{(1-\beta_1/\sqrt{\beta_2})(1-\beta_1^2/\beta_2)}  \ .\label{eq:jdnvdg}
\end{aligned}
\end{equation}

Define $\Delta \triangleq F_{\mu}(\vx_1) - F^*_{\mu}$, by re-arranging the result above, we have
\begin{equation}
\begin{aligned}
\frac{1}{T}\sum_{t=0}^{T-1} \sum_{i=1}^d \E\left[\frac{\left|\nabla_i F_{\mu}(\vtheta_t)\right|^2}{\sqrt{\beta_2\vv_{t,i} + \zeta}}\right] &\leq \frac{2(1-\beta_1/\sqrt{\beta_2})\Delta}{\eta T(1-\beta_1)} + \frac{8L\eta\sqrt{d}(1-\beta_1)}{(1-\beta_1/\sqrt{\beta_2})^3}\sum_{i=1}^d\left(\frac{\ln\left(\left(\beta_2^{T}\vv_{0,i} + 4C^2d^2/\mu^2\right)/\vv_{0,i}\right)}{T(1-\beta_2)} + 2\right) + \\
&\qquad \qquad  \frac{8C^2 d^3}{(1-\beta_1^2/\beta_2)\mu^2}\left(\frac{1}{T\sqrt{\zeta}} + \frac{1-\beta_2}{\sqrt{\zeta}}\right) + \frac{2\beta_1^2(1-\beta_2)d\sqrt{\zeta}}{1-\beta_1^2/\beta_2} \ .
\end{aligned}
\end{equation}

By choosing $1-\beta_2 = \left(\frac{8C^2 d^3}{(1-\beta_1^2 / \beta_2)\mu^2\sqrt{\zeta}} + \frac{2\beta_1^2d\sqrt{\zeta}}{1-\beta_1^2/\beta_2} \right)^{-1}\eps^2 / 3 \sim \gO(\eps^2), \eta = \left(\frac{16Ld\sqrt{d}(1-\beta_1)}{(1-\beta_1/\sqrt{\beta_2})^3}\right)^{-1}\eps^2 / 3 \sim \gO(\eps^2)$ and $T = 3 \eps^{-2} \left(\frac{2(1-\beta_1/\sqrt{\beta_2})\Delta}{\eta (1-\beta_1)} + \frac{8C^2 d^3}{(1-\beta_1^2 / \beta_2)\mu^2\sqrt{\xi}} + \frac{8L\eta\sqrt{d}(1-\beta_1)}{(1-\beta_1/\sqrt{\beta_2})^3}\sum_{i=1}^d\left(\frac{\ln\left(\left(\beta_2^{T}\vv_{0,i} + 4C^2d^2/\mu^2\right)/\vv_{0,i}\right)}{(1-\beta_2)}\right)\right) \sim \gO(\eps^{-4})$, we can simply have the following,
\begin{equation} \frac{1}{T}\sum_{t=0}^{T-1}\E\left[\frac{\left\|\nabla
F_{\mu}(\vtheta_t)\right\|^2}{\sqrt{\beta_2\left\|\vv_{t}\right\|+\zeta}}\right] \leq \frac{1}{T}\sum_{t=0}^{T-1} \sum_{i=1}^d \E\left[\frac{\left|\nabla_i F_{\mu}(\vtheta_t)\right|^2}{\sqrt{\beta_2\vv_{t,i} + \zeta}}\right] \leq \eps^2 \ .\label{eq:jvdjg}
\end{equation}

By introducing \eqref{eq:jenvn} and \eqref{eq:jvdjg} into Lemma \ref{le:holder}, we have
\begin{equation}
\begin{aligned}
\left(\frac{1}{T}\sum_{t=0}^{T-1}\E\left[\left\|\nabla
F_{\mu}(\vtheta_t)\right\|\right]\right)^2 &\leq \left(\frac{1}{T}\sum_{t=0}^{T-1} \E\left[\sqrt{\beta_2\left\|\vv_{t}\right\|+\zeta}\right]\right)\left(\frac{1}{T}\sum_{t=0}^{T-1}\E\left[\frac{\left\|\nabla
F_{\mu}(\vtheta_t)\right\|^2}{\sqrt{\beta_2\left\|\vv_{t}\right\|+\zeta}}\right]\right) \\
&\leq \left(\sqrt{\zeta} + \hat{V}d + \frac{(1+\beta_1)\sqrt{d}}{\sqrt{\beta_1(1-\beta_2)}} \frac{1}{T} \sum_{t=0}^{T-1} \E\left[\big\|\nabla F_{\mu}(\vtheta_t)\right\|\big]\right) \eps^2 \ .
\end{aligned}
\end{equation}

By applying the formula for the root of square equation, we have the following
\begin{equation}
\frac{1}{T}\sum_{t=0}^{T-1}\E\left[\left\|\nabla
F_{\mu}(\vtheta_t)\right\|\right] \leq \frac{(1+\beta_1)\sqrt{d}}{\sqrt{\beta_1(1-\beta_2)}} \eps^2 + \left(\sqrt[4]{\zeta} + \sqrt{\hat{V}d}\right) \eps \ ,
\end{equation}
which concludes our proof for Thm. \ref{cor:zo-adamm}.

\section{Experiments}
\subsection{Experimental Setup of Synthetic Functions}\label{appx:syn}
Let input $\vtheta=[\theta_i]_{i=1}^d$, the Quadratic, Cubic, Levy, and Rosenbrock functions applied in our synthetic experiments are given below:
\begin{equation}
\begin{aligned}
    F(\vtheta)&= \frac{1}{2}\sum_{i=1}^d \theta_i^2 \ , &(\text{Quadratic}) \\
    F(\vtheta)&= \sum_{i=1}^d \left(|\theta_i|^3 + \frac{\theta_i^2}{2}\right) &(\text{Cubic}) \\
    F(\vtheta) &= \sin^2(\pi w_1) + \sum_{i=2}^{d-1} (w_i - 1)^2 \left(1 + 10 \sin^2(\pi w_i + 1)\right) + (w_d - 1)^2 \left(1 + \sin^2(2 \pi w_d)\right) \ , &(\text{Levy}) \\
    F(\vtheta)&=\sum_{i=1}^{d-1} \left[100(\theta_{i+1} - \theta_i^2)^2 + (1 - \theta_i)^2\right] \ , &(\text{Rosenbrock})
\end{aligned}
\end{equation}
where $w_i = 1 + \frac{\theta_i - 1}{4}$.
Note that all functions have the same minimum of zero, i.e., $\min F(\vtheta)=0$. For a fair comparison, we employ the same initialization and hyperparameters: $\beta_1=0.9, \beta_2=0.99$ and $K=10$, $\eta=0.001$, $\mu=0.005$, for all methods.

\subsection{Experimental Setup of Black-Box Adversarial Attack}\label{appx:attack}
For the black-box adversarial attack experiment on the MNIST dataset, we use the same fully trained deep neural networks from \citep{zord} and adopt a $L_{\infty}$ constraint of 0.2 on the input perturbation. For a fair comparison, we employ the same hyperparameters: $\beta_1=0.9, \beta_2=0.99$ and $K=2$, $\eta=0.01$, $\mu=0.005$, for all methods.

\subsection{Experimental Setup of Memory-Efficient LLM Fine-Tuning}\label{appx:tuning}
For the memory-efficient LLM fine-tuning on OPT-1.3B and OPT-13B on SST-2 dataset \citep{sst2}, we adopt the same configurations in \citep{mezo} and choose to fine-tune LLMs with LoRA adapters. For a fair comparison, we employ the same hyperparameters: $\beta_1=0.9, \beta_2=0.99$ and $K=1$, $\eta=0.00005$, $\mu=0.001$, for all methods.

\end{appendices}

\end{document}